\documentclass{article}
\usepackage[preprint]{neurips_2026}

\usepackage[utf8]{inputenc} 
\usepackage[T1]{fontenc}    
\usepackage{hyperref}       
\usepackage{url}            
\usepackage{booktabs}       
\usepackage{amsfonts}       
\usepackage{nicefrac}       
\usepackage{microtype}      
\usepackage{xcolor}         

\usepackage{xspace}
\usepackage{graphicx}
\usepackage{tcolorbox}
\usepackage{algorithm}
\usepackage{algpseudocode}
\tcbuselibrary{breakable}
\usepackage{colortbl}
\usepackage{xcolor}
\usepackage{wrapfig}

\usepackage{tikz}
\usepackage{subcaption}

\usepackage{amsmath}
\usepackage{amsthm}
\newtheorem{proposition}{Proposition}
\newtheorem{lemma}{Lemma}
\newtheorem{corollary}{Corollary}
\usepackage{multirow}
\usepackage{multicol}
\usepackage[inline]{enumitem} 
\usepackage{booktabs}
\usepackage{verbatim}

\usepackage[capitalise,noabbrev]{cleveref}
\crefname{section}{\S}{\S\S}
\Crefname{section}{\S}{\S\S}
\crefname{figure}{Fig.}{Figs.}
\Crefname{figure}{Fig.}{Figs.}
\crefname{table}{Tab.}{Tabs.}
\Crefname{table}{Tab.}{Tabs.}
\crefname{appendix}{Appx.}{Appxs.}
\Crefname{appendix}{Appx.}{Appxs.}

\usepackage{appendix}

\usepackage{threeparttable}   
\usepackage{makecell}         
\usepackage{pifont}           

\newlist{inparaenum}{enumerate*}{1}
\setlist[inparaenum,1]{label=\arabic*), itemjoin={{ }}, itemjoin*={{ }}}

\def\eg{{\em e.g.,}\xspace}
\def\ie{{\em i.e.,}\xspace}
\def\versus{{\em v.s.}\xspace}

\definecolor{lightgray}{HTML}{F0F0EB}
\definecolor{lightorange}{HTML}{FFD2A4}
\definecolor{lightgreen}{HTML}{A4FFAE}
\definecolor{stronggreen}{HTML}{3EFF54}
\definecolor{lightred}{HTML}{FFA4A4}
\definecolor{strongred}{HTML}{FF7171}

\definecolor{bhred}{HTML}{BE1E2D}
\definecolor{bhblue}{HTML}{0563BA}
\definecolor{bhorange}{HTML}{E8972E}

\definecolor{lgra}{HTML}{F0F0EB}
\definecolor{lora}{HTML}{FFD2A4}
\definecolor{lgre}{HTML}{A4FFAE}
\definecolor{sgre}{HTML}{3EFF54}
\definecolor{lred}{HTML}{FFA4A4}
\definecolor{sred}{HTML}{FF7171}
\definecolor{lblu}{HTML}{A4C7FF}

\definecolor{background-prompt}{HTML}{EFEFEA}
\definecolor{background-disclaimer}{HTML}{EBDBBC}
\definecolor{border}{HTML}{262625}
\definecolor{border-light}{HTML}{D9D9CD}
\definecolor{background-takeaway}{HTML}{EBDBBC}

\usepackage{newunicodechar}
\newunicodechar{♫}{\ensuremath{\flat}}

\def\methodnamecap{Logit-contribution scoring\xspace}
\def\methodnametitle{Logit-Contribution Scoring\xspace}
\def\methodnameabbrev{\textsc{LOCOS}\xspace}

\newcommand{\repo}[1]{\url{https://github.com/aryopg/locos}}
\newcommand{\datarepo}[1]{\url{}}
\newcommand{\examplepage}[1]{\url{}}

\newcommand{\prompt}[2]{
\begin{tcolorbox}[colback=background-prompt, colframe=white, 
  left=2pt,
  right=2pt,
  top=2pt,
  bottom=2pt]
{\textbf{\emph{#1:}}\\{#2}}
\end{tcolorbox}
}

\newcounter{takeaway}
\newcommand{\newtakeaway}[1]{\refstepcounter{takeaway}
\begin{tcolorbox}[colback=background-takeaway, colframe=background-takeaway, 
  left=2pt,
  right=2pt,
  top=2pt,
  bottom=2pt]
{\textbf{\emph{Takeaway \thetakeaway:} }{#1}}
\end{tcolorbox}
}

\providecommand{\answerYes}[1][]{\textbf{Yes}}
\providecommand{\answerNo}[1][]{\textbf{No}}
\providecommand{\answerNA}[1][]{\textbf{N/A}}

\usepackage{listings}
\usepackage{xcolor}

\lstdefinelanguage{Jinja}{
    morestring=[b]",
    morestring=[b]',
    morecomment=[l]{\#},
    moredelim=[s][\color{red!70!black}]{\{\{}{\}\}},
    moredelim=[s][\color{blue!70!black}]{\{\%}{\%\}},
    moredelim=[s][\color{green!70!black}]{\{\#}{\#\}},
    keywords={endif,endfor,elif},
    keywordstyle=\color{purple}\bfseries,
    sensitive=true
}

\lstnewenvironment{jinjacode}[1][]{
    \lstset{
        language=Jinja,
        basicstyle=\small\ttfamily,
        backgroundcolor=\color{gray!5},
        frame=single,
        frameround=tttt,
        breaklines=true,
        breakatwhitespace=true,
        showstringspaces=false,
        tabsize=2,
        captionpos=b,
        #1
    }
}{}

\algdef{SE}[FUNCTION]{Function}{EndFunction}%
[2]{\algorithmicfunction\ \textproc{#1}\ifthenelse{\equal{#2}{}}{}{(#2)}}%
{\algorithmicend\ \algorithmicfunction}

\newcommand{\revision}[1]{#1}

\title{\methodnametitle Identifies\\Non-Literal Retrieval Heads}

\author{
    Aryo Pradipta Gema$^{\textcolor{bhred}{Q}}$\qquad
    Beatrice Alex$^{\textcolor{bhorange}{K}}$\qquad
    Pasquale Minervini$^{\textcolor{bhred}{Q},\textcolor{bhblue}{V}}$\\
    $^{\textcolor{bhred}{Q}}$University of Edinburgh\qquad
    $^{\textcolor{bhorange}{K}}$Heriot-Watt University\qquad
    $^{\textcolor{bhblue}{V}}$Miniml.AI\\
    \texttt{\{aryo.gema, p.minervini\}@ed.ac.uk}\quad\quad 
    \texttt{b.alex@hw.ac.uk}\\
}

\begin{document}

\maketitle

\begin{abstract}
\looseness-1
In long-context use, large language models frequently synthesize answers from the meaning of a relevant context span rather than literally copy-pasting them.
Identifying which attention heads perform this synthesis matters for interpreting long-context model behavior.
Yet existing detectors miss these heads by construction: they reward heads whose attended token matches the generated token, a literal-copy criterion that captures \emph{where} a head reads but not \emph{what} it writes through its output-value (OV) circuit, the very mechanism that carries non-literal retrieval.
We introduce \methodnametitle (\methodnameabbrev), a write-aware detector that scores each head by the projection of its OV-circuit output onto the answer-token unembedding direction, contrasting needle and off-needle source positions in a single forward pass.
Across three model families (Qwen3, Gemma-3, OLMo-3.1), mean-ablating the top \methodnameabbrev heads on the NoLiMa non-literal retrieval benchmark collapses ROUGE-L at lower head counts than prior attention-based detections; on Qwen3-8B, ablating 50 heads drives ROUGE-L from $0.401$ to $0.000$ while the strongest baseline still retains $0.292$.
The selected heads are retrieval-specific: parametric recall and arithmetic reasoning stay at baseline under the same ablation.
On Qwen3-8B, the same ablation also drops MuSiQue from $0.55$ to $0.08$ and BABILong from $0.62$ to $0.20$, while a random-heads control stays within $0.05$ of baseline.
\begin{center}
    \href{https://github.com/aryopg/locos}{
        \raisebox{-0.2\height}{\includegraphics[height=1em]{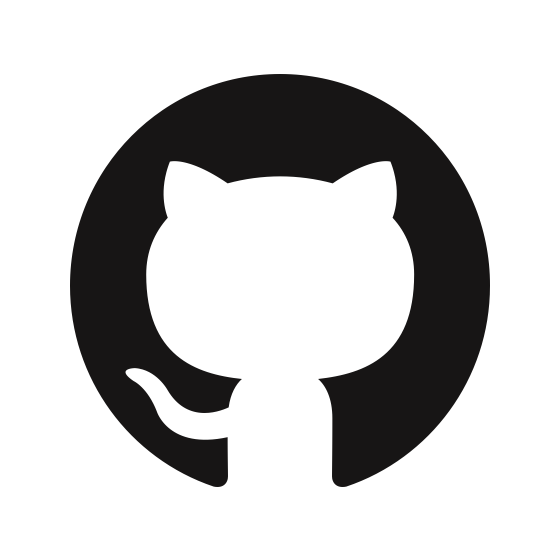}}
        \hspace{0.2em}\texttt{locos}
    }
    \hspace{1em}
    \href{https://huggingface.co/datasets/aryopg/locos-results}{
        \raisebox{-0.2\height}{\includegraphics[height=1em]{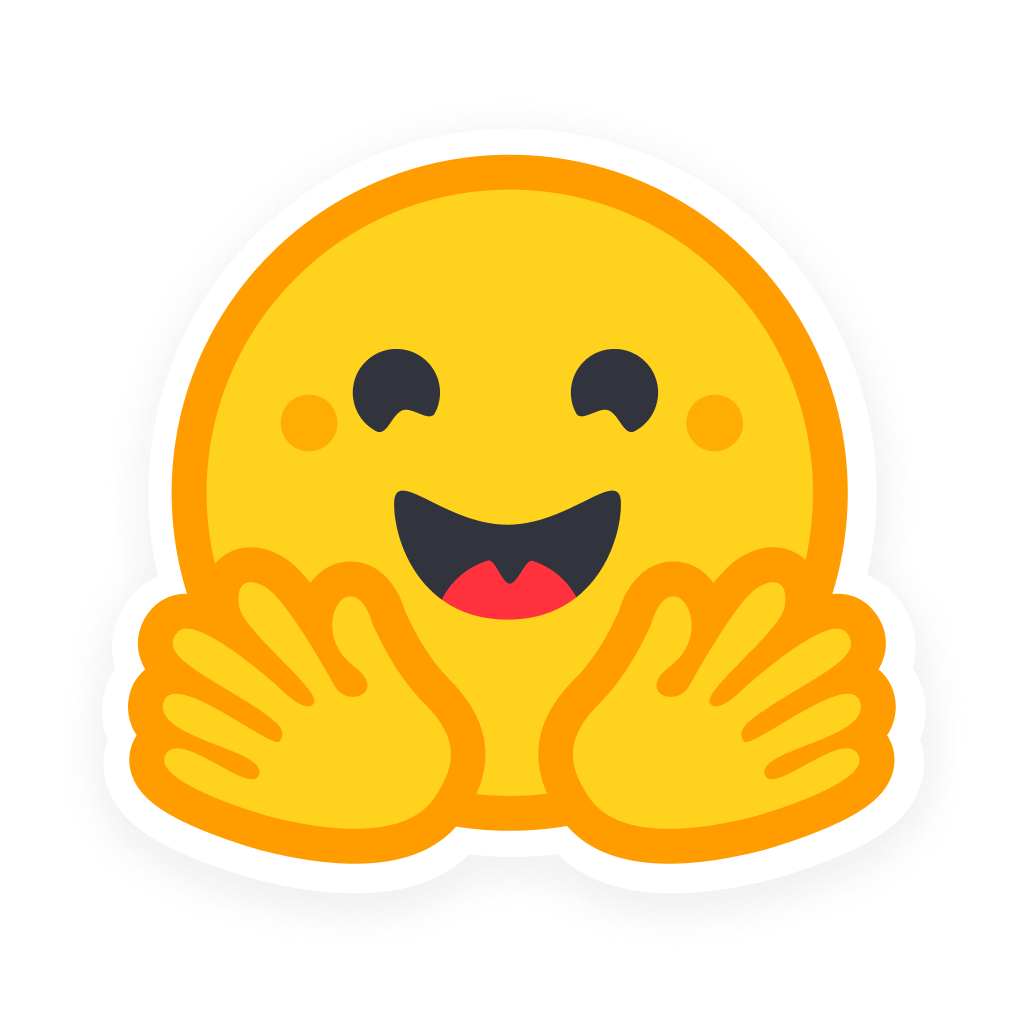}}
    \hspace{0.2em}\texttt{locos-results}
    }
\end{center}
\end{abstract}

\section{Introduction}
\label{sec:introduction}

\looseness-1
The ability of large language models (LLMs) to retrieve information from their input context, rather than relying on memorized parametric knowledge, depends on a sparse set of attention heads known as \emph{retrieval heads}~\citep{wu2025retrieval}, which build on earlier mechanistic work on induction heads~\citep{elhage2021mathematical,olsson2022inductionheads}.
However, in practice, context retrieval is rarely literal copy-paste: a user's question may share no lexical overlap with the relevant passage, and the model must identify the relevant snippet, parse its meaning, and synthesize an answer from it (as illustrated in \cref{fig:literal_vs_nonliteral}).
Yet all existing identification methods, whether via token-matching heuristics~\citep{wu2025retrieval} or weighted attention accumulation~\citep{fu2025headkv,lin2025compresskv}, evaluate heads on literal copying tasks and share a common observable: each head's attention pattern over source positions.
This observable captures \emph{where} a head allocates attention, but does not capture \emph{what information} the head propagates through its output-value (OV) circuit, the very mechanism by which non-literal retrieval is accomplished.
%
%
Two heads with identical attention patterns but different OV circuits can propagate entirely different information from the same positions.
For literal retrieval, where the attended token is the answer token, attention and OV output are trivially aligned, and attention-based scoring works.
In non-literal retrieval, a head may attend to ``Eiffel Tower'' while writing the ``Yuki'' direction to the residual stream.
Attention-based scoring sees the read site (``Eiffel Tower''); the OV circuit determines the write content (``Yuki''), and the two need not agree.
This distinction matters in practice, and no prior retrieval-head detector makes this distinction.

\begin{wrapfigure}{r}{0.45\textwidth}
  \centering
  \vspace{-0.05em}
  \includegraphics[width=0.45\textwidth]{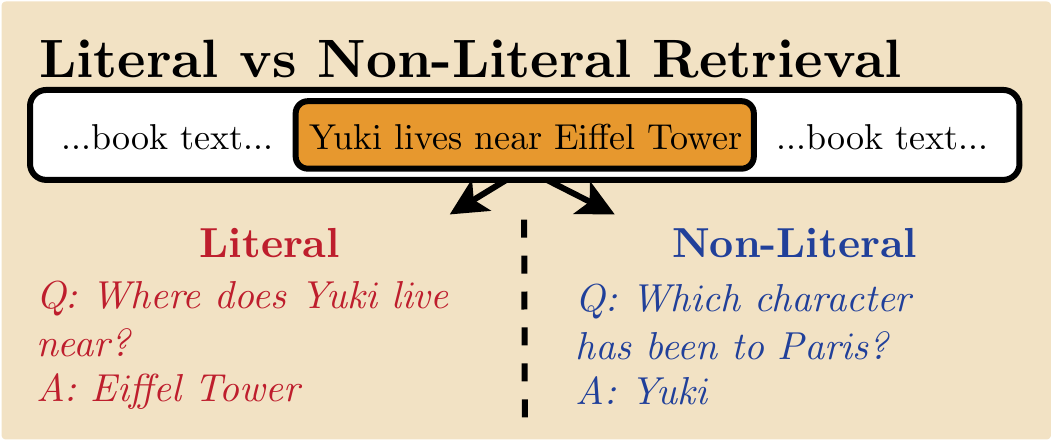}
  \caption{
  \looseness-1
  \textbf{Non-literal retrieval requires synthesis.}
  The same context answers two questions differently: a literal question requires reading ``Eiffel Tower'' directly from the needle, while a non-literal question must produce ``Yuki'' after synthesizing the context.
  }
  \label{fig:literal_vs_nonliteral}
  \vspace{-1.5em}
\end{wrapfigure}
Our method, \methodnametitle (\methodnameabbrev), measures how each attention head contributes to the correct answer token in the unembedding space (See \cref{fig:overview}).
%
For each head at each source position, the method computes the scalar projection of the head's weighted OV circuit output onto the correct answer's unembedding vector.
Aggregation uses \emph{spatial contrast}: logit contributions from needle positions are compared against length-normalized off-needle contributions within a single decoding step.
%
%
The method requires only a single forward pass per probing trial.

\looseness-1
Ablation experiments on six configurations spanning three model families (\ie Qwen3 (8B, 14B, 32B)~\citep{qwen3technicalreport}, Gemma-3 (12B, 27B)~\citep{gemma_2025}, and OLMo-3.1 (32B)~\citep{olmo2025olmo3}) on the NoLiMa non-literal retrieval benchmark heldout set~\citep{modarressi2025nolima} validate the method: mean-ablating the top-ranked \methodnameabbrev heads produces a steeper ROUGE-L degradation curve than all evaluated baselines (\cref{sec:ablation-comparison}).
On Qwen3-8B, ablating the top-50 \methodnameabbrev heads collapses ROUGE-L from $0.401$ to $0.000$, while the strongest attention-based baseline retains $0.292$ at the same depth.
Control experiments confirm retrieval specificity: parametric recall and arithmetic reasoning remain intact under the same ablation (\cref{sec:specificity}).
The same ablation also degrades downstream long-context performance, most strongly on the Qwen3 family (\cref{sec:downstream}): on Qwen3-8B, ablating the top-$50$ \methodnameabbrev heads drops MuSiQue accuracy from $0.55$ to $0.08$ and BABILong from $0.62$ to $0.20$\revision{; transfer is most consistent on the Qwen3 family, and the ranking against attention-based baselines is benchmark-dependent on Gemma-3 and OLMo-3.1 (\cref{sec:downstream})}.

\section{Background}
\label{sec:background}

\textbf{Notation.}
Consider a transformer~\citep{vaswani2017attention} with $L$ layers and $H$ attention heads per layer, head dimension $d_h$, and model dimension $d = H \cdot d_h$.
Head $(l,h)$ attends to source positions with weights $\alpha^{(l,h)}_{t,j}$, reads value vectors $\mathbf{v}^{(l,h)}_{t,j} \in \mathbb{R}^{d_h}$, and writes to the residual stream via its output projection $W_O^{(l,h)} \in \mathbb{R}^{d \times d_h}$.
The per-position output of head $(l,h)$ from source position $j$ is:
\begin{equation}\label{eq:per-position-output}
    \mathbf{o}^{(l,h)}_{t,j} \;=\; \alpha^{(l,h)}_{t,j} \cdot W_O^{(l,h)}\, \mathbf{v}^{(l,h)}_{t,j} \;\in\; \mathbb{R}^d\,.
\end{equation}
The unembedding matrix $W_U \in \mathbb{R}^{|\mathcal{V}| \times d}$ maps the residual stream to logits; $\mathbf{u}_y \in \mathbb{R}^d$ is its $y$-th row.

\textbf{Attention heads as read-and-write circuits.}
An attention head decomposes into a \emph{QK circuit} that determines $\alpha^{(l,h)}_{t,j}$ (where the head reads) and an \emph{OV circuit} that maps each source value through $W_O$ to the residual stream (what the head writes)~\citep{elhage2021mathematical}.
The full output of head $(l,h)$ at step $t$ is $\sum_j \mathbf{o}^{(l,h)}_{t,j}$, and the next-token logits come from mapping the final-layer residual stream through $W_U$.
A head is useful for retrieval only when both stages agree: the QK circuit must select the right source positions \emph{and} the OV circuit must write an answer-aligned update.

\looseness-1
\textbf{Induction heads and retrieval heads.}
Induction heads implement the literal copy pattern $[A][B]\ldots[A]\mapsto[B]$~\citep{olsson2022inductionheads}: they place high $\alpha^{(l,h)}_{t,j}$ on a previously matching position and write an output whose projection onto $\mathbf{u}_{y_t}$ is large because the attended token \emph{is} the next token.
Retrieval heads, identified on literal needle-in-a-haystack (NIAH) prompts by \citet{wu2025retrieval}, generalize this picture to long-context factuality. Their detection procedure rewards heads whose argmax attention falls inside the needle and the attended token matches the generated token, a literal-copy criterion.

\looseness-1
\textbf{Literal versus non-literal retrieval.}
In a NIAH setup~\citep{kamradt2023niah}, a \emph{needle} is a short answer-bearing span inserted into a longer distractor context (the \emph{haystack}); we index it as $[s_\tau, e_\tau)$ for trial $\tau$ and refer to all other source positions as \emph{off-needle}.
In literal NIAH, the answer token appears in the needle, so large needle attention $\alpha^{(l,h)}_{t,j}$ usually coincides with a large logit-relevant write $\mathbf{u}_{y_t}^\top \mathbf{o}^{(l,h)}_{t,j}$. NoLiMa breaks this equivalence~\citep{modarressi2025nolima}: the answer must be recovered from the meaning of the needle and may share no lexical overlap with it. A head can then retrieve non-literal information by attending to a semantically relevant phrase inside $[s_\tau, e_\tau)$ and writing an answer-aligned direction even when no attended token matches $y_t$. This is the regime targeted by \methodnameabbrev.

\section{\methodnametitle}
\label{sec:method}

\begin{figure}[t]
  \centering
  \includegraphics[width=\linewidth]{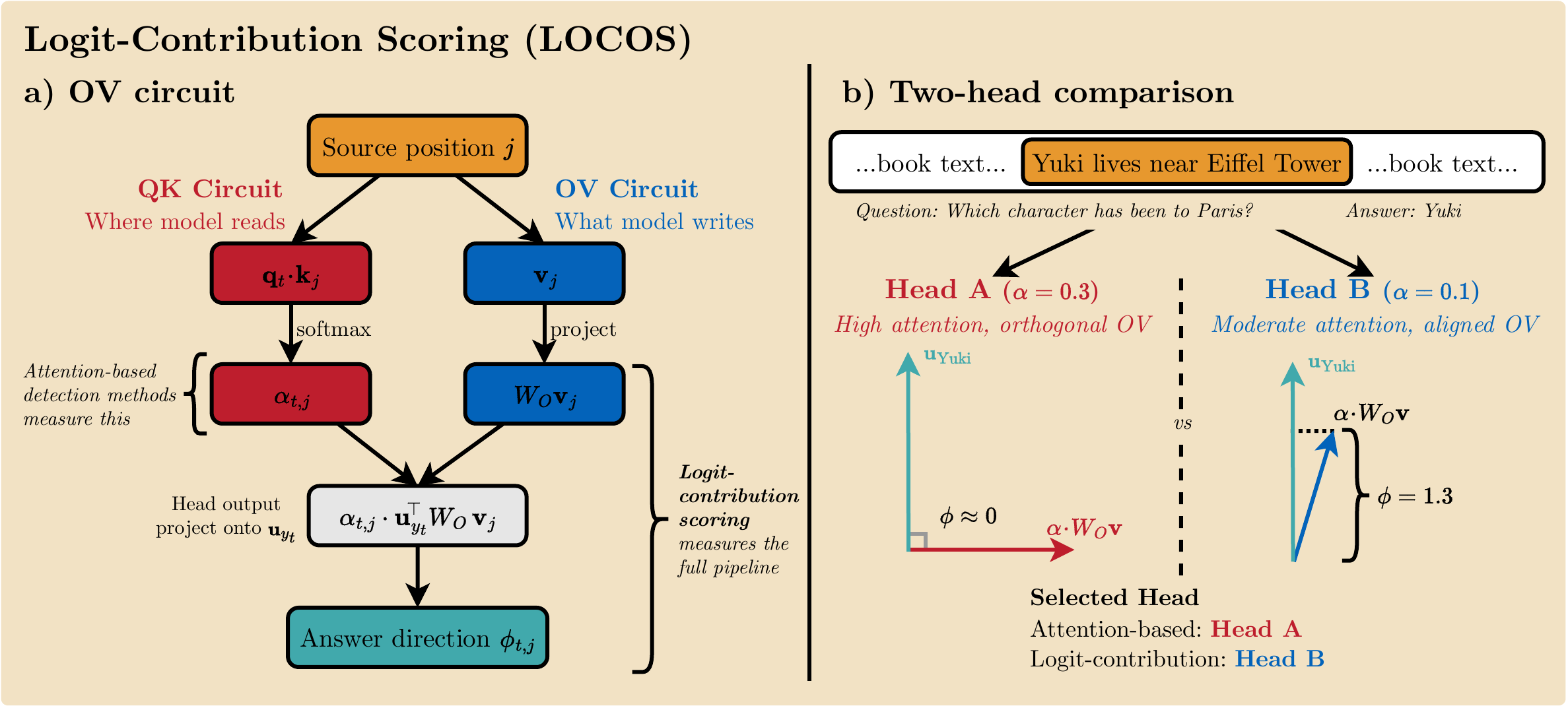}
  \caption{\textbf{An attention head has two circuits: where it reads (QK) and what it writes (OV). \methodnametitle uses the OV circuit to identify \emph{non-literal retrieval heads}.}
  \textbf{(a)}~Anatomy of a head's per-position output: the QK circuit produces attention weight $\alpha_{t,j}$; the OV circuit produces $W_O \mathbf{v}_j$. Attention-based methods measure only $\alpha$. \methodnamecap (\methodnameabbrev) measures $\phi = \mathbf{u}_{y_t}^\top (\alpha \cdot W_O \mathbf{v}_j)$, capturing the full pipeline.
  \textbf{(b)}~Consequence for non-literal retrieval: two heads read from ``Eiffel Tower'' to answer ``Yuki.'' Head~A attends strongly ($\alpha{=}0.30$), but its OV output is orthogonal to the answer direction ($\phi {\approx} 0$). Head~B attends moderately ($\alpha{=}0.08$) but writes toward the answer ($\phi{=}1.3$). Attention-based methods select Head~A; \methodnameabbrev selects Head~B.}
  \label{fig:overview}
\end{figure}

We introduce \methodnameabbrev, which scores each head by what it \emph{writes} toward the answer rather than where it allocates attention, because non-literal retrieval transforms attended content through the OV circuit before it becomes an answer.
We define a three-step procedure:

\looseness-1
\textbf{Per-Position Logit Contribution.}
Consider a probing trial $\tau$ with needle span $[s_\tau, e_\tau)$ embedded in a context of $N_\tau$ tokens.
Let $\mathcal{A}^\tau$ denote the set of decoding steps at which the model generates a correct answer token (identified by matching against the tokenized gold answer), $y_t \in \mathcal{V}$ the correct token at step $t$, and $N_t$ the total number of key positions available at step $t$.
The contribution of source position $j$ through head $(l,h)$ to the logit of $y_t$ is:
\begin{equation}\label{eq:logit-contribution}
    \phi^{(l,h)}_{t,j} \;=\; \mathbf{u}_{y_t}^\top\, \mathbf{o}^{(l,h)}_{t,j} \;=\; \alpha^{(l,h)}_{t,j} \cdot \mathbf{u}_{y_t}^\top W_O^{(l,h)}\, \mathbf{v}^{(l,h)}_{t,j} \;\in\; \mathbb{R}\,.
\end{equation}
The scalar $\phi^{(l,h)}_{t,j}$ depends on both \emph{where} the head reads (via $\alpha^{(l,h)}_{t,j}$) and \emph{what} it extracts (via $W_O^{(l,h)} \mathbf{v}^{(l,h)}_{t,j}$).
A head that attends strongly to a position whose OV output is orthogonal to $\mathbf{u}_{y_t}$ receives $\phi \approx 0$ despite high attention; conversely, a head performing non-literal retrieval (\eg attending to ``Paris'' to produce ``France'') receives large $\phi$ because its OV circuit transforms the attended representation into an answer-aligned output.
%

%
\looseness-1
\textbf{Spatial Contrast.}
For each head $(l,h)$ at answer step $t$ in trial $\tau$, we define the logit contribution from needle and off-needle positions:
\begin{equation}\label{eq:needle-logit}
    \Phi^{(l,h),+}_{t} = \sum_{j=s_\tau}^{e_\tau-1} \phi^{(l,h)}_{t,j}\,,
    \qquad
    \Phi^{(l,h),-}_{t} = \tfrac{e_\tau - s_\tau}{N_t - (e_\tau - s_\tau)} \sum_{j \notin [s_\tau,\, e_\tau)} \phi^{(l,h)}_{t,j}\,,
\end{equation}
where the rescaling factor $(e_\tau{-}s_\tau)/(N_t{-}(e_\tau{-}s_\tau))$ makes $\Phi^{(l,h),+}_{t}$ and $\Phi^{(l,h),-}_{t}$ comparable: both represent the logit contribution of a region of length $(e_\tau{-}s_\tau)$.
The contrast $\Phi^{(l,h),+}_{t} - \Phi^{(l,h),-}_{t}$ is \emph{spatial}: it compares needle and off-needle positions within a single decoding step, rather than the \emph{temporal} contrast (answer vs.\ non-answer steps) used by attention-based methods.
Spatial contrast yields a score from a single answer step, identifies heads that attend to the needle persistently but write answer-relevant content only from needle positions, and cancels uniform contributors such as token-frequency priors whose $\phi^{(l,h)}_{t,j}$ is large but position-independent.

\looseness-1
\textbf{Aggregation.}
We pool over all answer steps across all trials passing a correctness filter (ROUGE-1 recall $> \rho$, default $\rho = 0.5$), without per-trial normalization.
Let $\mathcal{D}_{\mathrm{pass}} \subseteq \{1, \ldots, T\}$ denote the set of passing trials.
The final score is:
\begin{equation}\label{eq:global-aggregate}
    S_{l,h} = \frac{1}{\sum_{\tau \in \mathcal{D}_{\mathrm{pass}}} |\mathcal{A}^\tau|}
    \sum_{\tau \in \mathcal{D}_{\mathrm{pass}}} \sum_{t \in \mathcal{A}^\tau}
    \Big(\Phi^{(l,h),+}_{t} - \Phi^{(l,h),-}_{t}\Big)\,.
\end{equation}
The score $S_{l,h}$ is the mean over all (trial, answer-step) pairs, with each answer step weighted equally.
%
%
Unlike attention-based methods, we do not clamp $S_{l,h}$ at zero: negative values indicate heads whose logit contribution toward the correct answer originates predominantly from off-needle positions, which points to factors other than needle-specific retrieval (\eg parametric or contextual associations).
A worked example with Per-trial consistency diagnostics is reported in \cref{sec:worked-examples}.
\methodnameabbrev recovers attention-based scoring as a special case when the OV circuit contributes no position-dependent signal.\footnote{\Cref{app:relationship-attention} works through the reduction.}

\section{Experiments}
\label{sec:experiments}

\subsection{Experimental Setup}
\label{sec:setup}

\textbf{Probing benchmark.}
NoLiMa~\citep{modarressi2025nolima} is a \textit{needle-in-a-haystack} benchmark in which each trial embeds a factual statement in a long context and poses a question whose answer requires understanding the needle's meaning rather than copying a literal token from it.
Its non-literal retrieval property expose weaknesses of attention-based scoring.
We use the \texttt{onehop} reasoning, 10 context lengths $\times$ 10 insertion depths, and 3 characters per entry (context range 1{,}000--5{,}000 tokens).
We evaluate \methodnameabbrev on six models from three families: Qwen3 (8B, 14B, 32B)~\citep{qwen3technicalreport}, Gemma-3 (12B, 27B)~\citep{gemma_2025}, and OLMo-3.1 (32B)~\citep{olmo2025olmo3}.
Trials with ROUGE-1 recall $> 0.5$ against the gold answer are retained, following \citet{wu2025retrieval}.

\looseness-1
\textbf{Causal validation via mean-ablation.}
%
For each selected head $(l,h)$ and at every decode step $t$, we replace the post-Q-projection, pre-RoPE query vector with a head-specific calibration vector $\bar{\mathbf{q}}^{(l,h)}$ (computed once over a 50-trial sample; see \cref{sec:additional-details}); we then evaluate ROUGE-L on a disjoint held-out set of 800 NoLiMa trials.
%
%
Mean replacement keeps downstream-layer activations in-distribution~\citep{nanda2023progress,wang2023interpretability}.
%
Architecture-specific implementation details are given in \cref{sec:architecture-adaptations}.

\textbf{Baselines.}
We compare \methodnameabbrev against:
\begin{enumerate*}[label=(\roman*)]
  \item \textbf{Random}: uniformly sampled heads, used to check that non-trivial ablation effects require targeted selection.
  \item \textbf{Wu/NIAH-scored}: the token-matching retrieval score of \citet{wu2025retrieval} computed on NIAH dataset; and
  \item \textbf{Wu/NoLiMa-scored}: the same token-matching criterion computed on NoLiMa probing trials;
\end{enumerate*}
The suffix on \emph{Wu/X-scored} denotes the dataset used to detect the retrieval heads.

\subsection{Ablation Comparison Across Scoring Methods}
\label{sec:ablation-comparison}

\begin{figure}[h]
  \centering
  \includegraphics[width=\linewidth]{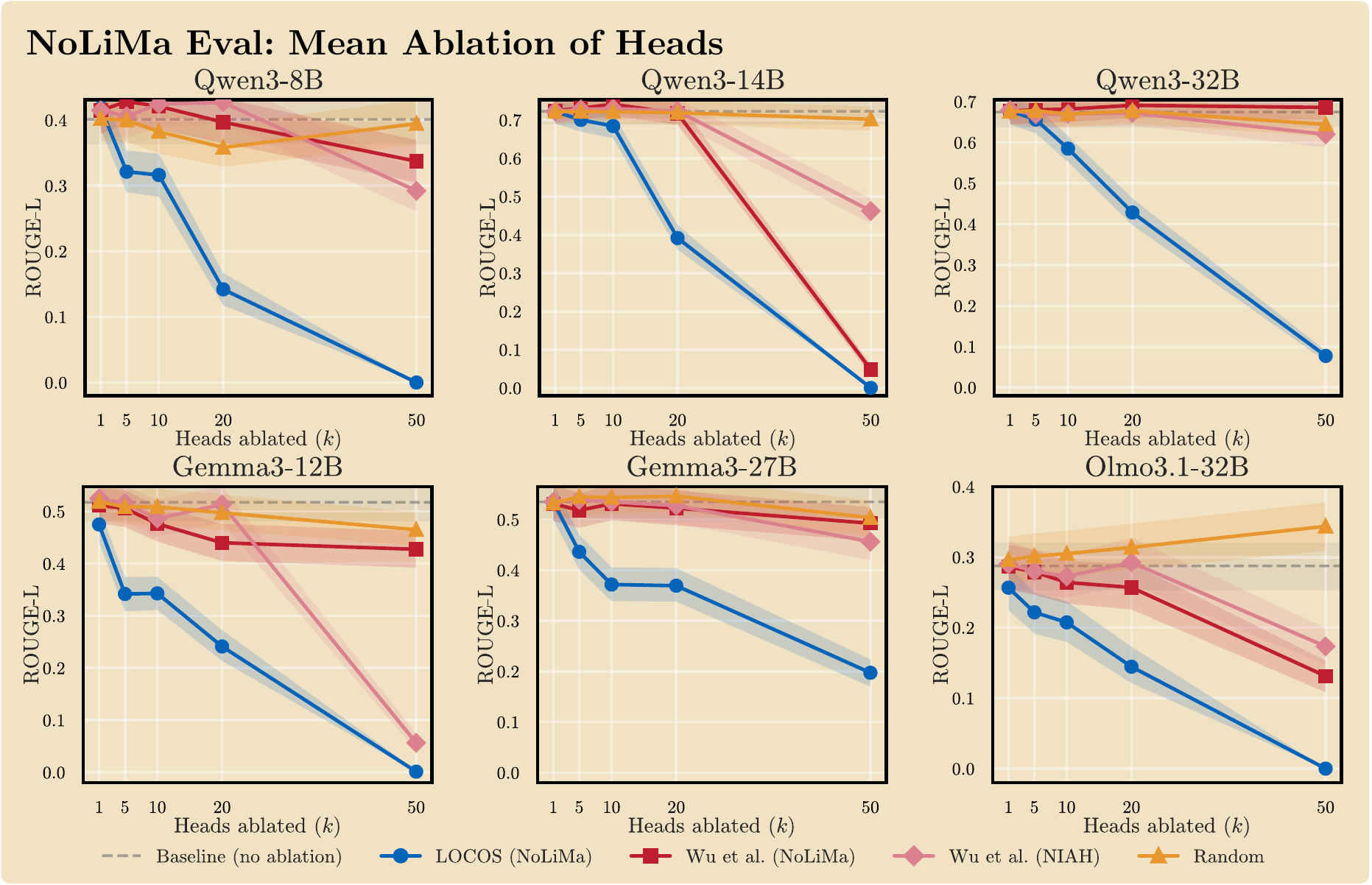}
  \caption{\textbf{\methodnameabbrev heads produce steeper ROUGE-L degradation under mean-ablation across all six models.}
  Each panel shows NoLiMa ROUGE-L (800 trials) as a function of the number of ablated heads $k$ for four scoring methods across three model families at two scales each: Qwen3 (8B, 14B, 32B), OLMo-3.1 (32B), and Gemma-3 (12B, 27B).
  \methodnameabbrev (blue) produces the steepest degradation curve in every model, reaching near-zero ROUGE-L by $k{=}50$ in five of six configurations and severe degradation (${\approx}\,0.1$) in Qwen3-32B.}
  \label{fig:ablation-comparison}
\end{figure}


\Cref{fig:ablation-comparison} compares the four methods under mean-ablation of the top-$k$ heads.
%
%
On Qwen3-8B, ablating the top-$5$ \methodnameabbrev heads already reduces ROUGE-L to $0.321$ (baseline $= 0.401$), while ablating Wu/NIAH heads remains at $0.406$.
By $k{=}50$, \methodnameabbrev reaches $0.000$, whereas Wu/NIAH-scored still achieves $0.292$ and Wu/NoLiMa-scored $0.337$.
Random-head ablation remains near baseline throughout ($0.358$--$0.402$); the degradation is therefore specific to head selection.
Wu/NoLiMa-scored-ranked heads ablation raises ROUGE-L above baseline at intermediate~$k$ ($0.428$ at $k{=}5$), which indicates that its token-matching criterion selects causally irrelevant heads.
\newtakeaway{\methodnameabbrev heads produce the steepest ROUGE-L ablation curve on NoLiMa, reaching near-zero output at lower $k$ than all evaluated baselines.}

\subsection{\revision{Isolating the OV Contribution: An Attention-Only Spatial-Contrast Control}}
\label{sec:ov-isolation}

\revision{
\looseness-1
\methodnameabbrev differs from \citet{wu2025retrieval} in both the per-position observable (OV projection $\phi$ \versus attention-based token matching) and aggregation (spatial \versus temporal contrast).
To isolate the OV contribution, we substitute $\alpha$ for $\phi$ in \Cref{eq:needle-logit,eq:global-aggregate}, yielding an \emph{attention-only spatial-contrast} score $S^{\text{att}}_{l,h}$ that matches \methodnameabbrev's aggregation but removes the OV projection (\cref{prop:reduction,app:rel:reduction}).
}

\begin{figure}[t]
\centering
\includegraphics[width=\linewidth]{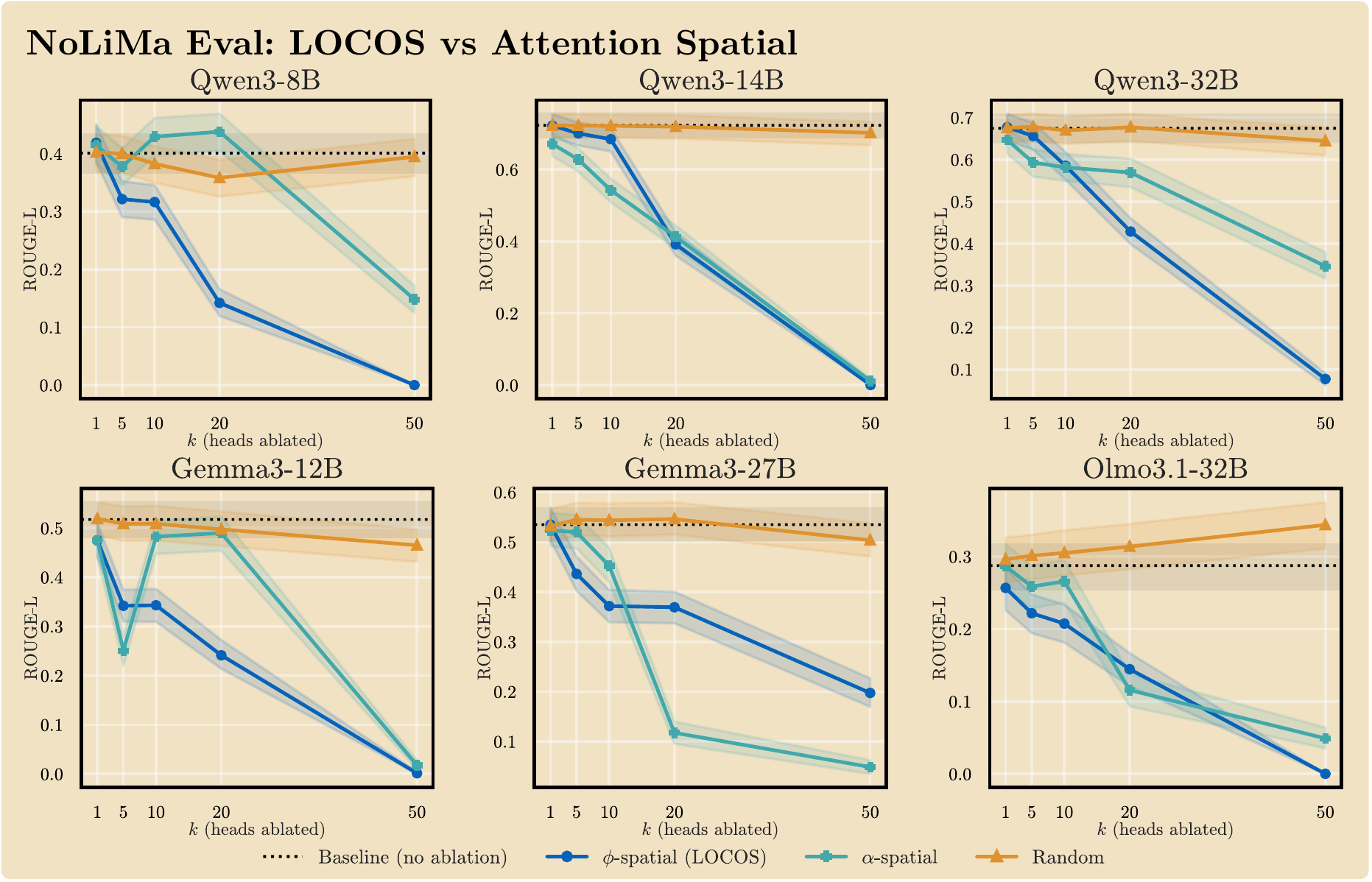}
\caption{
\revision{
\textbf{OV projections improve causal head selection on most models.}
Each panel shows NoLiMa ROUGE-L (800 held-out trials) under mean-ablation of the top-$k$ heads ranked by \methodnameabbrev (blue) and the attention-only control (cyan).
Both scorers use identical spatial-contrast aggregation; only the per-position observable differs.
\methodnameabbrev is stronger on Qwen3-8B, Qwen3-32B, and Gemma-3-12B, comparable on Qwen3-14B and OLMo-3.1-32B, and weaker at large $k$ on Gemma-3-27B.
}
}
\label{fig:ov-isolation}
\end{figure}

\revision{
\Cref{fig:ov-isolation} shows that \methodnameabbrev is more damaging on Qwen3-8B, Qwen3-32B, and Gemma-3-12B: at $k{=}50$, it reaches $0.000$ versus $0.148$ on Qwen3-8B and $0.077$ versus $0.346$ on Qwen3-32B.
The scorers are comparable on Qwen3-14B and OLMo-3.1-32B, while Gemma-3-27B inverts at depth: the attention-only control is more damaging at $k{\in}\{20,50\}$ ($0.118$ vs.\ $0.369$ at $k{=}20$).\footnote{We analyze the Gemma-3-27B inversion in \cref{app:direct-path:gemma-inversion}, where tuned-lens-corrected projections test whether the direct-path assumption explains this inversion.}
Thus, OV projection primarily improves reliability rather than uniformly increasing effect size: \methodnameabbrev is the only scorer in our evaluation that produces severe or total collapse across configurations.
}

\newtakeaway{
\revision{
Under identical spatial-contrast aggregation, an attention-only control is competitive on average but substantially less reliable across models.
The OV projection enables \methodnameabbrev to consistently identify highly causal heads across all six configurations.
}
}

\subsection{Bottom-$k$ Control: Spatial Source Matters}
\label{sec:bottom-k}


A potential objection to the ablation result is circularity: the method selects heads whose OV output projects onto $\mathbf{u}_{y_t}$, so ablating them mechanically reduces the $y_t$ logit regardless of whether the heads perform retrieval.
If this were the case, \emph{any} set of heads with large answer-aligned logit contribution should be equally causal when ablated, \emph{irrespective of whether that contribution originates from the needle or from unrelated context}.
Heads with the most negative spatial contrast scores ($S_{l,h} \ll 0$) (\ie \emph{bottom-$k$ heads}) provide a direct test: they contribute strongly to the correct-answer logit but predominantly from off-needle positions.
The absolute logit contribution of these bottom-$k$ heads is large, so the comparison with top-$k$ heads is not confounded by magnitude.
The full score distribution (\cref{app:score-distribution}) confirms that all bottom-50 heads have strictly negative $S_{l,h}$ in every model, so the bottom-$k$ experiments exclusively target heads with off-needle-dominant contributions.

\begin{figure}[t]
  \centering
  \includegraphics[width=\linewidth]{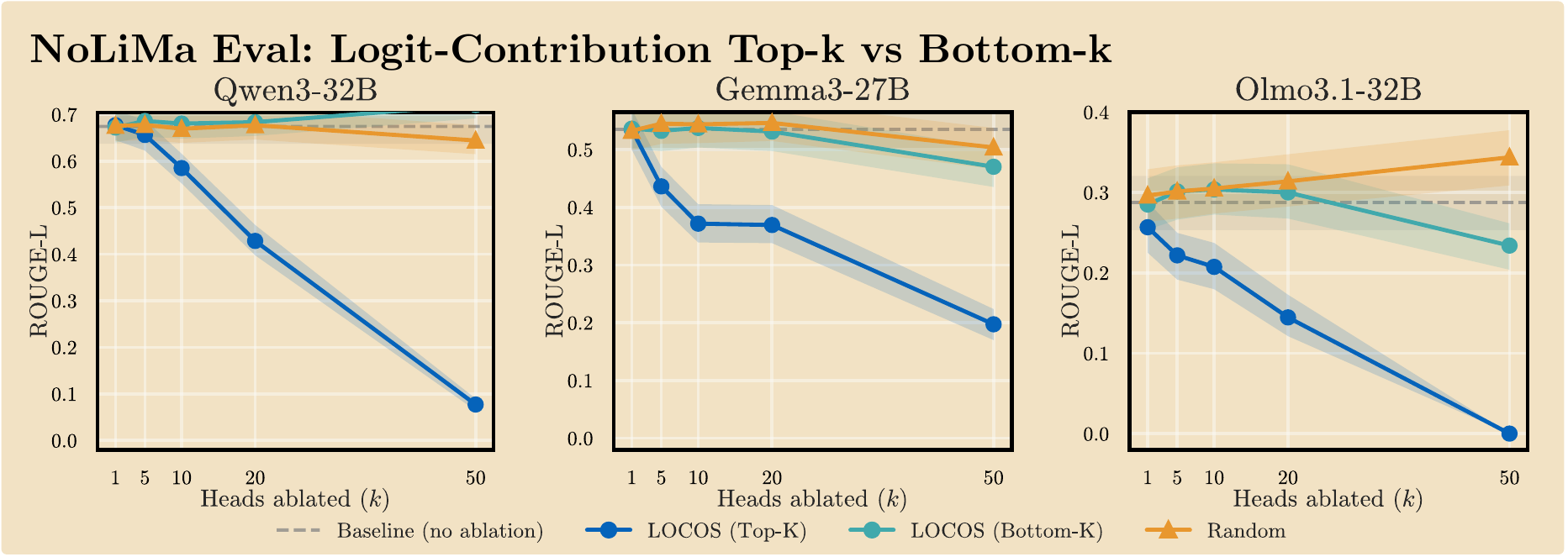}
  \caption{\textbf{Bottom-$k$ ablation does not degrade retrieval.}
  Each panel shows NoLiMa ROUGE-L as a function of ablation depth~$k$ for top-$k$ (blue), bottom-$k$ (cyan), and random heads (orange) for three representative models (one per family); the full six-model version is in \cref{app:six-model-figures}.
  Top-$k$ heads produce steep degradation; bottom-$k$ heads track the random baseline despite having equally large absolute logit contribution, ruling out the circularity objection.}
  \label{fig:ablation-bottomk}
\end{figure}

As shown in \cref{fig:ablation-bottomk}, mean-ablating the top-$k$ \methodnameabbrev heads produces steep ROUGE-L degradation, reaching near-zero at $k{=}50$ in most models.
By contrast, ablating the bottom-$k$ heads leaves ROUGE-L near baseline, tracking the random-head control at all ablation depths.

\newtakeaway{
\looseness-1
Ablating bottom-$k$ heads (those contributing to the answer logit from off-needle positions) does not degrade NoLiMa ROUGE-L. This indicates that the ablation result (\cref{sec:ablation-comparison}) is not merely an artifact of removing answer-aligned signal.
}

\subsection{Layer Distribution}
\label{sec:layer-distribution}

\begin{figure}[h]
  \centering
  \includegraphics[width=\linewidth]{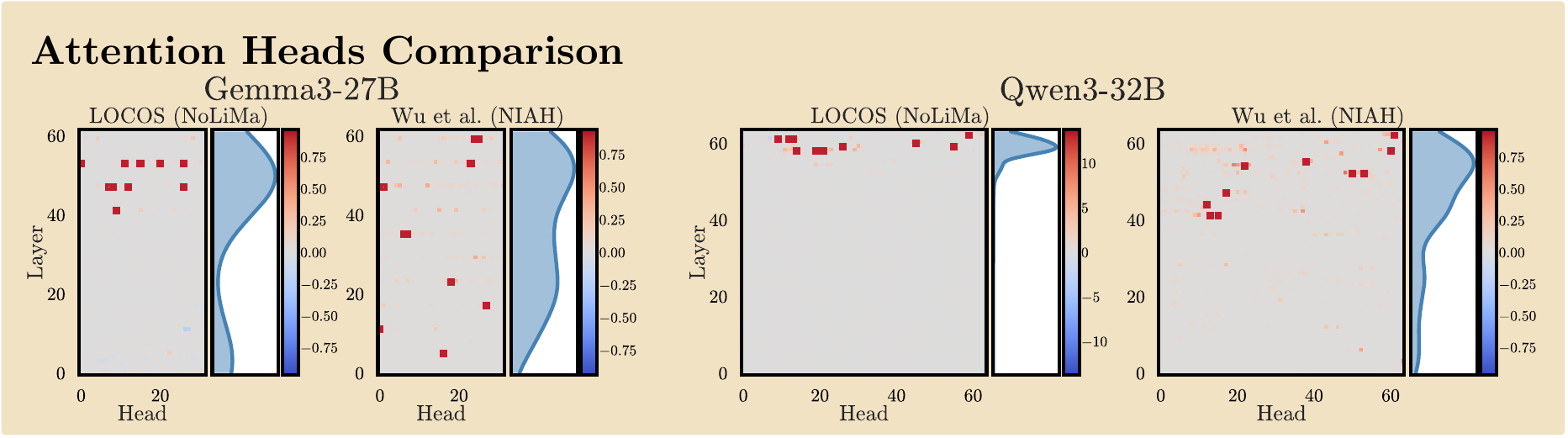}
  \caption{\textbf{\methodnameabbrev heads are more concentrated in late layers than Wu/NIAH-scored scores.}
  Layer $\times$ Head heatmaps on NoLiMa for Gemma-3-27B (left) and Qwen3-32B (right).
  The left-hand panel of each model shows \methodnameabbrev; the right shows Wu/NIAH-scored token-matching.
  Red squares mark top-10 heads.
  Both \methodnameabbrev and Wu/NIAH-scored assign high scores predominantly to late layers, but Wu/NIAH-scored additionally identifies heads in early-to-middle layers.
  }
  \label{fig:heatmap}
\end{figure}

\Cref{fig:heatmap} visualizes score distributions across layers for Gemma-3-27B and Qwen3-32B.
\revision{\methodnameabbrev is most concentrated in the Qwen3 family and Gemma-3-27B; at KV-group granularity, Gemma-3-12B and OLMo-3.1-32B span broader layer ranges (\cref{app:kv-group-layer}).}
One possible explanation is that \methodnameabbrev relies on a \emph{direct-path assumption} that is more accurate near the output, biasing it toward late layers.

A tuned-lens check on Gemma-3-27B preserves the late-layer band, and causal activation patching on Qwen3-8B and Gemma-3-12B also concentrates top-10 heads in upper layers (\cref{app:direct-path:empirical,app:direct-path:beyond}).
The top-10 sets overlap only marginally between the two scores (2/10 on Qwen3-8B, 3/10 on Gemma-3-12B); the top-$k$ \methodnameabbrev set should therefore be read as a collectively causal retrieval circuit, validated by the group ablations of \cref{sec:ablation-comparison}, rather than a list of individually load-bearing heads.

\newtakeaway{\revision{\methodnameabbrev scores concentrate in late layers in the Qwen3 family and Gemma-3-27B; the layer pattern is family-dependent (\cref{app:kv-group-layer}).} A tuned-lens variant (\cref{app:direct-path:empirical}) and a causal-attribution probe (\cref{app:direct-path:beyond}) confirm the late-layer concentration is not a direct-path artifact \revision{on the models examined}; the top-$k$ \methodnameabbrev set should be read as a collectively causal retrieval circuit \revision{rather than a universal architectural pattern across all families}.}

\subsection{Retrieval Specificity}
\label{sec:specificity}


\looseness-1
A potential concern is that \methodnameabbrev identifies generically important heads for model output, not retrieval-specific.
%
%
We test this by ablating \methodnameabbrev heads and measuring performance on tasks that do not require retrieval:
\begin{enumerate*}[label=(\roman*)]
    \item \textbf{City--country associations}: parametric factual recall (\eg ``Which country does Paris belong to?'').
    \item \textbf{PopQA}~\citep{mallen2023popqa}: top 100 popular QA drawn from a knowledge base (\eg ``Who is the mother of Jesus?'').
    \item \textbf{Arithmetic}: two-operand addition and subtraction (\eg ``What is 47 + 23?'').\footnote{Full dataset available at \href{https://huggingface.co/datasets/aryopg/parametric-arithmetic-eval}{\raisebox{-0.2\height} {\includegraphics[height=1em]{hf-logo.png}} \texttt{aryopg/parametric-arithmetic-eval}}}
\end{enumerate*}

\looseness-1
To compare specificity across methods, we define the \emph{Dissociation Score} at ablation depth $k$ as $\mathrm{DS}(k) = \Delta R(k) - \Delta P(k)$, where $\Delta R(k) = (R_0 - R(k))/R_0$ and $\Delta P(k) = (P_0 - P(k))/P_0$ are the relative drops in NoLiMa ROUGE-L and aggregate parametric accuracy (mean of city--country, PopQA, and arithmetic), respectively.
$R_0, P_0$ are baselines.
$\mathrm{DS}{=}1$ means retrieval is fully destroyed with zero parametric damage; $k^{*} = \arg\max_k \mathrm{DS}(k)$ identifies the most-specific ablation depth.

\begin{figure}[h]
\centering
\includegraphics[width=\linewidth]{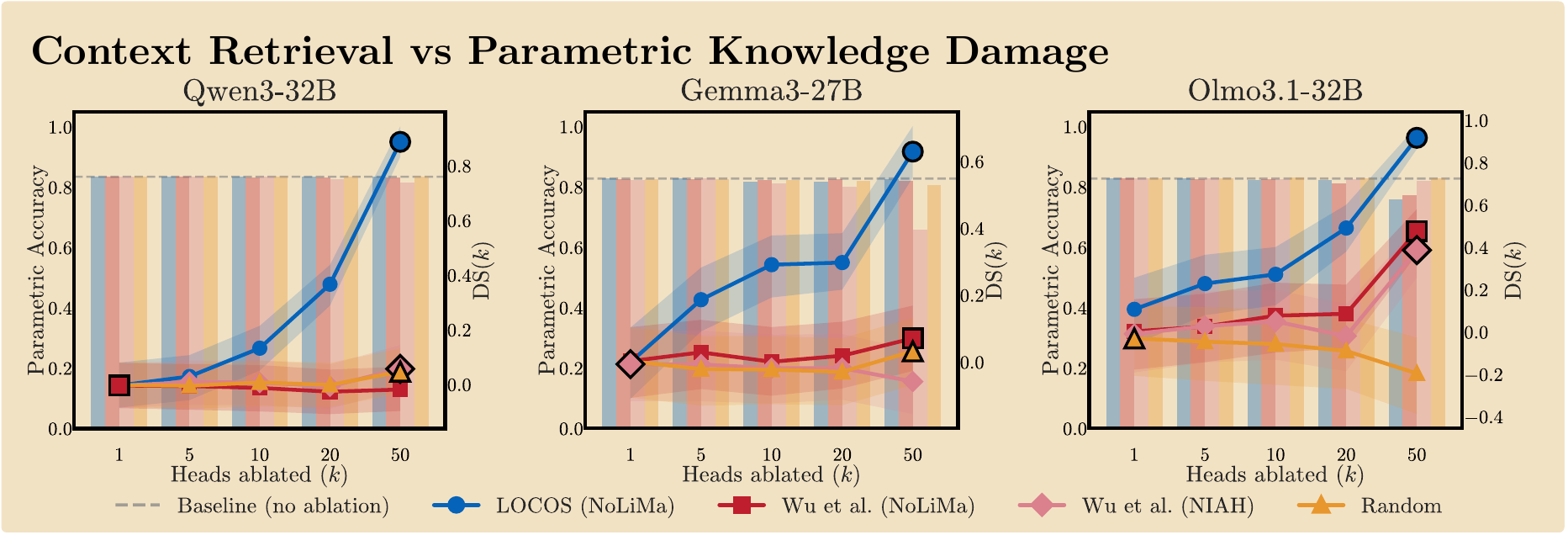}
\caption{
\looseness-1
\textbf{\methodnameabbrev heads exhibit the strongest functional dissociation between retrieval and parametric capabilities.}
Each panel shows DS$(k)$ (lines, right axis) and parametric accuracy (bars, left axis) as a function of ablation depth $k$ for four scoring methods, on three representative models (one per family); the full six-model version is in \cref{app:six-model-figures}.
Higher DS indicates that ablation degrades retrieval far more than parametric tasks.
\methodnameabbrev (blue) achieves the highest DS in every model configuration; the enlarged marker indicates $k^*$, the point of maximum dissociation.
}
\label{fig:fds}
\end{figure}

%
\Cref{fig:fds} shows DS$(k)$ for each scoring method, with \methodnameabbrev achieving the highest peak in every model.
Parametric accuracy likewise remains stable under bottom-$k$ ablation (\cref{app:bottomk-ds}); these heads are therefore not responsible for parametric recall either.

\newtakeaway{Retrieval heads identified by \methodnameabbrev are retrieval-specific: ablating them severely degrades contextual retrieval while leaving parametric accuracy near baseline (\cref{fig:fds}), and \methodnameabbrev achieves the highest dissociation-score peak in every configuration.}

\subsection{Literal vs.\ Non-Literal Retrieval Specificity}
\label{sec:literal-vs-nonliteral}


\Cref{sec:specificity} established that ablating \methodnameabbrev heads degrades non-literal retrieval far more than parametric tasks.
\revision{A write-aware detector that measures the full OV pipeline should, in principle, identify retrieval heads in both literal and non-literal regimes; the novel contribution over attention-based scoring is access to the non-literal subset.}
We test this directly by mean-ablating the same top-$k$ \methodnameabbrev heads (selected on NoLiMa probing trials) and evaluating on both NoLiMa and standard NIAH~\citep{kamradt2023niah}\revision{, treating the two benchmarks as probes for the non-literal and literal retrieval circuits respectively}, using the protocol of \cref{sec:setup} for both benchmarks.

\begin{figure}[h]
\centering
\includegraphics[width=\linewidth]{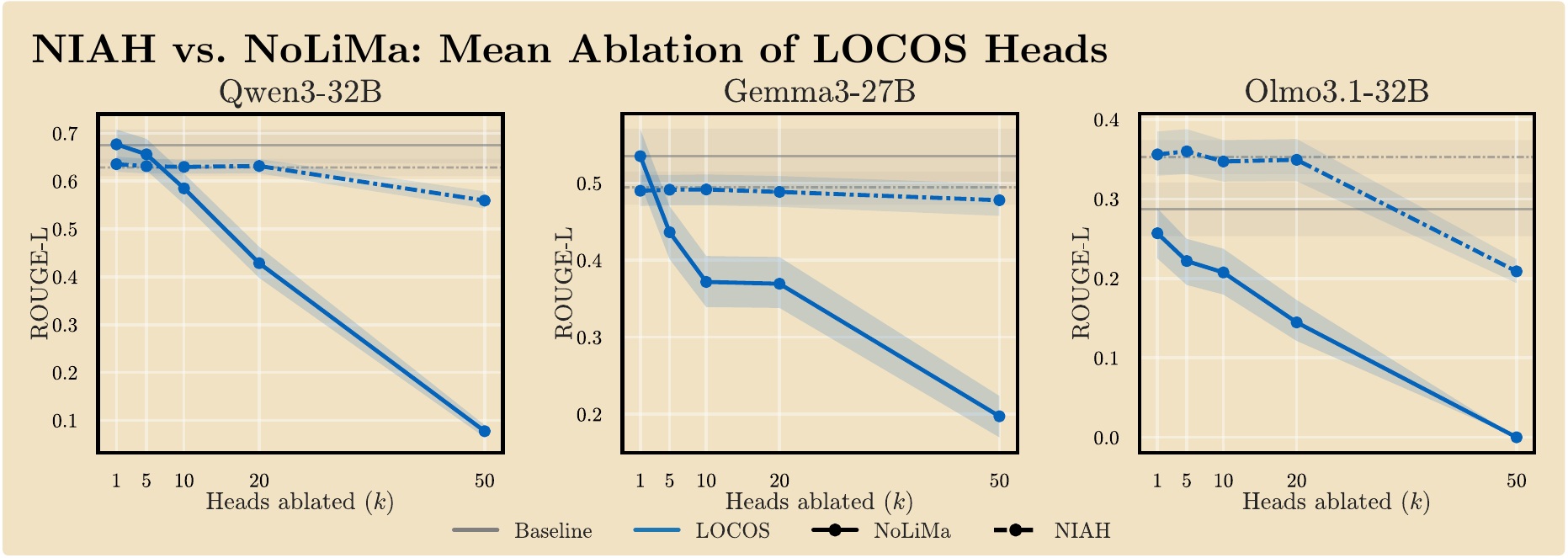}
\caption{
  \textbf{Ablating \methodnameabbrev heads damages non-literal retrieval more than literal retrieval.}
  Each panel shows ROUGE-L on NoLiMa (solid) and standard NIAH (dashed) under mean-ablation of the same top-$k$ \methodnameabbrev heads, with the NoLiMa and NIAH baselines marked by solid and dashed gray lines.
  Three representative models are shown here; the full six-model version is in \cref{app:six-model-figures}.
  The NoLiMa curve declines more steeply in every configuration, reaching near-zero at $k{=}50$ in five of six models, and the NoLiMa--NIAH gap widens with $k$.
}
\label{fig:ablation-niah-vs-nolima}
\end{figure}

\Cref{fig:ablation-niah-vs-nolima} shows that the non-literal curve drops faster than the literal curve in every configuration, but the literal curve also drops in five of six models, so the top-$k$ \methodnameabbrev set is not strictly non-literal-specific.
Both retrieval regimes require heads that read the answer span; they differ only in what the OV circuit writes.
The top-$k$ set contains both kinds of heads: those whose OV write is answer-aligned in both regimes damage both benchmarks when ablated, while those aligned only in the non-literal regime explain the larger NoLiMa drop.
\methodnameabbrev therefore identifies retrieval-relevant heads in both regimes; its novelty relative to attention-based scoring lies in identifying the non-literal subset.

\newtakeaway{Ablating top-$k$ \methodnameabbrev heads degrades both non-literal (NoLiMa) and literal (NIAH) retrieval, with a steeper drop on NoLiMa. The dissociation indicates that \methodnameabbrev captures retrieval-relevant heads beyond those identified by attention-based scoring, including heads that contribute to non-literal retrieval which prior detectors miss.}

\subsection{Downstream Long-Context Evaluation}
\label{sec:downstream}


We test transfer by re-running the head ablation on two downstream long-context benchmarks that were not used to score heads.
\textbf{MuSiQue}~\citep{trivedi2022musique} is a multi-hop QA benchmark whose questions require composing facts across multiple paragraphs of a long supporting context.
\textbf{BABILong} (qa2 and qa3 subsets)~\citep{kuratov2024babilong} requires tracing an object's trajectory through movements interleaved with distractor narrative; literal token copying is insufficient because each candidate location is mentioned many times for many entities (see \cref{app:downstream-example} for an example).
We fix $k{=}50$ for direct comparison with the headline NoLiMa result and retain the random-heads control and the Wu/NIAH-scored baseline.

\begin{figure}[h]
  \centering
  \includegraphics[width=\linewidth]{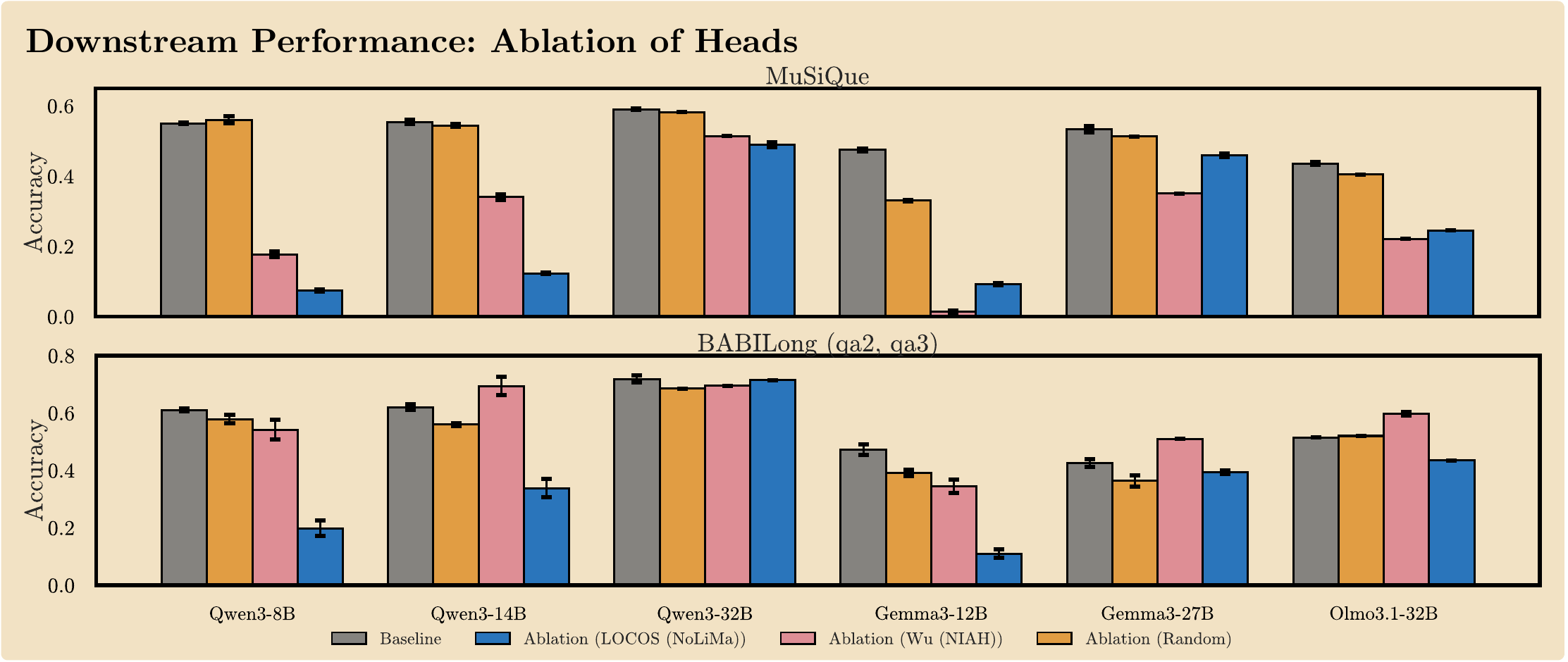}
  \caption{\textbf{Mean-ablating top-$50$ \methodnameabbrev heads degrades downstream long-context performance, most strongly on the Qwen3 family.}
  Accuracy on MuSiQue (top) and BABILong qa2+qa3 (bottom) for six models. Bars show the unablated baseline (gray) and the three ablation conditions: random heads (orange), Wu/NIAH-scored heads (pink), and \methodnameabbrev (blue). Error bars are standard deviations across three independent runs.
  \methodnameabbrev produces the largest drop in $6$ of $12$ model--benchmark cells; Wu/NIAH-scored ablation is more damaging on MuSiQue for Gemma-3 and OLMo-3.1, and slightly raises BABILong accuracy on Qwen3-14B, Gemma-3-27B, and OLMo-3.1-32B.}
  \label{fig:downstream}
\end{figure}

\Cref{fig:downstream} shows that ablating the top-$50$ \methodnameabbrev heads on Qwen3-8B drops MuSiQue accuracy from $0.55$ to $0.08$ and BABILong from $0.62$ to $0.20$.
Across the twelve model--benchmark cells, \methodnameabbrev is the most damaging ablation in six and produces a drop of at least $0.10$ below baseline in eight, while the random-heads control stays within $0.05$ of baseline in eleven; the degradation is therefore specific to the heads \methodnameabbrev identifies.
The ranking against attention-based scoring is benchmark-dependent: on MuSiQue, Wu/NIAH-scored ablation is more damaging on Gemma-3 (12B, 27B) and OLMo-3.1-32B, consistent with multi-hop QA recruiting both literal and non-literal retrieval; on BABILong, Wu/NIAH-scored ablation slightly \emph{raises} accuracy on Qwen3-14B, Gemma-3-27B, and OLMo-3.1-32B, while \methodnameabbrev ablation drops accuracy in every model.

\newtakeaway{Ablating top-$50$ \methodnameabbrev heads degrades downstream long-context performance, most strongly on the Qwen3 family (Qwen3-8B MuSiQue: $0.55{\to}0.08$, BABILong: $0.62{\to}0.20$)}

\section{Related Work}
\label{sec:related-work}

\looseness-1
\textbf{Attention is not explanation.}
%
The faithfulness of attention weights as an explanation has been contested \citep{jain2019attention, serrano2019attention, wiegreffe2019attention}, with \citet{bastings2020elephant} arguing that gradient-based saliency methods provide more reliable attribution than attention alone.
Our method, \methodnameabbrev, measures the head's direct write onto the answer logit rather than its attention distribution, aligning with the \emph{task-grounded faithfulness} criteria~\citep{wiegreffe2019attention}.

\textbf{OV/QK circuit analysis and attribution.}
Our method builds on the QK/OV circuit decomposition of \citet{elhage2021mathematical} and generalizes the literal copying mechanism of induction heads~\citep{olsson2022inductionheads} to non-literal retrieval.
The logit lens~\citep{nostalgebraist2020logitlens} and tuned lens~\citep{belrose2025tunedlens} project residual-stream states onto the unembedding matrix at per-layer granularity; \methodnameabbrev applies an analogous projection at per-head, per-position granularity with spatial contrast, building on direct logit attribution.
%
\citet{mcdougall2023copy} characterize copy suppression heads whose OV circuits actively inhibit direct token copying, which may be a complementary mechanism to the non-literal retrieval heads identified here.

\looseness-1
\textbf{Retrieval head identification.}
Existing methods identify retrieval heads through attention-weight observables.
\citet{wu2025retrieval} define retrieval heads via token matching on needle-in-a-haystack tasks; \citet{fu2025headkv}, \citet{lin2025compresskv}, and \citet{xiao2025duoattention} extend attention-based head scoring to KV cache allocation, semantic attention-mass criteria, and retrieval/streaming-head separation.
The broader KV cache compression literature~\citep{zhang2023h2o,li2024snapkv,xiao2024streamingllm,cai2024pyramidkv} motivates per-head retrieval scoring by showing that uniform token eviction misses structured head and layer importance.
These methods measure the QK circuit, but not the OV circuit.
%
%
For literal retrieval, this suffices; for non-literal retrieval, the OV circuit transforms attended content, and attention-based methods miss these heads.
\citet{sun2025redeep} analyze OV circuits for hallucination detection in RAG, decomposing contributions into copying heads (attention) and knowledge FFNs; our method differs by scoring individual heads at per-position granularity with spatial contrast, targeting retrieval head identification.

\section{Conclusion}
\label{sec:conclusion}

\looseness-1
Existing retrieval-head detectors~\citep{wu2025retrieval,fu2025headkv,lin2025compresskv} score attention heads by where they read.
This identifies the heads that copy literal tokens, but misses the heads that synthesize answers from the meaning of an attended span.
We presented \methodnameabbrev (\methodnametitle), a detector that scores each head by the projection of its OV-circuit output onto the answer-token unembedding direction, and uses a needle-versus-off-needle spatial contrast to isolate position-specific writes within a single forward pass per probing trial.
Across the six configurations evaluated, mean-ablating the top-$k$ heads selected by \methodnameabbrev collapses NoLiMa ROUGE-L at lower $k$ than every attention-based baseline we evaluate.
%
%
The selected heads are retrieval-specific: under the same ablation, parametric recall and arithmetic reasoning remain at baseline.
\revision{The top-$k$ \methodnameabbrev set captures retrieval-relevant heads in both literal and non-literal regimes; its advantage over attention-based scoring is in additionally identifying the non-literal subset that the token-matching criterion misses (\cref{sec:literal-vs-nonliteral}).}

\textbf{Limitations.}
\label{sec:limitations}
\begin{enumerate*}[label=(\roman*)]
    \item \textbf{Off-needle baseline:} If the context contains distractor information related to the answer, $\Phi^-$ rises and scores drop for heads performing broad semantic matching rather than targeted needle retrieval---desirable for span-specific retrieval, but may miss heads performing more diffuse contextual integration;
    \item \textbf{Architecture coverage}: Our model selection does not include Mixture-of-experts routing, encoder--decoder stacks, and state-space hybrids. The per-head OV decomposition still applies in principle, but the causal head-ablation magnitudes and late-layer concentration we report should not be assumed to transfer without verification.
\end{enumerate*}

\section*{Acknowledgements}
\label{sec:acknowledgements}

APG was supported by the United Kingdom Research and Innovation (grant EP/S02431X/1), UKRI Centre for Doctoral Training in Biomedical AI at the University of Edinburgh, School of Informatics.
BA is a Fellow at and has been supported by the Generative AI Lab (GAIL) at the University of Edinburgh.
PM was supported by the Engineering and Physical Sciences Research Council (EPSRC) through the AI Hub in Generative Models (grant number EP/Y028805/1).
This work was supported by the Edinburgh International Data Facility (EIDF) and the Data-Driven Innovation Programme at the University of Edinburgh.
We also thank Neel Rajani, Raj Bhalwankar, Matteo Attimonelli, and Federico Tiblias for their helpful comments and suggestions.

\bibliography{main}
\bibliographystyle{plainnat}

\newpage

\appendix

\textbf{\LARGE Appendix}
\vskip 4mm

\text{\LARGE{Table of Contents}}
\vskip 4mm
\hrule height .5pt
\vskip 4mm

\textit{Reproducibility.}
\begin{itemize}[label={},leftmargin=*]
    \item \textbf{\textcolor{black}{\hyperref[app:assets]{Appendix A - Datasets, Models, and Licenses}}} \dotfill \pageref{app:assets}
    \item \textbf{\textcolor{black}{\hyperref[sec:additional-details]{Appendix B - Experimental Setup Details}}} \dotfill \pageref{sec:additional-details}
    \item \textbf{\textcolor{black}{\hyperref[app:compute]{Appendix C - Compute Resources}}} \dotfill \pageref{app:compute}
    \item \textbf{\textcolor{black}{\hyperref[app:code-data]{Appendix D - Code and Data Availability}}} \dotfill \pageref{app:code-data}
    \item \textbf{\textcolor{black}{\hyperref[app:broader-impacts]{Appendix E - Broader Impacts}}} \dotfill \pageref{app:broader-impacts}
    \item \textbf{\textcolor{black}{\hyperref[app:llm-usage]{Appendix F - Declaration of LLM Usage}}} \dotfill \pageref{app:llm-usage}
\end{itemize}

\textit{Method walk-through.}
\begin{itemize}[label={},leftmargin=*]
    \item \textbf{\textcolor{black}{\hyperref[sec:worked-examples]{Appendix G - Worked Example}}} \dotfill \pageref{sec:worked-examples}
    \item \textbf{\textcolor{black}{\hyperref[sec:architecture-adaptations]{Appendix H - Architecture Adaptations}}} \dotfill \pageref{sec:architecture-adaptations}
\end{itemize}

\textit{Additional empirical analyses.}
\begin{itemize}[label={},leftmargin=*]
    \item \textbf{\textcolor{black}{\hyperref[app:score-distribution]{Appendix I - Logit-Contribution Score Distribution}}} \dotfill \pageref{app:score-distribution}
    \item \textbf{\textcolor{black}{\hyperref[app:bottomk-ds]{Appendix J - Bottom-$k$ Dissociation Score}}} \dotfill \pageref{app:bottomk-ds}
    \item \textbf{\textcolor{black}{\hyperref[app:kv-group-layer]{Appendix K - KV-Group $\times$ Layer View}}} \dotfill \pageref{app:kv-group-layer}
    \item \textbf{\textcolor{black}{\hyperref[app:six-model-figures]{Appendix L - Six-Model Versions of Main-Text Ablation Figures}}} \dotfill \pageref{app:six-model-figures}
    \item \textbf{\textcolor{black}{\hyperref[app:downstream-example]{Appendix M - Downstream Benchmark Example}}} \dotfill \pageref{app:downstream-example}
\end{itemize}

\textit{Theoretical analysis.}
\begin{itemize}[label={},leftmargin=*]
    \item \textbf{\textcolor{black}{\hyperref[app:direct-path]{Appendix N - Direct-Path Robustness via Tuned Lens}}} \dotfill \pageref{app:direct-path}
    \item \textbf{\textcolor{black}{\hyperref[app:relationship-attention]{Appendix O - Relationship to Attention-Based Scoring}}} \dotfill \pageref{app:relationship-attention}
\end{itemize}

\textit{Outlook.}
\begin{itemize}[label={},leftmargin=*]
    \item \textbf{\textcolor{black}{\hyperref[app:future-work]{Appendix P - Future Work}}} \dotfill \pageref{app:future-work}
\end{itemize}
\vskip 2mm
\hrule height .5pt
\vskip 10mm

\newpage


\section{Datasets, Models, and Licenses}
\label[appendix]{app:assets}

\Cref{tab:assets} lists every external asset used in this work, together with the version we used and its license. All assets are used within the terms of their respective licenses; we do not redistribute model weights or dataset content.

\begin{table}[h]
\centering
\small
\caption{\textbf{External assets used in this work.} Model checkpoints are accessed via the HuggingFace Hub; dataset and library versions are the ones used in our experiments.}
\label{tab:assets}
\resizebox{\linewidth}{!}{%
\begin{tabular}{@{}llll@{}}
\toprule
\textbf{Asset} & \textbf{Identifier / version} & \textbf{Use} & \textbf{License} \\
\midrule
\multicolumn{4}{l}{\textit{Models}} \\
Qwen3-8B   & \texttt{Qwen/Qwen3-8B}   & detection, ablation & Apache-2.0 \\
Qwen3-14B  & \texttt{Qwen/Qwen3-14B}  & detection, ablation & Apache-2.0 \\
Qwen3-32B  & \texttt{Qwen/Qwen3-32B}  & detection, ablation & Apache-2.0 \\
Gemma-3-12B & \texttt{google/gemma-3-12b-it}  & detection, ablation & Gemma Terms of Use\textsuperscript{\dag} \\
Gemma-3-27B & \texttt{google/gemma-3-27b-it}  & detection, ablation & Gemma Terms of Use\textsuperscript{\dag} \\
OLMo-3.1-32B  & \texttt{allenai/OLMo-3.1-32B-Instruct} & detection, ablation & Apache-2.0 \\
\midrule
\multicolumn{4}{l}{\textit{Datasets}} \\
NoLiMa~\citep{modarressi2025nolima} & v1 (\texttt{onehop} reasoning type) & probing benchmark & CC-BY-4.0 \\
NIAH~\citep{kamradt2023niah} & standard configuration & baseline scoring (Wu/NIAH) & MIT \\
PopQA~\citep{mallen2023popqa} & v1 & parametric specificity control & MIT \\
City--country associations & authors' construction & parametric specificity control & released with this paper \\
Arithmetic (two-operand $\pm$) & authors' construction & parametric specificity control & released with this paper \\
\midrule
\multicolumn{4}{l}{\textit{Software}} \\
vLLM & v0.10.0 & model loading, weight sharding & Apache-2.0 \\
PyTorch & 2.4 & numerical backend & BSD-3-Clause \\
Transformers (HuggingFace) & 4.45 & tokenizers, configs & Apache-2.0 \\
\texttt{rouge-score}~\citep{lin2004rouge} & 0.1.2 & ROUGE-L evaluation & Apache-2.0 \\
TransformerLens~\citep{nanda2022transformerlens} & 2.7 & inspiration for direct logit attribution & MIT \\
\bottomrule
\end{tabular}
}
{\footnotesize \textsuperscript{\dag}Gemma Terms of Use permit research and commercial use subject to a prohibited-use policy and an attribution requirement.\par}
\end{table}

\section{Experimental Setup Details}
\label[appendix]{sec:additional-details}

This appendix consolidates the procedural details that allow the experiments in \cref{sec:experiments} to be reproduced. \Cref{tab:hyperparameters} summarizes the hyperparameters used throughout; the paragraphs below give the rationale and the few cases where defaults were varied.

\begin{table}[h]
\centering
\small
\caption{\textbf{Hyperparameters used throughout the paper.}}
\label{tab:hyperparameters}
\resizebox{\linewidth}{!}{%
\begin{tabular}{@{}lll@{}}
\toprule
\textbf{Component} & \textbf{Value} & \textbf{Note} \\
\midrule
\multicolumn{3}{l}{\textit{Probing}} \\
NoLiMa reasoning type & \texttt{onehop} & following \citet{modarressi2025nolima} \\
Context lengths & 10 settings, 1k--5k tokens & standard NoLiMa protocol \\
Insertion depths & 10 per length & standard NoLiMa protocol \\
Characters per entry & 3 & standard NoLiMa protocol \\
Trial filter & ROUGE-1 recall $> 0.5$ & following \citet{wu2025retrieval} \\
\multicolumn{3}{l}{\textit{Decoding (probing and evaluation)}} \\
Strategy & greedy (temperature $= 0$) & deterministic \\
Max new tokens & 50 & sufficient for NoLiMa answers (2--5 tokens) \\
\multicolumn{3}{l}{\textit{Aggregation}} \\
Bootstrap resamples ($B$) & 1{,}000 & for 95\% CI on $S_{l,h}$ \\
Confidence interval & 2.5\textsuperscript{th}--97.5\textsuperscript{th} percentile & non-parametric \\
\multicolumn{3}{l}{\textit{Ablation}} \\
Calibration trials & 50 (sampled from passing pool) & for mean-activation \\
Held-out evaluation trials & 800 & disjoint from probing pool \\
Ablation depths $k$ & $\{0, 5, 10, 20, 30, 40, 50\}$ & for top-$k$ and bottom-$k$ \\
\bottomrule
\end{tabular}
}
\end{table}

\textbf{Calibration for mean-ablation.}
Mean activations for the ablation intervention are computed from 50 NoLiMa trials sampled uniformly from the pool of ROUGE-passing trials.
For each head $(l,h)$, the calibration vector $\bar{\mathbf{q}}^{(l,h)} \in \mathbb{R}^{d_h}$ is a \emph{mean of trial-means}: we first compute the token-average of the post-Q-projection query $\mathbf{q}^{(l,h)}_t = W_Q^{(l,h)}\, \mathbf{x}^{(l)}_t$ within each calibration trial, then average these per-trial vectors across the 50 trials.
Each trial therefore contributes a single equal-weight entry irrespective of its sequence length; a flat mean over all (token, trial) pairs would up-weight longer trials.
During ablation, the hook intercepts the post-Q-projection vector \emph{before} the rotary position embedding is applied and replaces it with $\bar{\mathbf{q}}^{(l,h)}$ at every token position; RoPE then rotates $\bar{\mathbf{q}}^{(l,h)}$ position-by-position in the usual way.
The value projection $W_V^{(l,h)}$ and the output projection $W_O^{(l,h)}$ are unchanged.
Because $\bar{\mathbf{q}}^{(l,h)}$ is a single content-independent vector, the resulting attention logits depend only on the keys (and on the position-dependent RoPE rotation of $\bar{\mathbf{q}}^{(l,h)}$), so the head's attention distribution becomes content-independent (approximately uniform across positions).

\textbf{ROUGE-L evaluation.}
ROUGE-L is computed at the summary level (longest common subsequence between the generated text and gold answer, normalized by gold length) using the \texttt{rouge-score} library~\citep{lin2004rouge}.
Generation uses greedy decoding (temperature $= 0$) with a maximum of 50 new tokens.

\textbf{Bootstrap confidence intervals.}
For each head's global score $S_{l,h}$, we resample the set of ROUGE-passing trials $B = 1{,}000$ times (with replacement), recompute $S_{l,h}$ for each bootstrap sample, and report the 2.5\textsuperscript{th} and 97.5\textsuperscript{th} percentiles as the 95\% confidence interval.

\textbf{Wu et al.\ scoring on NoLiMa and NIAH.}
The Wu et al.\ retrieval score assigns credit at decode step $t$ if (i)~the head's argmax attention position falls within the needle span, and (ii)~the token at that position matches the generated token.
On NoLiMa, this criterion undercounts retrieval heads because of token-identity mismatch: the question is asked non-literally, so the attended needle token rarely matches the generated answer token.
The top head's mean score drops from $0.97$ (NIAH) to $0.03$ (NoLiMa), a $30{\times}$ gap driven by the scoring criterion rather than by differences in retrieval behavior.

\section{Compute Resources}
\label[appendix]{app:compute}

\textbf{Hardware.}
All experiments were run on 2$\times$ NVIDIA H100 80GB.
Tensor parallelism was used for the larger checkpoints: $\mathrm{TP}=2$ for the 14B/12B/27B/32B models, $\mathrm{TP}=1$ for Qwen3-8B.

\textbf{Per-experiment cost.}
Detection requires one forward pass per probing trial, generating tokens autoregressively while recording attention weights and the value cache (the primary memory overhead).
Ablation evaluation is similarly one forward pass per trial per ablation depth $k$, plus the 50-trial calibration pass.

\section{Code and Data Availability}
\label[appendix]{app:code-data}

The detection, ablation, and evaluation code is released at \repo. The repository contains:
\begin{itemize}[nosep,leftmargin=*]
    \item Detection scripts producing the per-head $S_{l,h}$ scores for each model in \cref{sec:setup}.
    \item The vLLM-based ablation driver described in \cref{sec:architecture-adaptations}, including the monkey-patches for each supported attention class.
    \item Evaluation scripts for NoLiMa ROUGE-L (\cref{sec:ablation-comparison}), the parametric controls (\cref{sec:specificity}), and the bottom-$k$ analyses (\cref{sec:bottom-k}).
    \item A \texttt{README} with exact commands, environment specification (\texttt{environment.yml}, pinned package versions matching \cref{tab:assets}), and the random seeds used for all sampling steps.
\end{itemize}
NoLiMa probing inputs are generated from the public NoLiMa release via the included generation script; we do not redistribute the underlying haystack content. The author-released parametric control sets (city--country associations, two-operand arithmetic) are included as JSON files in the repository.

\section{Broader Impacts}
\label[appendix]{app:broader-impacts}

\methodnameabbrev is a diagnostic tool: it identifies attention heads that contribute to non-literal retrieval, without modifying model weights or behavior. Its primary intended uses are interpretability research and downstream applications that benefit from head-level retrieval signal, such as KV cache compression~\citep{zhang2023h2o,li2024snapkv,xiao2024streamingllm,fu2025headkv,lin2025compresskv,xiao2025duoattention} and retrieval-aware decoding~\citep{gema2025decore}. We do not release a new model, and the method does not enable new generative capabilities.

\textbf{Potential positive impacts.}
A more accurate map of which heads carry non-literal retrieval may enable (i) more aggressive KV cache compression at fixed quality, lowering inference cost and energy use; (ii) more targeted interventions for hallucination mitigation in retrieval-augmented generation~\citep{gema2025decore}; and (iii) cleaner mechanistic accounts of long-context behavior in production-scale models.

\textbf{Potential negative impacts and mitigations.}
The method exposes which heads are causally critical for retrieval, which in principle could be used to construct adversarial inputs that suppress retrieval (a denial-of-capability attack). We judge this risk to be low because: (i) the same information is recoverable from any open-weight model with a few hours of compute, so the marginal uplift from publication is small; (ii) the method requires white-box access to attention weights and value caches, which is not available through standard inference APIs; and (iii) the attack surface (degrading long-context retrieval) is qualitatively similar to risks already present in any KV cache compression method. We therefore do not impose access restrictions on the released code.

\textbf{Fairness and demographic considerations.}
Our evaluation uses only the English NoLiMa benchmark and English parametric controls; we do not evaluate retrieval-head identification across languages, dialects, or demographic axes, and we do not claim that the identified heads generalize to non-English text (\cref{sec:limitations}). Practitioners deploying \methodnameabbrev-derived KV-compression policies on multilingual systems should re-validate on their target languages.

\section{Declaration of LLM Usage}
\label[appendix]{app:llm-usage}

We used LLMs as writing assistants for paper polishing and for routine refactoring of plotting scripts. Specifically: (i) prose passes for clarity and concision, particularly on appendix sections, (ii) LaTeX-formatting suggestions, and (iii) minor refactoring of plotting scripts. The LLM was not used to design experiments, derive the theoretical results, or generate numerical results. All experimental claims and numerical values reported in the paper are produced by the analysis pipeline released with the code.

\newpage

\section{Worked Example}
\label[appendix]{sec:worked-examples}

The following example uses head $(16, 1)$ from Qwen3-8B on a NoLiMa trial where the answer is ``Yuki'' and the needle span $[200, 212)$ (12 tokens) is embedded in a context of $N = 3{,}000$ tokens.
Numerical values below are illustrative, chosen to show the typical scale of $\alpha$, $\varphi$, $\Phi^+$, and $\Phi^-$ for a strongly-retrieving head;
they are not drawn from a specific logged trial.

\textbf{Step 1: Per-position logit contribution (\cref{eq:logit-contribution}).}
At decode step $t{=}1$ producing ``Yuki'' ($y_1$), consider needle position $j = 205$ containing the token ``Kiasma''.
The head's attention weight is $\alpha^{(16,1)}_{1,205} = 0.08$.
After the OV circuit, the per-position output $\mathbf{o}^{(16,1)}_{1,205}$ has a large component aligned with $\mathbf{u}_{\text{Yuki}}$:
\[
    \phi^{(16,1)}_{1,205} = \mathbf{u}_{\text{Yuki}}^\top\, \mathbf{o}^{(16,1)}_{1,205} = 1.3\,.
\]
Despite moderate attention ($\alpha = 0.08$), this position contributes substantially to the ``Yuki'' logit because the OV circuit transforms the ``Kiasma'' representation into an output aligned with the answer direction.
This is exactly the non-literal retrieval that attention-based scoring cannot detect.

\textbf{Step 2: Spatial contrast (\cref{eq:needle-logit}).}
With $N_1 = 3{,}000$ key positions at step $t{=}1$:
\begin{align*}
    \Phi^+ &= \textstyle\sum_{j=200}^{211} \phi^{(16,1)}_{1,j} = 4.7 \quad\text{(total logit contribution from needle)} \\
    \Phi^- &= \tfrac{12}{2988} \textstyle\sum_{j \notin [200,212)} \phi^{(16,1)}_{1,j} = \tfrac{12}{2988} \times 58.3 = 0.23 \quad\text{(length-normalized off-needle)}
\end{align*}
The 12 needle tokens contribute $4.7$ to the ``Yuki'' logit; a comparable 12-token span from the remaining context contributes only $0.23$.

\textbf{Step 3: Aggregation (\cref{eq:global-aggregate}).}
After 200 ROUGE-passing trials contributing a total of 247 answer steps, head $(16, 1)$ has pooled score $S_{16,1} = 3.8$ with 95\% bootstrap CI $[3.1, 4.5]$ and per-trial consistency $0.94$ (positive in 188/200 trials).
A head with $S = 0.2$, CI $[-0.1, 0.5]$, and consistency $0.53$ is not reliably retrieval-active.

\section{Architecture Adaptations}
\label[appendix]{sec:architecture-adaptations}

The logit-contribution equation (\cref{eq:logit-contribution}) is architecture-invariant: any transformer decomposing into attention weights $\alpha$, value vectors $\mathbf{v}$, an output projection $W_O$, and an unembedding matrix $W_U$ provides the four inputs the method requires.
The adaptations needed for different architectures are mechanical, not algorithmic.

\textbf{Grouped-query attention (GQA).}
In GQA models (Qwen3, Gemma-3, OLMo-3.1), each KV group serves $G = H_Q / H_{KV}$ query heads.
The value cache is stored at $H_{KV}$ granularity and expanded to $H_Q$ heads before the per-position projection:
\[
    \tilde{\mathbf{V}}^{(l)}_{h} = \mathbf{V}^{(l)}_{\lfloor h \cdot H_{KV}/H_Q \rfloor}
    \quad \text{for } h = 0, \ldots, H_Q - 1\,.
\]
Query heads within the same group share value vectors but differ in $W_O^{(l,h)}$, so their logit-contribution scores can differ.

\textbf{Vision--language models (Gemma-3).}
The decoder layers and unembedding matrix reside at a nested path (\texttt{language\_model.model.layers} and \texttt{language\_model.lm\_head}).
Per-head QK normalization affects attention distributions but does not change the value vectors or output projections.
Detection operates on text-only self-attention; extending to multimodal inputs is left for future work.

\paragraph{Implementing head ablation in vLLM.}
Our ablation intervention requires a masked pass that zeros the queries of a designated head subset while sharing the same prefix history. vLLM's default generation path is built around paged attention and a continuous-batch scheduler, neither of which exposes the per-head query manipulation the method needs.
We therefore use vLLM specifically for model loading, tensor-parallel weight sharding, and tokenization, and replace its generation loop with a manual autoregressive driver.
At load time, we monkey-patch each supported attention class (\texttt{Gemma3Attention}, \texttt{Qwen3Attention}, \texttt{Olmo2Attention}) so that, when the driver is active, the layer routes through \texttt{F.scaled\_dot\_product\_attention} backed by two independent sequential KV caches---one per pass.
Each decode step runs the forward with retrieval-head queries zeroed.
For tensor-parallel execution ($\mathrm{TP}>1$), the patching and KV bookkeeping run inside every worker, and the two passes are dispatched via vLLM's \texttt{collective\_rpc}; global head indices are remapped to local shard indices per rank.
The trade-off is an $O(n^2)$ KV cache and single-request decoding rather than continuous batching; the benefit is that any model loadable by vLLM under one of the supported attention classes is supported without modifying vLLM internals.

\newpage

\section{Logit-Contribution Score Distribution}
\label[appendix]{app:score-distribution}

\Cref{fig:score-dist} shows the distribution of \methodnameabbrev scores $S_{l,h}$ across all attention heads for each of the six models, with the top-50 and bottom-50 heads highlighted.
In every model, the score distribution is heavily right-skewed: a small number of heads have large positive scores, the vast majority cluster near zero, and the bottom-50 heads all have strictly negative scores.
This confirms that the bottom-$k$ ablation experiments in \cref{sec:bottom-k}---which ablate up to $k{=}50$ heads ranked from most negative---exclusively target heads with $S_{l,h} < 0$, \ie heads whose logit contribution toward the correct answer originates predominantly from off-needle positions.
The clear separation between positive and negative tails across all six models supports the validity of the spatial contrast as a discriminative criterion.

\begin{figure}[t]
  \centering
  \includegraphics[width=0.9\linewidth]{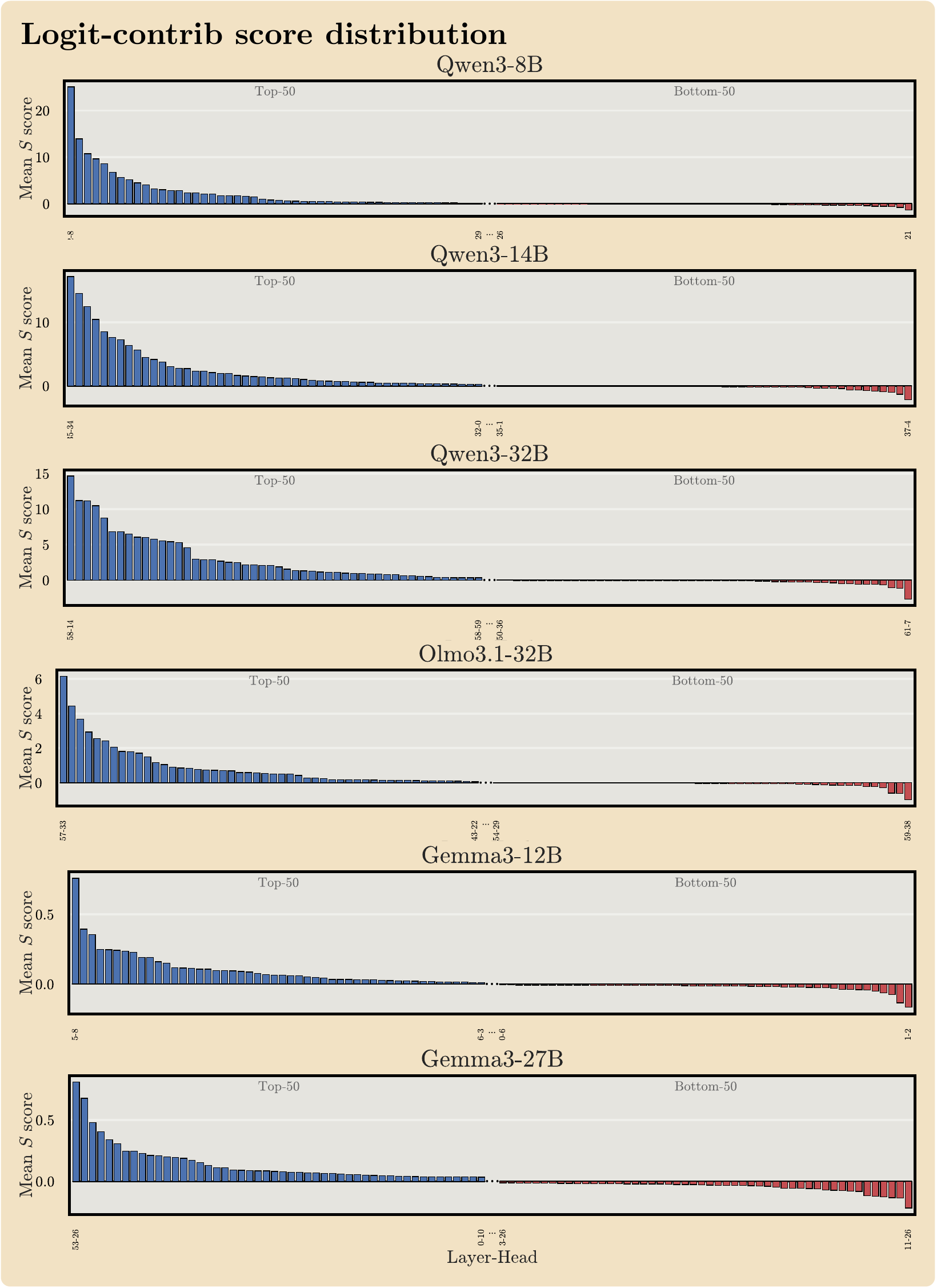}
  \caption{\textbf{Distribution of \methodnameabbrev scores across all heads for each model.}
  Heads are sorted by $S_{l,h}$; the top-50 (blue, left) and bottom-50 (red, right) are highlighted.
  In every model, the bottom-50 heads have strictly negative scores, confirming that the bottom-$k$ experiments (\cref{sec:bottom-k}) exclusively ablate heads whose answer-aligned logit contribution originates from off-needle positions.}
  \label{fig:score-dist}
\end{figure}

\section{Bottom-$k$ Dissociation Score}
\label[appendix]{app:bottomk-ds}

\Cref{fig:bottomk-ds} shows the dissociation score under bottom-$k$ ablation alongside parametric accuracy.
While top-$k$ ablation produces high dissociation scores---retrieval degrades steeply with parametric accuracy intact---bottom-$k$ ablation yields near-zero dissociation at all depths, confirming that neither retrieval nor parametric performance is affected.

\begin{figure}[h]
  \centering
  \includegraphics[width=\linewidth]{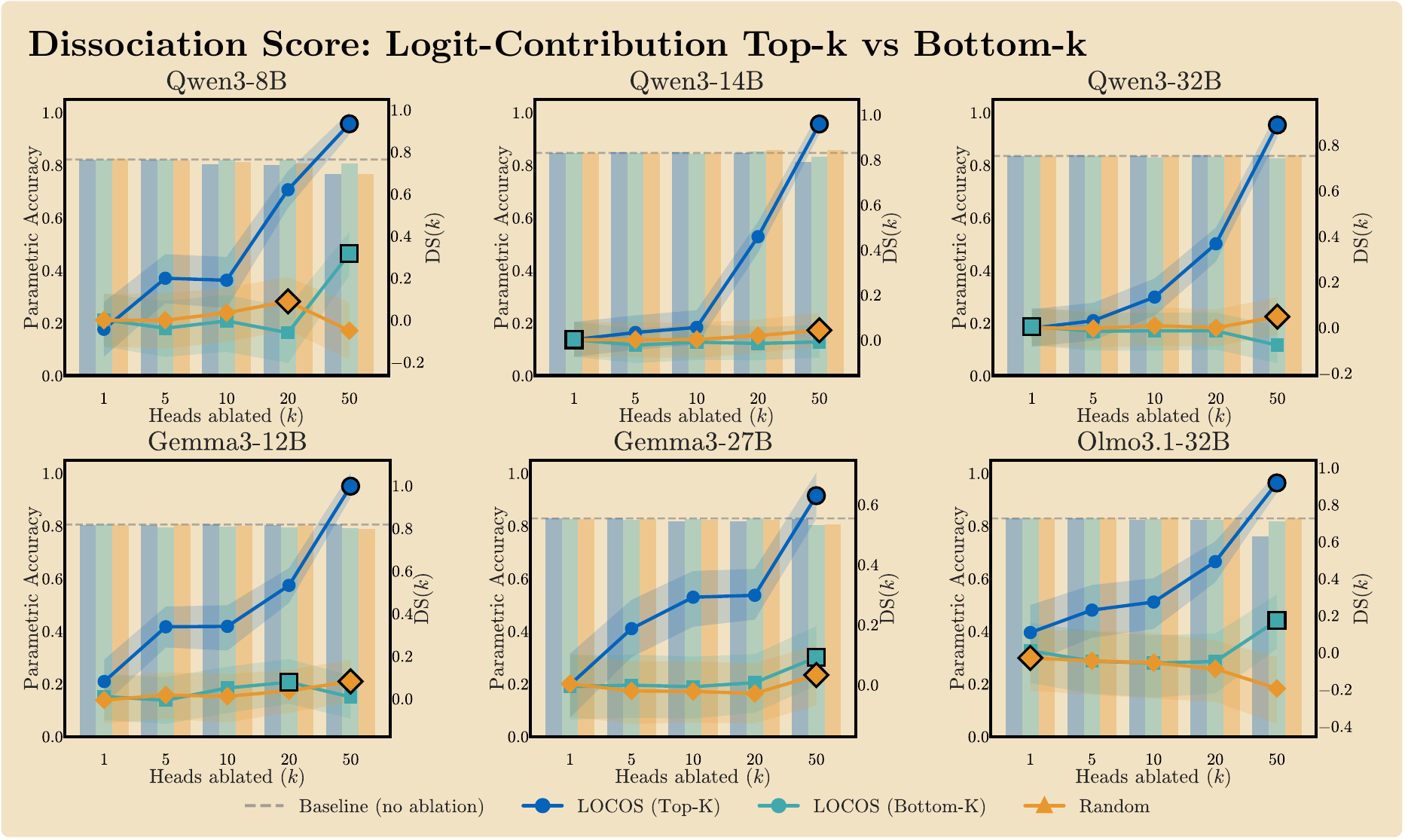}
  \caption{\textbf{Bottom-$k$ ablation produces near-zero dissociation.}
  Dissociation score DS$(k)$ and parametric accuracy as a function of ablation depth~$k$ for bottom-$k$ heads across six models.
  Unlike top-$k$ ablation (\cref{fig:fds}), bottom-$k$ ablation leaves both retrieval and parametric performance near baseline.}
  \label{fig:bottomk-ds}
\end{figure}

\section{KV-Group $\times$ Layer View of Logit-Contribution Scores}
\label[appendix]{app:kv-group-layer}


In GQA models~\citep{ainslie2023gqa}, query heads sharing a KV group consume identical key and value vectors, so their per-head \methodnameabbrev scores are correlated.
The cache unit in GQA is the KV group, not the individual query head, so KV cache footprint is also reasoned about at (layer, KV-group) granularity.
This appendix asks one spatial question that the head-level analyses in \cref{sec:ablation-comparison,sec:bottom-k} do not: at which layers do the highest-scoring KV groups appear?

\begin{figure}[t]
  \centering
  \includegraphics[width=\linewidth]{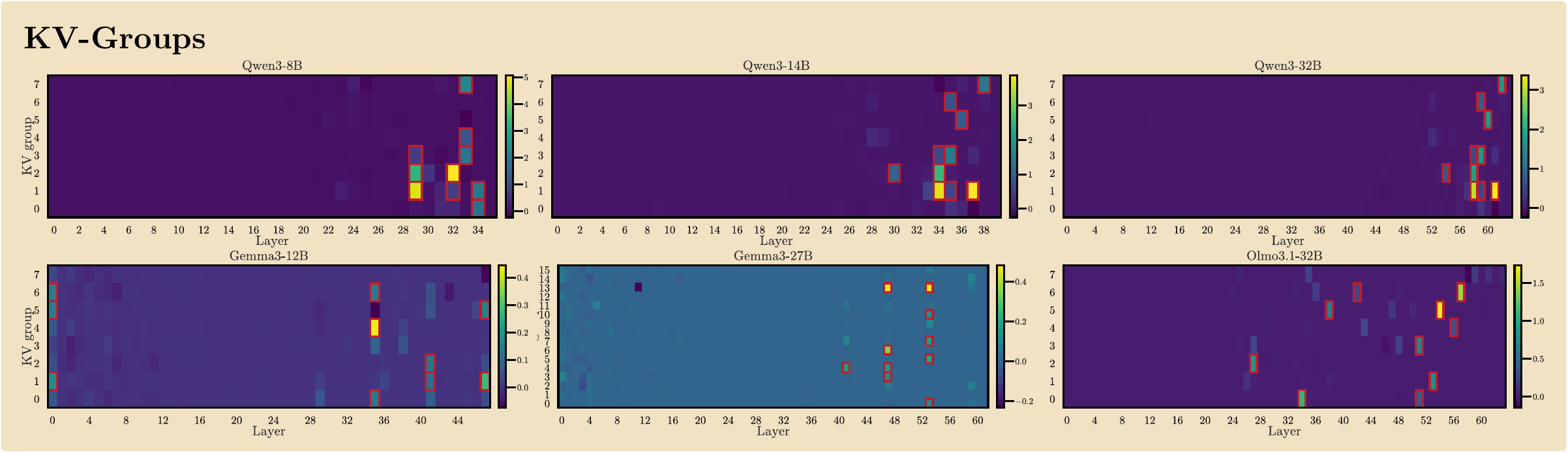}
  \caption{\textbf{Top-10 \methodnameabbrev cells concentrate in late layers in the Qwen3 family on NoLiMa, but span broader layer ranges in Gemma-3-12B and OLMo-3.1-32B.}
  Per-(layer, KV-group) mean \methodnameabbrev score on NoLiMa for Qwen3-8B, Qwen3-14B, Qwen3-32B, OLMo-3.1-32B, Gemma-3-12B, and Gemma-3-27B.
  Layer is on the $x$-axis, KV group on the $y$-axis, color encodes the mean score across passing trials.
  Red boxes mark the top-10 (layer, KV-group) cells per model.
  Per-panel color scales differ; the layer pattern is read from the spatial location of red boxes, not from absolute magnitude.
  Per-model layer spans and KV-group counts are reported in \cref{tab:kv-group-summary}.}
  \label{fig:kv-group-heatmap}
\end{figure}

\textbf{Observations.}
\Cref{tab:kv-group-summary} summarizes the top-10 (layer, KV-group) cells across all six GQA models.
Two patterns emerge.
First, layer concentration is family-dependent.
The Qwen3 family places every top-10 cell in the upper quartile of layers across all three scales (29--34 of 36, 30--38 of 40, and 54--62 of 64), and Gemma-3-27B is concentrated in the upper third (41--53 of 62).
Gemma-3-12B and OLMo-3.1-32B do not show comparable upper-layer concentration: their top-10 cells span 0--47 of 48 and 27--57 of 64 respectively.
Second, KV-group spread is roughly at chance.
The number of distinct KV groups containing top-10 cells (6--8) is at or just above the uniform-allocation expectation ($\approx 5.9$ for 8-group models, $\approx 7.6$ for 16-group models), so the KV-group dimension does not concentrate under \methodnameabbrev in any of the six models.
The evidence is six models from three families on one detector and one benchmark; the family-dependent layer pattern in particular warrants replication on additional benchmarks before generalization.

\begin{table}[h]
\centering
\small
\caption{\textbf{Spatial summary of the top-10 (layer, KV-group) \methodnameabbrev cells on NoLiMa.}
Listed at KV-group granularity, since query heads in a group share keys and values. The rightmost column reports the number of distinct KV groups containing at least one top-10 cell; under uniform allocation of 10 cells the expected count is $\approx 5.9$ for 8-group models and $\approx 7.6$ for 16-group models.}
\label{tab:kv-group-summary}
\resizebox{\linewidth}{!}{%
\begin{tabular}{@{}lcccc@{}}
\toprule
Model & Layers $\times$ KV-groups & Top-10 layer span & Top-10 KV groups & Distinct KV groups in top-10 \\
\midrule
Qwen3-8B      & $36 \times 8$  & layers 29--34 & $\{0, 1, 2, 3, 4, 7\}$         & $6 / 8$  \\
Qwen3-14B     & $40 \times 8$  & layers 30--38 & $\{1, 2, 3, 5, 6, 7\}$         & $6 / 8$  \\
Qwen3-32B     & $64 \times 8$  & layers 54--62 & $\{1, 2, 3, 5, 6, 7\}$         & $6 / 8$  \\
OLMo-3.1-32B  & $64 \times 8$  & layers 27--57 & $\{0, 1, 2, 3, 4, 5, 6\}$      & $7 / 8$  \\
Gemma-3-12B   & $48 \times 8$  & layers 0--47  & $\{0, 1, 2, 4, 5, 6\}$         & $6 / 8$  \\
Gemma-3-27B   & $62 \times 16$ & layers 41--53 & $\{0, 3, 4, 5, 6, 7, 10, 13\}$ & $8 / 16$ \\
\bottomrule
\end{tabular}
}
\end{table}

\textbf{The layer pattern is detector- and metric-conditional.}
The late-layer concentration in \cref{fig:kv-group-heatmap} is not evidence that retrieval circuitry lives only in late layers.
Two alternative explanations remain unaddressed by the figure.
First, the per-position score (\cref{eq:logit-contribution}) is a direct-path projection onto the answer-token unembedding.
Late-layer heads sit closer to the unembedding with fewer downstream non-linearities to redirect their output, so their direct-path projections are systematically larger as a property of the metric, irrespective of whether they implement retrieval.
The tuned-lens variant in \cref{fig:direct-path} addresses this concern at head granularity for Gemma-3-27B; we have not re-run it at KV-group granularity for the two Qwen3 models.
Second, the same KV-group $\times$ layer view computed from Wu (NIAH) attention scores places top cells across a wider layer range, so a ``where retrieval lives'' claim made from \cref{fig:kv-group-heatmap} alone would not survive a different detector.
\cref{fig:kv-group-heatmap} is therefore descriptive of \methodnameabbrev scores at KV-group granularity, not a localization of retrieval circuitry.
Whether these cells implement retrieval rather than some other late-layer behavior is settled by the head-level NoLiMa ablation in \cref{sec:ablation-comparison} and the specificity controls in \cref{sec:specificity}, not by this figure.

\textbf{Implication for KV cache compression.}
The unit of KV cache memory in GQA is the (layer, KV-group) cell, so concentration at this granularity is what saves cache memory; head-level concentration does not, because query heads in a group share keys and values.
A compression policy could keep full-resolution K/V for the small set of top-scoring cells (10 of 288 in Qwen3-8B and 10 of 320 in Qwen3-14B) and apply heavier quantisation, eviction, or window truncation to the rest.
The lower three quarters of layers contain no top-10 cell in either model.
This gives a selection criterion at finer granularity than the per-layer eviction policies of \citet{zhang2023h2o}, \citet{li2024snapkv}, and \citet{xiao2024streamingllm}; whether it produces a better quality--budget trade-off is the experiment we do not run.
\citet{xiao2025duoattention} make the closely related argument that a small set of retrieval heads should retain full KV cache while the rest use streaming attention; the KV-group view sharpens this argument for GQA models, where retention saves cache only at the group level.

\textbf{Caveat for compression.}
The late-layer pattern is detector- and metric-conditional, so a policy derived from \methodnameabbrev alone risks over-compressing mid-layer KV that other detectors flag as relevant.
A practical heuristic should pool signal from multiple retrieval-head detectors rather than rely on a single score.
We do not run a KV-compression experiment in this paper, and promoting the implication above to a claim would require, at minimum, top-$k$ versus random KV-group eviction on long-context perplexity, NIAH, and NoLiMa, with a held-out suite (\eg parametric recall, arithmetic) to check that the policy leaves non-retrieval capabilities intact.

\section{Six-Model Versions of Main-Text Ablation Figures}
\label[appendix]{app:six-model-figures}

The ablation figures in \cref{sec:bottom-k,sec:specificity,sec:literal-vs-nonliteral} show three representative models (one per family). This appendix reports the full six-model versions. The conclusions stated in the main text are based on all six configurations.

\begin{figure}[h]
  \centering
  \includegraphics[width=\linewidth]{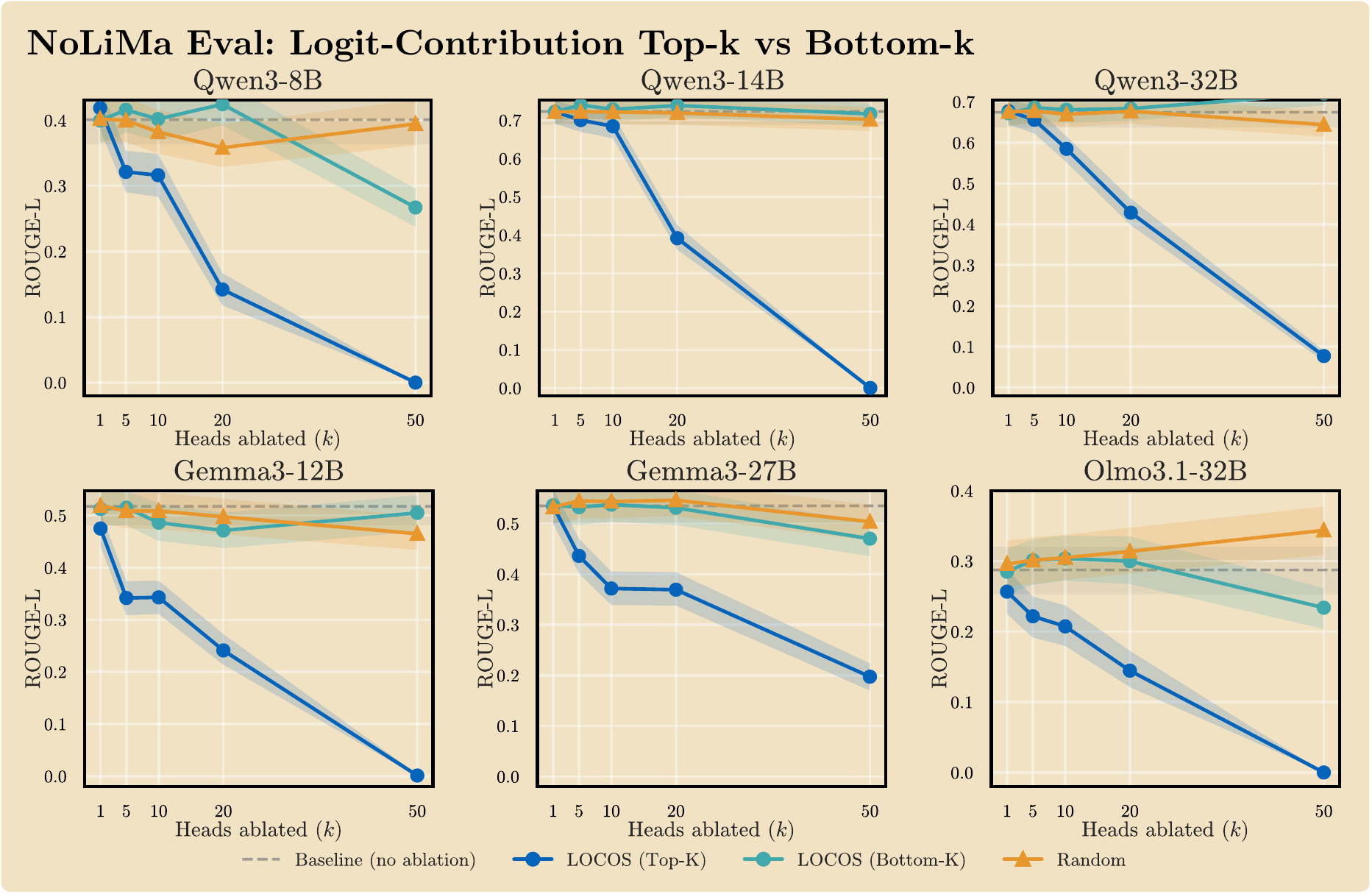}
  \caption{\textbf{Bottom-$k$ ablation does not degrade retrieval (six-model version of \cref{fig:ablation-bottomk}).}
  Each panel shows NoLiMa ROUGE-L as a function of ablation depth~$k$ for top-$k$ (blue), bottom-$k$ (purple), and random heads (red) across all six models.}
  \label{fig:ablation-bottomk-6models}
\end{figure}

\begin{figure}[h]
\centering
\includegraphics[width=\linewidth]{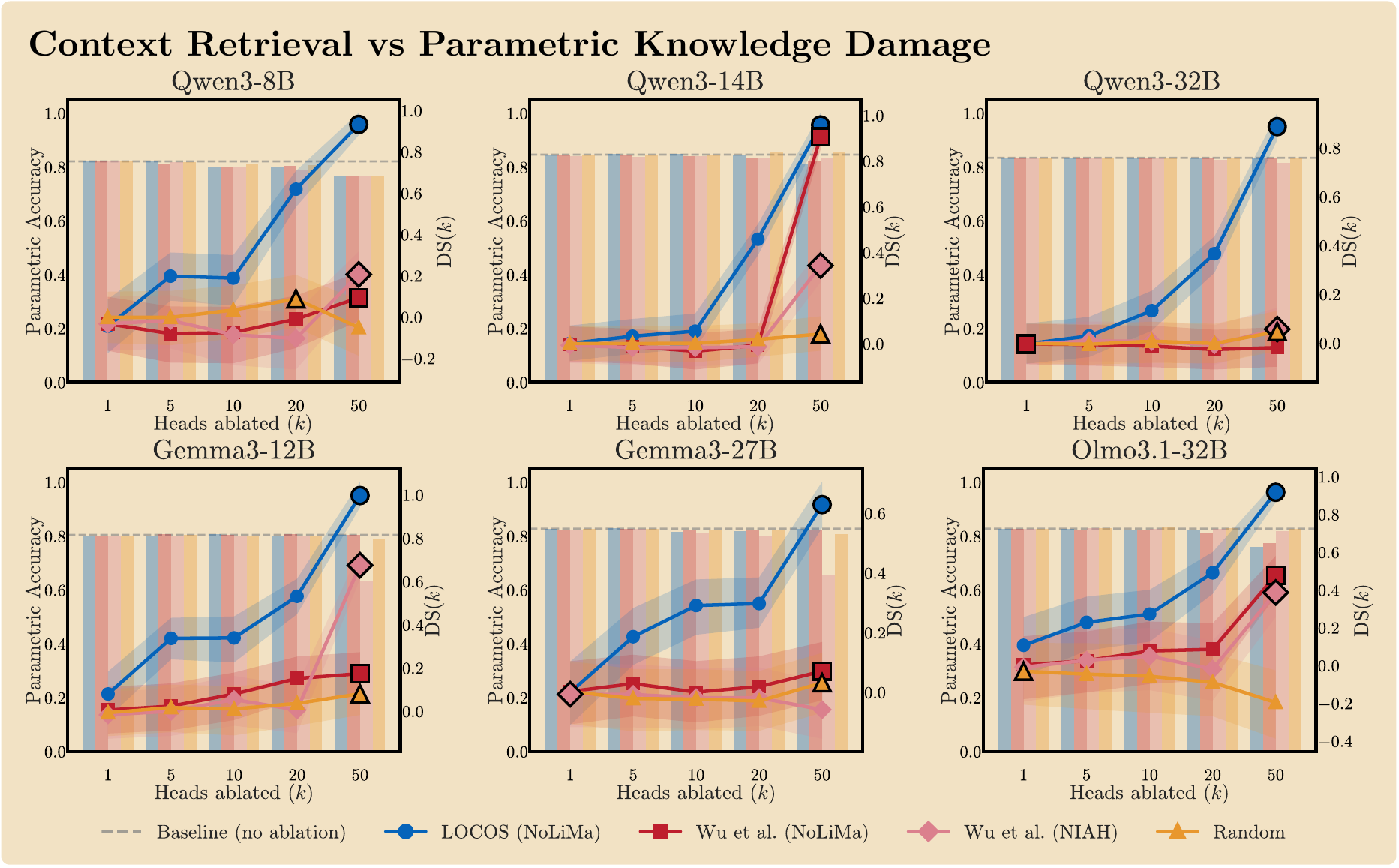}
\caption{\textbf{Functional dissociation between retrieval and parametric capabilities across all six models (six-model version of \cref{fig:fds}).}
Each panel shows DS$(k)$ (lines, right axis) and parametric accuracy (bars, left axis) as a function of ablation depth $k$ for four scoring methods.
\methodnameabbrev (blue) achieves the highest DS in every model configuration; the enlarged marker indicates $k^*$.}
\label{fig:fds-6models}
\end{figure}

\begin{figure}[h]
\centering
\includegraphics[width=\linewidth]{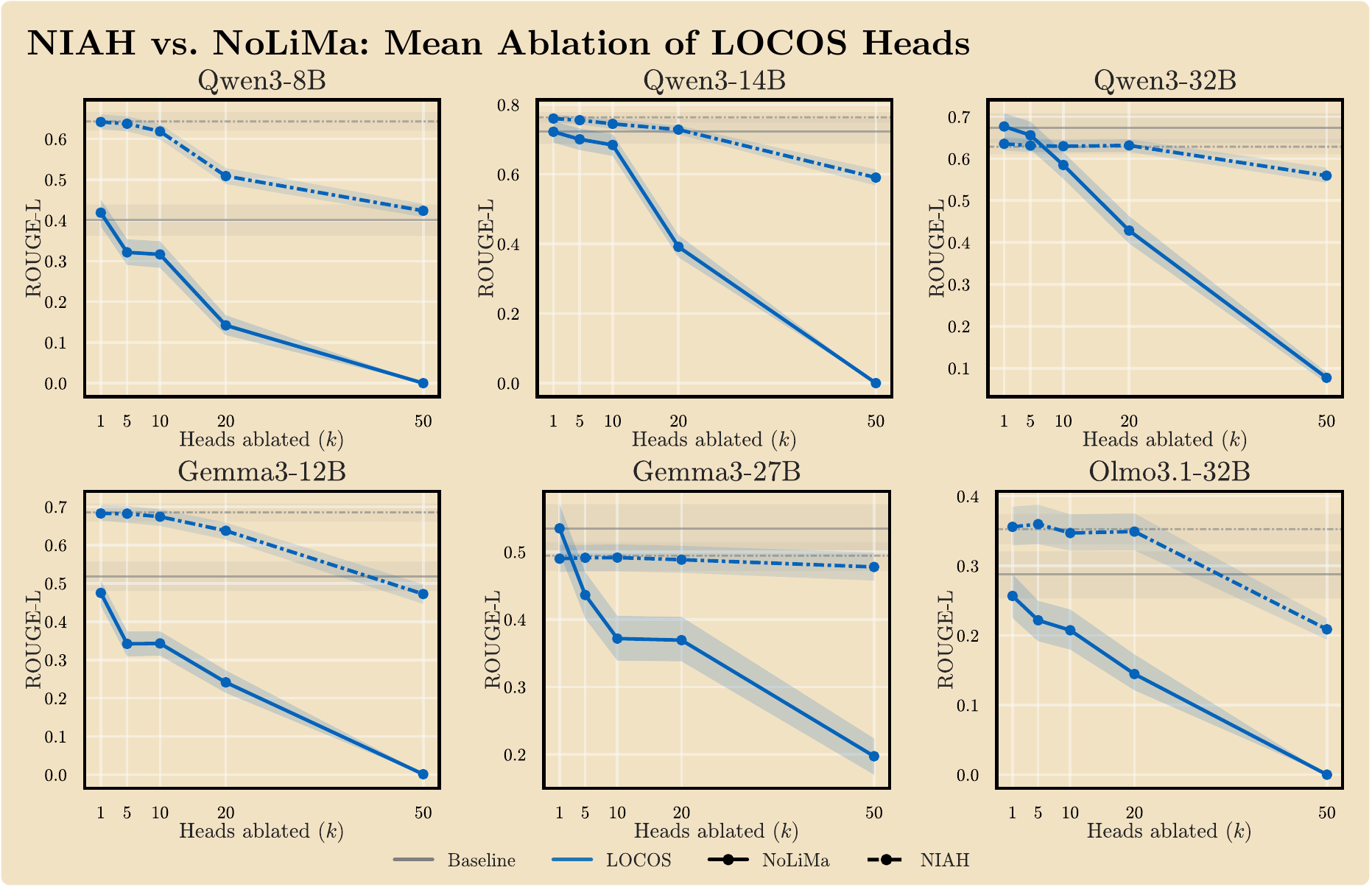}
\caption{\textbf{Non-literal vs.\ literal retrieval damage across all six models (six-model version of \cref{fig:ablation-niah-vs-nolima}).}
Each panel shows ROUGE-L on NoLiMa (solid blue) and standard NIAH (dashed blue) under mean-ablation of the same top-$k$ \methodnameabbrev heads, with the NoLiMa and NIAH baselines marked by solid and dashed gray lines.
The NoLiMa curve declines more steeply than the NIAH curve in every configuration.}
\label{fig:ablation-niah-vs-nolima-6models}
\end{figure}

\section{Downstream Benchmark Example}
\label[appendix]{app:downstream-example}

\prompt{BABILong qa2 (truncated)}{
\textbf{Context:} Mary journeyed to the bathroom. Sandra went to the garden. [\ldots] Daniel journeyed to the bedroom. [\ldots] Daniel took the football there. [\ldots] Daniel dropped the football. [\ldots] Daniel grabbed the football there. [\ldots] Daniel went to the kitchen.\\
\textbf{Question:} Where is the football?\\
\textbf{Answer:} kitchen
}

The example above illustrates why literal token copying is insufficient for BABILong: the correct location (``kitchen'') is the \emph{most recent} of many positions where the entity and object appear, requiring the model to trace a trajectory across interleaved narrative rather than match a unique token.

\newpage

\section{Direct-Path Robustness via Tuned Lens}
\label[appendix]{app:direct-path}

This appendix derives the direct-path identity that motivates the per-position score $\phi^{(l,h)}_{t,j}$ in \cref{eq:logit-contribution}, characterizes what the projection misses, and shows how a tuned-lens variant addresses the bias that affects early-layer heads.

\subsection{The direct-path identity}
\label[appendix]{app:direct-path:identity}

Let $\mathbf{x}^{(l)}_t \in \mathbb{R}^d$ denote the residual stream at layer $l$ and decode step $t$. A standard transformer block updates the residual stream additively,
\begin{equation}\label{eq:residual-update}
\mathbf{x}^{(l)}_t \;=\; \mathbf{x}^{(l-1)}_t \;+\; \sum_{h=1}^{H} \mathbf{a}^{(l,h)}_t \;+\; \mathbf{m}^{(l)}_t,
\end{equation}
where $\mathbf{a}^{(l,h)}_t = \sum_j \mathbf{o}^{(l,h)}_{t,j}$ is the head-$(l,h)$ output (\cref{eq:per-position-output}) and $\mathbf{m}^{(l)}_t$ is the MLP output of layer $l$. We write $\sigma_l(\cdot)$ for the layer-norm rescaling that precedes the unembedding (\eg the final RMSNorm). The next-token logits at step $t$ are
\begin{equation}\label{eq:logit-readout}
\boldsymbol{\ell}_t \;=\; W_U \, \sigma_L\!\bigl(\mathbf{x}^{(L)}_t\bigr) \;\in\; \mathbb{R}^{|\mathcal{V}|}.
\end{equation}

Unrolling \cref{eq:residual-update} from layer $1$ to $L$ and substituting into \cref{eq:logit-readout},
\begin{equation}\label{eq:logit-decomposition}
\boldsymbol{\ell}_t \;=\; W_U\, \sigma_L\!\Bigl( \mathbf{x}^{(0)}_t + \sum_{l=1}^{L}\Bigl[ \sum_{h=1}^{H} \sum_{j=1}^{N_t} \mathbf{o}^{(l,h)}_{t,j} + \mathbf{m}^{(l)}_t \Bigr] \Bigr).
\end{equation}
\Cref{eq:logit-decomposition} is exact. Because $\sigma_L$ is non-linear in general, the contribution of any single $\mathbf{o}^{(l,h)}_{t,j}$ to $\boldsymbol{\ell}_t$ is not additively separable.

\paragraph{The direct-path approximation.} Linearizing $\sigma_L$ at $\mathbf{x}^{(L)}_t$ gives $\sigma_L(\mathbf{x}^{(L)}_t + \mathbf{o}^{(l,h)}_{t,j}) - \sigma_L(\mathbf{x}^{(L)}_t) \approx J_{\sigma_L}(\mathbf{x}^{(L)}_t)\, \mathbf{o}^{(l,h)}_{t,j}$, so the contribution of $\mathbf{o}^{(l,h)}_{t,j}$ to the logit of token $y$ is approximately $\mathbf{u}_y^\top J_{\sigma_L}(\mathbf{x}^{(L)}_t) \mathbf{o}^{(l,h)}_{t,j}$. The \emph{direct-path score} replaces $J_{\sigma_L}$ with the identity, yielding
\begin{equation}\label{eq:direct-path-score}
\phi^{(l,h)}_{t,j} \;\stackrel{\mathrm{def}}{=}\; \mathbf{u}_{y_t}^\top\, \mathbf{o}^{(l,h)}_{t,j}.
\end{equation}
This is the score used throughout the main text (\cref{eq:logit-contribution}). The omitted Jacobian acts close to a per-step rescaling for the RMSNorm models we evaluate (Qwen3, Gemma-3, OLMo-3 are all RMSNorm); we treat the substitution as an approximation, not an identity, and validate it empirically via the tuned-lens variant in \cref{app:direct-path:tunedlens,app:direct-path:empirical}.%

\subsection{What the direct path misses}
\label[appendix]{app:direct-path:misses}

The error in the direct-path score (\cref{eq:direct-path-score}) relative to the exact contribution decomposes into three sources, in increasing order of severity for early-layer heads:

\begin{enumerate}[nosep,leftmargin=*]
    \item \textbf{LayerNorm rescaling.} The final $\sigma_L$ rescales each direction by $\gamma / \rho(\mathbf{x}^{(L)}_t)$. This is approximately constant across heads at a given step but varies across steps; it does not change rankings within a step but inflates magnitudes.
    \item \textbf{Downstream attention re-mixing.} A head $(l, h)$ at layer $l < L$ writes into the residual stream that is read by all heads at layers $l+1, \ldots, L$. If a downstream head $(l', h')$ with $l' > l$ amplifies or cancels the answer-aligned component of $\mathbf{o}^{(l,h)}_{t,j}$, the direct-path score under-counts (or, in the cancellation case, can sign-flip relative to) the true causal contribution.
    \item \textbf{MLP composition.} MLP sublayers between layer $l$ and $L$ apply non-linear transformations (gated activations, GELU/SwiGLU). A retrieval head whose output only becomes answer-aligned after a downstream MLP read--write pair is invisible to any linear probe of $\mathbf{x}^{(l)}_t$.
\end{enumerate}

Sources (1) and (2) are linear in $\mathbf{o}^{(l,h)}_{t,j}$ and are therefore correctable in principle by a learned linear lens. Source (3) is non-linear and is fundamentally outside the reach of any linear probe.

\subsection{Tuned-lens formalism}
\label[appendix]{app:direct-path:tunedlens}

The tuned lens of \citet{belrose2025tunedlens} learns, for each layer $l$, an affine map $T_l : \mathbb{R}^d \to \mathbb{R}^d$ such that $W_U\, T_l(\mathbf{x}^{(l)}_t)$ approximates the model's true next-token logits $\boldsymbol{\ell}_t$. Concretely, $T_l(\mathbf{x}) = A_l \mathbf{x} + b_l$, with $(A_l, b_l)$ trained by minimizing the KL divergence between $\mathrm{softmax}(W_U T_l(\mathbf{x}^{(l)}_t))$ and the true output distribution on a held-out corpus. The tuned lens absorbs sources (1) and (2) of \cref{app:direct-path:misses} into the learned linear map; source (3) remains uncaptured.

\paragraph{Lens-corrected score.} Substituting the tuned lens for the identity in \cref{eq:direct-path-score} yields a layer-aware score
\begin{equation}\label{eq:tuned-lens-score}
\phi^{(l,h),\mathrm{TL}}_{t,j} \;\stackrel{\mathrm{def}}{=}\; \mathbf{u}_{y_t}^\top\, A_l\, \mathbf{o}^{(l,h)}_{t,j} \;+\; \text{constant in } j,
\end{equation}
where the bias $b_l$ contributes a $j$-independent term that cancels under the spatial contrast (\cref{eq:needle-logit}). Writing $\tilde{\mathbf{p}}^{(l,h)} = (W_O^{(l,h)})^\top A_l^\top \mathbf{u}_{y_t}$, the lens-corrected per-position score takes the same factored form as the direct-path score:
\begin{equation*}
\phi^{(l,h),\mathrm{TL}}_{t,j} \;=\; \alpha^{(l,h)}_{t,j} \cdot \bigl(\mathbf{v}^{(l,h)}_{t,j}\bigr)^\top \tilde{\mathbf{p}}^{(l,h)} \;+\; \text{const.},
\end{equation*}
so the same precomputation-and-batched-inner-product implementation applies; only the projection vector $\tilde{\mathbf{p}}$ changes.

\paragraph{What the lens does not fix.} \Cref{eq:tuned-lens-score} replaces the direct-path Jacobian by a layer-aware linear approximation, but it cannot capture any non-linear composition (source (3)). A head whose write only becomes answer-aligned after passing through a downstream MLP scores low under both the direct-path and tuned-lens variants. Distinguishing a genuinely causally inert head from one whose contribution is hidden by downstream non-linearity requires a non-linear probe, of which the canonical example is causal activation patching~\citep{wang2023interpretability,meng2022locating}.

\subsection{Empirical comparison}
\label[appendix]{app:direct-path:empirical}

\begin{wrapfigure}{r}{0.5\textwidth}
  \centering
  \vspace{-3.5em}
  \includegraphics[width=0.5\textwidth]{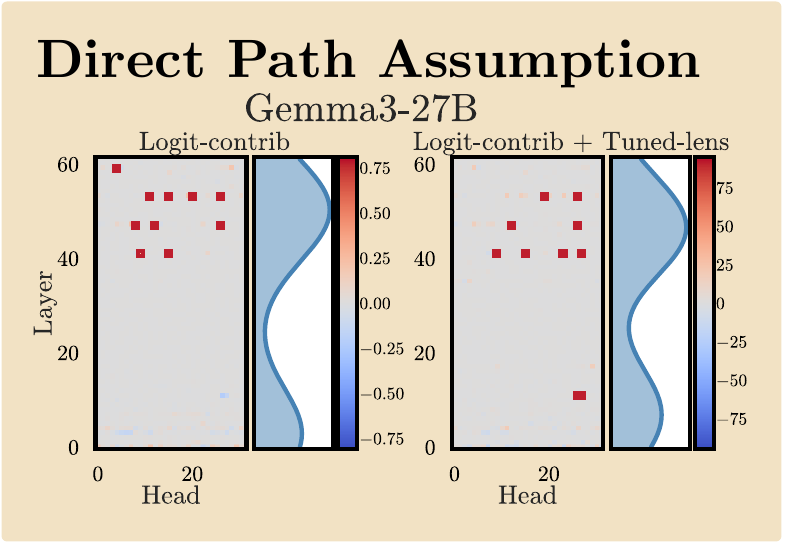}
  \caption{\textbf{Late-layer concentration persists under tuned-lens projection.}
  Heatmaps for Gemma-3-27B: direct-path \methodnameabbrev (left) vs.\ tuned-lens variant (right). Both methods concentrate high-scoring heads in layers 35--60; the layer-marginal distributions peak in the same band. The tuned-lens variant surfaces two additional heads at layer~11 (heads~26 and~27) that do not appear in the direct-path top-$k$ set, but does not produce a broader redistribution toward earlier layers. The score magnitudes differ ($\pm 75$ vs.\ $\pm 0.75$) because the learned affine map amplifies contributions by accounting for downstream transformations.}
  \label{fig:direct-path}
  \vspace{-1.5em}
\end{wrapfigure}

\Cref{fig:direct-path} compares layer$\times$head score heatmaps for Gemma-3-27B under \cref{eq:direct-path-score} and \cref{eq:tuned-lens-score}. The agreement on the late-layer band ($l \in [35, 60]$) suggests that the late-layer concentration reflects retrieval structure rather than a systematic underestimation of early-layer heads by the direct-path projection. Two layer-11 heads (26 and 27) surface only under the tuned lens; we treat these as candidate early-layer retrieval heads that the direct-path score under-ranks but the linear correction recovers.

\clearpage

\subsection{Gemma-3-27B inversion}
\label[appendix]{app:direct-path:gemma-inversion}

\begin{wrapfigure}{r}{0.5\textwidth}
  \centering
  \vspace{-1.5em}
  \includegraphics[width=\linewidth]{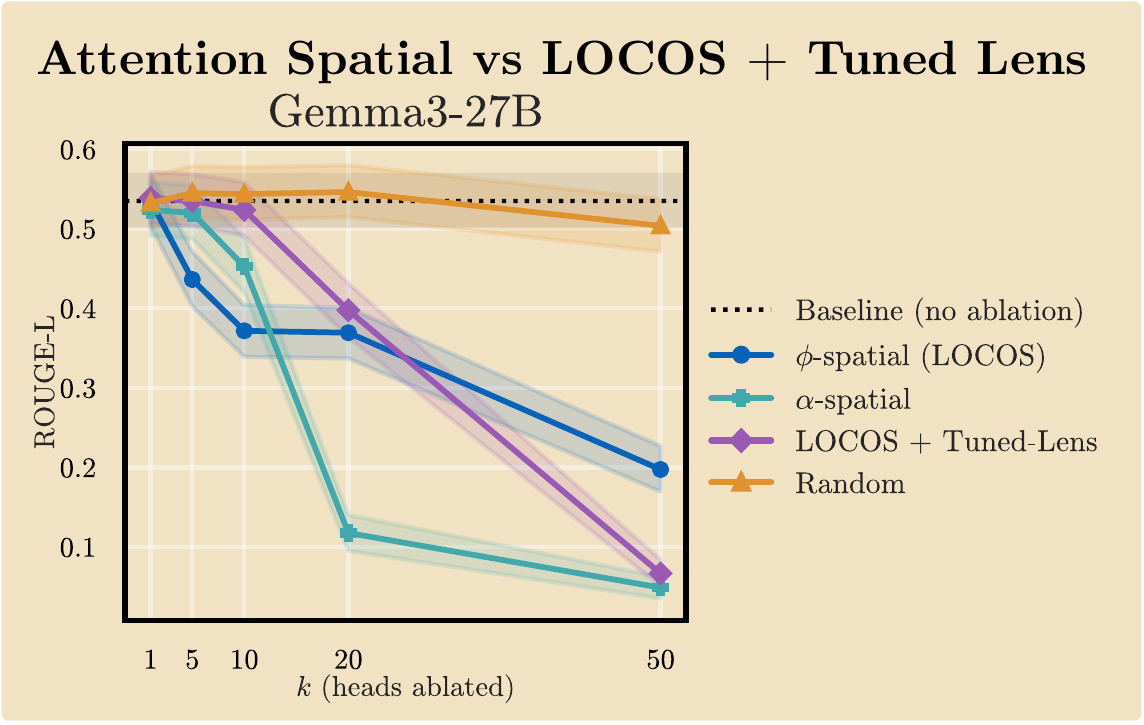}
  \caption{\textbf{Tuned-lens correction only partly resolves the Gemma-3-27B inversion.}
  NoLiMa ROUGE-L under mean-ablation of top-$k$ heads ranked by direct \methodnameabbrev, the attention-only spatial-contrast control, and the tuned-lens-corrected \methodnameabbrev variant on Gemma-3-27B.
  The tuned-lens variant closes much of the gap with attention-only scoring at large $k$, but direct \methodnameabbrev selects the most damaging heads at small $k$.}
  \label{fig:gemma-inversion-tuned-lens}
  \vspace{-1.0em}
\end{wrapfigure}

Replacing the direct-path projection with the tuned-lens readout largely closes the gap with the attention-only control at large $k$ on Gemma-3-27B (\cref{fig:gemma-inversion-tuned-lens}).
At small $k$, however, direct \methodnameabbrev selects more damaging heads than either $\alpha$-spatial scoring or the tuned-lens variant.
Thus, the Gemma-3-27B anomaly is not cleanly explained as a direct-path readout artifact: correcting the readout helps only at the deepest sweep and worsens the top-ranked heads.
This suggests that, for this model, spatial attention contrast captures a causal signal that the evaluated linear write-projections do not rank consistently (\cref{app:direct-path:misses}).
For Gemma-3-27B, attention placement under spatial contrast carries more causal signal than any linear write-projection we evaluate.

\subsection{Beyond linear probes}
\label[appendix]{app:direct-path:beyond}

A linear probe (direct path or tuned lens) cannot detect a head whose contribution to $y_t$ materializes only through MLP composition. Per-head causal activation patching is the canonical gradient-free, non-linear-aware test: for each head $(l, h)$, run a clean forward pass on the needle prompt and a corrupt forward pass with the needle removed or scrambled, then patch only the $(l, h)$ activation from the clean run into the corrupt run and measure the change in logit difference at the answer position,
\begin{equation}\label{eq:causal-attribution}
    \mathrm{CA}(l, h) \;=\; \bigl[\ell^{\mathrm{patch}}_{y^{*}} - \ell^{\mathrm{patch}}_{y^{\mathrm{cf}}}\bigr] - \bigl[\ell^{\mathrm{corr}}_{y^{*}} - \ell^{\mathrm{corr}}_{y^{\mathrm{cf}}}\bigr],
\end{equation}
where $y^{*}$ is the gold first answer token and $y^{\mathrm{cf}} = \arg\max_{y \neq y^{*}} \ell^{\mathrm{corr}}_{y}$ is the strongest non-gold competitor under the corrupt baseline. We refer to this score as \emph{causal attribution} (CA).

\begin{figure}[h]
  \centering
  \includegraphics[width=\linewidth]{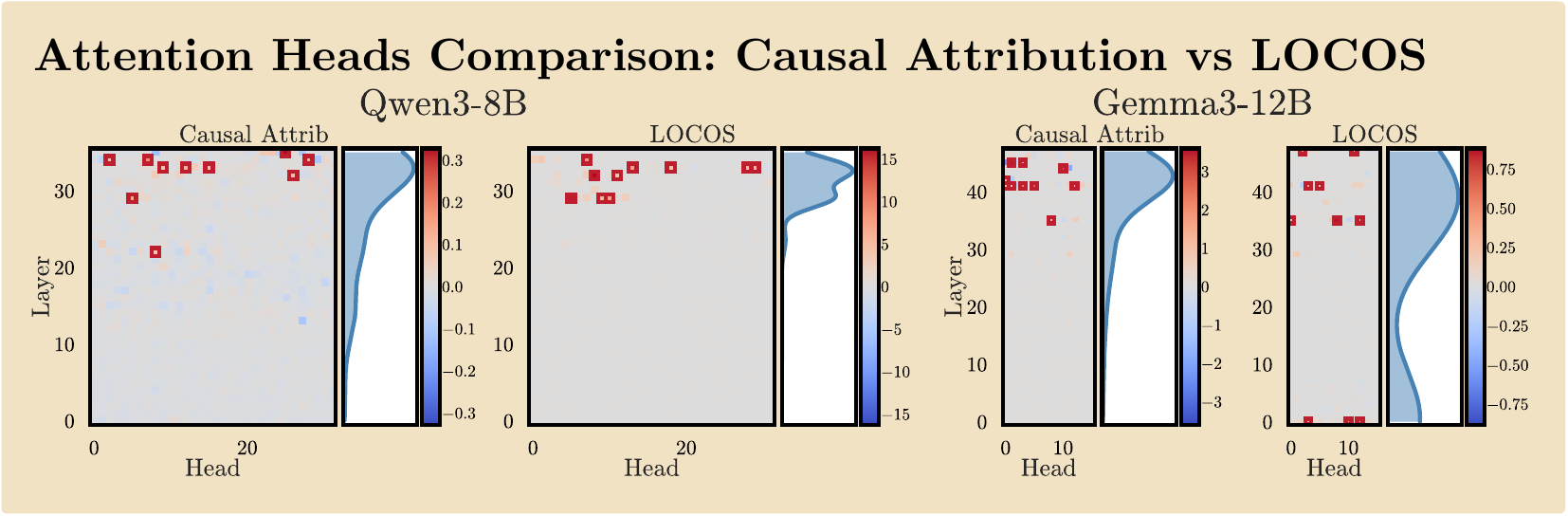}
  \caption{\textbf{Causal attribution vs.\ \methodnameabbrev top-10 heads on Qwen3-8B and Gemma-3-12B.}
  Per-(layer, head) score heatmaps with red boxes marking each method's top-10 cells; layer-marginal kernel densities on the right of each panel.
  Both methods concentrate top-10 heads in the upper layers in both models, but the top-10 sets overlap only marginally (2/10 for Qwen3-8B, 3/10 for Gemma-3-12B).
  On Gemma-3-12B, \methodnameabbrev surfaces several layer-0 heads that causal attribution does not flag.}
  \label{fig:ca-vs-lc}
\end{figure}

\textbf{Empirical comparison on two models.}
\Cref{fig:ca-vs-lc} reports causal attribution and \methodnameabbrev side by side for Qwen3-8B and Gemma-3-12B.
The shared finding is that the late-layer concentration is not a direct-path artifact: causal attribution---which is non-linear-aware and gradient-free---also places its top-10 heads predominantly in the upper layers in both models (layers 22--35 for Qwen3-8B, concentrated in 33--35; layers 35--45 for Gemma-3-12B, concentrated near 41).
\methodnameabbrev's top-10 sets sit in similar bands (29--34 for Qwen3-8B; layers spanning 0, 35, 41, 47 for Gemma-3-12B).
Top-10 overlap is small (2/10 and 3/10), which is consistent with the two scores capturing different aspects of retrieval-related computation rather than one being a strict refinement of the other.

\textbf{Where the methods diverge, and why.}
The notable disagreement on Gemma-3-12B is that \methodnameabbrev surfaces heads at layer~0 that causal attribution does not.
Two explanations are consistent with the figure and we cannot distinguish them here.
First, layer-0 heads may be a direct-path artifact: their write enters the residual stream before any downstream MLP can either amplify or cancel the answer-aligned component, so the linear projection onto $\mathbf{u}_{y_t}$ records a magnitude that the model itself does not realize at the output (sources (2)--(3) of \cref{app:direct-path:misses}).
Second, layer-0 heads may participate in distributed circuits where ablating any single head leaves performance intact (because parallel paths compensate), so causal attribution---a single-head intervention---under-counts them while \methodnameabbrev correctly registers each head's contribution to the linear readout.
The bottom-$k$ control (\cref{sec:bottom-k}) and the top-$k$ ablation curves (\cref{fig:fds}) bound the issue at the level of method validation: \methodnameabbrev-selected top heads, including any artifactual ones, are causally critical \emph{collectively} under group ablation even if individual heads are not under single-head patching.

\textbf{Scope.}
The comparison is two models from two families on one benchmark with one alternative detector.
Causal attribution is also expensive: it requires one additional forward pass per head per trial (a clean--corrupt patching pair scaled by the number of heads, $H_Q \cdot L$ per trial), so the cost grows linearly with the head count and is roughly two orders of magnitude above a single \methodnameabbrev pass for the models we evaluate.
We therefore do not run it on the other four models or on NIAH/parametric controls; running it across the full evaluation matrix is left to future work.
The takeaway we draw is narrow: the late-layer concentration of \methodnameabbrev top heads is corroborated by a non-linear-aware detector on the two models for which we have both scores, and the residual disagreement (notably the Gemma-3-12B layer-0 heads) is a candidate site for future single-head causal validation rather than a refutation of the present paper's group-ablation results.

\section{Relationship to Attention-Based Scoring}
\label[appendix]{app:relationship-attention}

This appendix gives a self-contained derivation of how \methodnameabbrev relates to the attention-pattern observable used by prior detection methods~\citep{wu2025retrieval,fu2025headkv,lin2025compresskv}. \Cref{app:rel:setup} fixes notation; \cref{app:rel:decomposition} decomposes the per-position OV output into answer-aligned and answer-orthogonal components; \cref{app:rel:reduction} states and proves the reduction of \methodnameabbrev to attention-based scoring under a position-independent OV output; \cref{app:rel:wu,app:rel:headkv} cast the Wu and HeadKV/CompressKV scores as special cases under additional assumptions; \cref{app:rel:spatial-vs-temporal} contrasts spatial and temporal aggregation; \cref{app:rel:gradient} gives the gradient interpretation; and \cref{app:rel:summary} consolidates the results.

\subsection{Setup}
\label[appendix]{app:rel:setup}

We work with the standard pre-softmax decomposition of a transformer attention head~\citep{elhage2021mathematical}. At decode step $t$, head $(l, h)$ produces query $\mathbf{q}^{(l,h)}_t$ and, for each source position $j \in \{1, \ldots, N_t\}$, key $\mathbf{k}^{(l,h)}_{t,j}$ and value $\mathbf{v}^{(l,h)}_{t,j} \in \mathbb{R}^{d_h}$. The QK circuit produces
\begin{equation}\label{eq:qk-softmax}
\alpha^{(l,h)}_{t,j} \;=\; \frac{\exp\bigl(\mathbf{q}^{(l,h){\top}}_t \mathbf{k}^{(l,h)}_{t,j} / \sqrt{d_h}\bigr)}{\sum_{j'=1}^{N_t} \exp\bigl(\mathbf{q}^{(l,h){\top}}_t \mathbf{k}^{(l,h)}_{t,j'} / \sqrt{d_h}\bigr)},
\end{equation}
and the OV circuit writes $\mathbf{o}^{(l,h)}_{t,j} = \alpha^{(l,h)}_{t,j} \cdot W_O^{(l,h)} \mathbf{v}^{(l,h)}_{t,j}$ to the residual stream (\cref{eq:per-position-output}). The full head output is $\mathbf{a}^{(l,h)}_t = \sum_{j=1}^{N_t} \mathbf{o}^{(l,h)}_{t,j}$.

For brevity, we drop the $(l, h)$ superscripts in this appendix when no ambiguity arises and write $\alpha_j$, $\mathbf{v}_j$, $\mathbf{w}_j \stackrel{\mathrm{def}}{=} W_O \mathbf{v}_j$, and $\mathbf{u} \stackrel{\mathrm{def}}{=} \mathbf{u}_{y_t}$. With this notation the per-position score \cref{eq:logit-contribution} is $\phi_j = \alpha_j \cdot \mathbf{u}^\top \mathbf{w}_j$.

\subsection{Parallel--orthogonal decomposition of the OV output}
\label[appendix]{app:rel:decomposition}

Decompose each unweighted OV output $\mathbf{w}_j \in \mathbb{R}^d$ along $\mathbf{u}$ and orthogonal to it:
\begin{equation}\label{eq:parallel-perp}
\mathbf{w}_j \;=\; c_j \cdot \mathbf{u} \;+\; \mathbf{w}^{\perp}_j,
\qquad
c_j \;=\; \frac{\mathbf{u}^\top \mathbf{w}_j}{\|\mathbf{u}\|^2},
\qquad
\mathbf{u}^\top \mathbf{w}^{\perp}_j = 0.
\end{equation}
The scalar $c_j$ is the answer-aligned write magnitude at source position $j$; the residual $\mathbf{w}^{\perp}_j$ writes into directions orthogonal to the answer token. Substituting \cref{eq:parallel-perp} into the per-position score gives the identity
\begin{equation}\label{eq:phi-cj}
\phi_j \;=\; \alpha_j \cdot \|\mathbf{u}\|^2 \cdot c_j,
\end{equation}
which factorises cleanly into \emph{where} the head reads ($\alpha_j$) and \emph{what answer-aligned content} it writes from that position ($c_j$). The factor $\|\mathbf{u}\|^2$ is constant across heads and positions and cancels under any cross-head ranking, so we may treat $\phi_j \propto \alpha_j c_j$ for the purposes of head selection.

\begin{lemma}[Sufficient statistic]
\label{lemma:sufficient}
For any score that aggregates per-position OV writes against the answer direction $\mathbf{u}$, the pair $(\alpha_j, c_j)$ is sufficient: any answer-orthogonal component $\mathbf{w}^\perp_j$ contributes zero.
\end{lemma}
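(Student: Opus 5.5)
The plan is to first pin down what ``a score that aggregates per-position OV writes against the answer direction $\mathbf{u}$'' means, since that phrase carries all the content of the lemma. I will take it to mean any functional $F$ whose dependence on the head's writes passes only through the scalars $\mathbf{u}^\top \mathbf{o}_j = \alpha_j\,\mathbf{u}^\top \mathbf{w}_j$, $j=1,\ldots,N_t$. It may also depend on data that do not involve the writes, such as the span $[s_\tau,e_\tau)$, $N_t$, and the answer steps $\mathcal{A}^\tau$. This class contains $\phi_j$ in \cref{eq:logit-contribution}, the needle and off-needle sums $\Phi^{\pm}_t$ in \cref{eq:needle-logit}, and the aggregate $S_{l,h}$ in \cref{eq:global-aggregate}. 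Each of these is a fixed combination of the $\phi_j$ with coefficients that depend only on positions and trial bookkeeping. The lemma then reduces to a statement about a single per-position scalar.

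The key step uses the decomposition \cref{eq:parallel-perp}. For each $j$, compute
\[
\mathbf{u}^\top \mathbf{w}_j = c_j\|\mathbf{u}\|^2 + \mathbf{u}^\top\mathbf{w}^\perp_j = c_j\|\mathbf{u}\|^2,
\]
which gives \cref{eq:phi-cj}: $\phi_j=\alpha_j\|\mathbf{u}\|^2 c_j$. The factor $\|\mathbf{u}\|^2$ is determined by $y_t$ alone, so it is not a free parameter of the head. Hence $\phi_j$ is a function of $(\alpha_j,c_j)$ only.

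From this, the sufficiency argument runs as follows. Take two heads, or two configurations of one head, whose writes $\mathbf{w}_j,\mathbf{w}'_j$ share the same $(\alpha_j,c_j)$ at every position but differ arbitrarily in $\mathbf{w}^\perp_j$. Their per-position scalars $\phi_j$ coincide, and so $F$ takes the same value on both. In particular, replacing $\mathbf{w}^\perp_j$ by $\mathbf{0}$ leaves $F$ unchanged, so the orthogonal component ``contributes zero.'' I will then instantiate this for $\Phi^{\pm}_t$ and $S_{l,h}$ explicitly. These are linear in the $\phi_j$ with write-independent coefficients, namely the indicator of $[s_\tau,e_\tau)$, the rescaling $(e_\tau-s_\tau)/(N_t-(e_\tau-s_\tau))$, and the $1/\sum|\mathcal{A}^\tau|$ normalisation. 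That gives $S_{l,h}=\|\mathbf{u}\|^2\cdot(\text{linear functional of }\alpha_j c_j)$, with the constant absorbed per step.

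The main obstacle is scoping rather than calculation. As written, the lemma is false if ``aggregates'' is allowed to include nonlinear functions of the full vector $\mathbf{o}_j$, such as its norm. The proof must therefore state the restriction to functionals of $\mathbf{u}^\top\mathbf{o}_j$ up front. A related subtlety is that $\mathbf{u}=\mathbf{u}_{y_t}$ varies with $t$. So $c_j$ must be read as step-dependent, and $\|\mathbf{u}_{y_t}\|^2$ does not cancel globally across steps inside $S_{l,h}$. It is head-independent, though, which is all that sufficiency needs. I will note this explicitly rather than claim full cancellation.
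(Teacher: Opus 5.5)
Your proposal is correct and takes the same route as the paper. The paper's proof is the one-line observation that $\mathbf{u}^\top\mathbf{w}^\perp_j=0$, combined with linearity of $\mathbf{u}^\top\mathbf{o}_j=\alpha_j\mathbf{u}^\top\mathbf{w}_j$, gives $\phi_j=\alpha_j\|\mathbf{u}\|^2c_j$. Two of your additions are accurate refinements of what the paper leaves implicit: you restrict explicitly to scores that see the writes only through $\mathbf{u}^\top\mathbf{o}_j$, and you note that $\|\mathbf{u}_{y_t}\|^2$ is head-independent but step-dependent, so it need not cancel globally inside $S_{l,h}$.
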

\begin{proof}
Immediate from $\mathbf{u}^\top \mathbf{w}^{\perp}_j = 0$ and the linearity of $\mathbf{u}^\top \mathbf{o}_j = \alpha_j \mathbf{u}^\top \mathbf{w}_j$.
\end{proof}

\Cref{lemma:sufficient} shows that any modification of $\phi$ that is linear in the residual stream and projects onto $\mathbf{u}$---for instance the tuned-lens variant of \cref{app:direct-path}---inherits the same factorisation, with $c_j$ replaced by a lens-corrected scalar.

\subsection{Reduction to attention-based scoring}
\label[appendix]{app:rel:reduction}

\begin{proposition}[Reduction under position-independent OV write]
\label{prop:reduction}
Suppose head $(l, h)$ writes a position-independent answer-aligned magnitude at step $t$, i.e., there exists a scalar $c_t$ such that
\begin{equation}\label{eq:position-independent}
c_j \;=\; c_t \quad \text{for every } j \in \{1, \ldots, N_t\}.
\end{equation}
Then the spatial-contrast score (\cref{eq:needle-logit}) reduces to
\begin{equation}\label{eq:reduction}
\Phi^{+}_t - \Phi^{-}_t \;=\; \|\mathbf{u}\|^2 \cdot c_t \cdot \Bigl(M^{+}_t \;-\; \tfrac{e_\tau - s_\tau}{N_t - (e_\tau - s_\tau)} \cdot M^{-}_t \Bigr),
\end{equation}
where $M^{+}_t = \sum_{j \in [s_\tau, e_\tau)} \alpha_j$ and $M^{-}_t = \sum_{j \notin [s_\tau, e_\tau)} \alpha_j$ are the needle and off-needle attention masses respectively. Up to the per-step scalar $\|\mathbf{u}\|^2 c_t$, \methodnameabbrev coincides with the length-normalized attention-mass contrast between needle and off-needle positions.
\end{proposition}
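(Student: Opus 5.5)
The proof is a direct substitution, so the plan is to push the identity \cref{eq:phi-cj} through the definitions in \cref{eq:needle-logit} and collect terms. First, fix head $(l,h)$, trial $\tau$, and answer step $t$, and drop superscripts as in \cref{app:rel:setup}. By \cref{eq:phi-cj}, every per-position score can be written as $\phi_j = \alpha_j \|\mathbf{u}\|^2 c_j$. The hypothesis \cref{eq:position-independent} then replaces $c_j$ by the step-level constant $c_t$ at every source position, needle and off-needle alike, so $\phi_j = \|\mathbf{u}\|^2 c_t \,\alpha_j$ for all $j \in \{1,\ldots,N_t\}$. By \cref{lemma:sufficient}, the orthogonal components $\mathbf{w}^\perp_j$ play no role, so nothing else needs to be tracked.

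Second, substitute this into the two sums in \cref{eq:needle-logit}. Since $\|\mathbf{u}\|^2 c_t$ does not depend on $j$, it factors out of both $\sum_{j=s_\tau}^{e_\tau-1}$ and $\sum_{j\notin[s_\tau,e_\tau)}$. This gives $\Phi^+_t = \|\mathbf{u}\|^2 c_t M^+_t$ and $\Phi^-_t = \|\mathbf{u}\|^2 c_t \cdot \tfrac{e_\tau-s_\tau}{N_t-(e_\tau-s_\tau)} M^-_t$. The rescaling factor is itself independent of $j$, so it passes through unchanged. Subtracting and factoring out $\|\mathbf{u}\|^2 c_t$ yields \cref{eq:reduction}.

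Finally, read off the interpretive claim. The bracket in \cref{eq:reduction} is exactly the length-normalized attention-mass contrast, which is the score obtained by substituting $\alpha$ for $\phi$ in \cref{eq:needle-logit} (the $S^{\text{att}}$ control of \cref{sec:ov-isolation}). Hence the two scores agree up to the per-step multiplier $\|\mathbf{u}\|^2 c_t$.

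There is no real obstacle. The only point needing care is the scope of ``coincides'': the multiplier depends on $t$ (through $y_t$ and $c_t$) and on the head, and its sign may be negative. So the equivalence holds per step as a scaling. After aggregation in \cref{eq:global-aggregate} it becomes a $c_t$-weighted average, and it preserves cross-head rankings only when $c_t$ is shared across heads and positive. I would state this caveat explicitly rather than claim more.
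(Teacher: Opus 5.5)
Your proposal is correct and follows the paper's proof: you substitute $\phi_j = \|\mathbf{u}\|^2 c_t\,\alpha_j$, obtained from the factorisation $\phi_j = \alpha_j \|\mathbf{u}\|^2 c_j$ and the hypothesis, into the definitions of $\Phi^{\pm}_t$, factor out the $j$-independent scalar, and subtract. Your closing caveat goes a little beyond the paper's proof but is accurate and consistent with the statement's ``up to the per-step scalar'' qualifier: the multiplier $\|\mathbf{u}\|^2 c_t$ varies with the step and the head and can be negative, so the equivalence holds per step and does not by itself carry over to the aggregated score $S_{l,h}$ or to cross-head rankings.
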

\begin{proof}
Under \cref{eq:position-independent}, \cref{eq:phi-cj} reads $\phi_j = \|\mathbf{u}\|^2 c_t \cdot \alpha_j$ with the scalar $\|\mathbf{u}\|^2 c_t$ independent of $j$. Substituting into the definitions of $\Phi^{\pm}_t$ from \cref{eq:needle-logit},
\begin{align*}
\Phi^{+}_t &= \|\mathbf{u}\|^2 c_t \sum_{j \in [s_\tau, e_\tau)} \alpha_j \;=\; \|\mathbf{u}\|^2 c_t \, M^{+}_t,\\
\Phi^{-}_t &= \tfrac{e_\tau - s_\tau}{N_t - (e_\tau - s_\tau)} \|\mathbf{u}\|^2 c_t \sum_{j \notin [s_\tau, e_\tau)} \alpha_j \;=\; \tfrac{e_\tau - s_\tau}{N_t - (e_\tau - s_\tau)} \|\mathbf{u}\|^2 c_t \, M^{-}_t.
\end{align*}
Subtracting yields \cref{eq:reduction}.
\end{proof}

\begin{corollary}[Sign of the reduction]
\label{cor:sign-reduction}
Under \cref{prop:reduction}, $\mathrm{sign}(\Phi^{+}_t - \Phi^{-}_t) = \mathrm{sign}(c_t) \cdot \mathrm{sign}\bigl(M^{+}_t - \tfrac{e_\tau - s_\tau}{N_t - (e_\tau - s_\tau)} M^{-}_t\bigr)$. A head that reads from the needle ($M^{+}_t$ exceeds the length-normalized off-needle mass) and writes a positive answer-aligned direction ($c_t > 0$) is correctly assigned a positive \methodnameabbrev score; a head that suppresses the answer logit ($c_t < 0$) receives a negative score.
\end{corollary}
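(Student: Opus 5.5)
The plan is to read the sign directly off the factored form in \cref{prop:reduction}. That proposition writes $\Phi^{+}_t - \Phi^{-}_t$ as a product of three scalars: $\|\mathbf{u}\|^2$, $c_t$, and the length-normalized mass contrast $D_t \stackrel{\mathrm{def}}{=} M^{+}_t - \tfrac{e_\tau - s_\tau}{N_t - (e_\tau - s_\tau)} M^{-}_t$. Because $\mathrm{sign}$ is multiplicative on the reals, $\mathrm{sign}(ab) = \mathrm{sign}(a)\,\mathrm{sign}(b)$, the task reduces to handling the factor $\|\mathbf{u}\|^2$.

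\textbf{Step 1: the factor $\|\mathbf{u}\|^2$.} I would first argue that $\|\mathbf{u}\|^2 > 0$, so that $\mathrm{sign}(\|\mathbf{u}\|^2) = 1$. This requires $\mathbf{u} = \mathbf{u}_{y_t} \neq \mathbf{0}$. That is implicit already, since $c_j$ in \cref{eq:parallel-perp} divides by $\|\mathbf{u}\|^2$. I expect this nondegeneracy to be the only point needing care. I would state it as a standing assumption: an unembedding row is never identically zero, and a zero row would make the decomposition undefined anyway. This step is the main (though minor) obstacle.

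\textbf{Step 2: apply multiplicativity.} With $\|\mathbf{u}\|^2 > 0$, multiplicativity gives $\mathrm{sign}(\Phi^{+}_t - \Phi^{-}_t) = \mathrm{sign}(c_t)\,\mathrm{sign}(D_t)$. This is the displayed identity. It also covers the degenerate cases $c_t = 0$ or $D_t = 0$, where both sides are zero.

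\textbf{Step 3: the two interpretive claims.} For the first claim, suppose the head reads from the needle, i.e.\ $D_t > 0$, and writes a positive answer-aligned direction, i.e.\ $c_t > 0$. Then the product of signs is $+1$, so the per-step contrast is positive.

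For the second claim, suppose $c_t < 0$. I would read it with the same needle-reading condition $D_t > 0$, and then the sign is $-1$. I would note explicitly that if $D_t < 0$ the sign flips back. So the statement about suppressing heads holds under the same reading hypothesis as the first clause.

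\textbf{Step 4: passing to the aggregated score.} Finally, I would remark that if these sign conditions hold at every answer step $t \in \mathcal{A}^\tau$ of every passing trial, then the positive-weight average in \cref{eq:global-aggregate} preserves the sign. Hence $S_{l,h}$ inherits the same sign.
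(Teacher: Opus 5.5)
Your proposal is correct and takes the same route the paper intends: the paper gives no separate proof and treats the corollary as immediate from the factored form in \cref{prop:reduction}, which is exactly your sign-multiplicativity argument once $\|\mathbf{u}\|^2>0$. Your additional care fills in what the paper leaves unstated: you note that $\mathbf{u}_{y_t}\neq\mathbf{0}$ is already implicit in the definition of $c_j$, you read the $c_t<0$ clause under the same needle-reading hypothesis (as literally stated it fails when the mass contrast is negative), and you make the passage from the per-step contrast to $S_{l,h}$ explicit.
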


\paragraph{Where the reduction holds.} \Cref{eq:position-independent} requires the head to write the same answer-aligned magnitude $c_t$ at every source position. This is the defining property of a literal-copy head: an induction head $[A][B]\ldots[A] \mapsto [B]$ writes the ``advance the previous token by one step'' direction independently of which $[A]$ position it attended to. In our notation, $\mathbf{w}_j$ depends on $j$ only through $\mathbf{v}_j$, and for a literal-copy head $\mathbf{u}^\top W_O \mathbf{v}_j$ takes the same value whenever $\mathbf{v}_j$ encodes the answer token.

\paragraph{Where the reduction fails.} For a non-literal retrieval head, $c_j$ depends on whether $j$ lies inside the needle (where the value vector encodes the semantic concept that the head must transform into the answer direction) or outside (where the value encodes unrelated context). \Cref{eq:position-independent} breaks, and \cref{prop:reduction} no longer holds: \methodnameabbrev sees this position dependence through the per-position $c_j$ in \cref{eq:phi-cj}, while attention-based scoring observes only $\alpha_j$ and is invariant to $c_j$.

\subsection{Wu's token-matching score as a special case}
\label[appendix]{app:rel:wu}

The score of \citet{wu2025retrieval} assigns credit at decode step $t$ to head $(l, h)$ on a literal-NIAH trial when (i) the head's argmax attention position falls within the needle span and (ii) the token at that position matches the generated token. Formally, with $j^{*} = \arg\max_j \alpha^{(l,h)}_{t,j}$ and $x_j$ the input token at position $j$,
\begin{equation}\label{eq:wu-score}
\mathrm{Wu}^{(l,h)}_t \;=\; \mathbf{1}\!\bigl[\, j^{*} \in [s_\tau, e_\tau) \,\bigr] \cdot \mathbf{1}\!\bigl[\, x_{j^{*}} = y_t \,\bigr].
\end{equation}

\begin{proposition}[Wu's score as a hard-thresholded special case]
\label{prop:wu}
Suppose three conditions hold for head $(l, h)$ at step $t$ on a literal-NIAH trial:
\begin{itemize}[nosep]
    \item[\textup{(W1)}] The head is a literal-copy head: $W_O \mathbf{v}_j \approx \mathbf{u}_{x_j}$, the unembedding row of the attended token, so $\mathbf{u}_{y_t}^\top W_O \mathbf{v}_j \propto \mathbf{u}_{y_t}^\top \mathbf{u}_{x_j}$.
    \item[\textup{(W2)}] The unembedding rows are approximately orthogonal across distinct tokens: $\mathbf{u}_{y_t}^\top \mathbf{u}_{x_j} \approx \|\mathbf{u}_{y_t}\|^2 \cdot \mathbf{1}[x_j = y_t]$.
    \item[\textup{(W3)}] Attention is concentrated: $\alpha_{j^{*}} \gg \alpha_j$ for $j \neq j^{*}$, so the spatial-contrast score is dominated by the argmax position.
\end{itemize}
Then $\mathrm{Wu}^{(l,h)}_t = 1 \iff \Phi^{+}_t - \Phi^{-}_t > 0$ to leading order, and the rankings induced by Wu's score and \methodnameabbrev coincide on the trial.
\end{proposition}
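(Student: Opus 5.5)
The plan is to compute $\phi_j$ explicitly under (W1)--(W2) and then use (W3) to isolate the argmax position. First, combining (W1) and (W2) with the factorisation \cref{eq:phi-cj}, we get $c_j \approx \kappa\,\mathbf{1}[x_j = y_t]$ for a head-level proportionality constant $\kappa>0$ (the proportionality in (W1)). Hence
\[
\phi_j \approx \kappa\,\|\mathbf{u}_{y_t}\|^2\,\alpha_j\,\mathbf{1}[x_j = y_t].
\]
So the per-position score is nonzero only at positions carrying the answer token. This is a ``masked'' version of the attention observable, in the spirit of \cref{lemma:sufficient}: the orthogonal part contributes nothing.

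Second, we substitute into \cref{eq:needle-logit}. This gives $\Phi^+_t \approx \kappa\|\mathbf{u}\|^2 \sum_{j\in[s_\tau,e_\tau),\,x_j=y_t}\alpha_j$. Likewise, $\Phi^-_t$ is the length-normalised sum of $\alpha_j$ over off-needle positions with $x_j=y_t$.

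Next we invoke (W3). To leading order, every $\alpha_j$ with $j\neq j^*$ is negligible relative to $\alpha_{j^*}$. Each sum is therefore dominated by the $j^*$ term whenever $j^*$ lies in its index set and $x_{j^*}=y_t$; otherwise the sum is lower order. This yields a case split on Wu's two indicators in \cref{eq:wu-score}.
\begin{itemize}[nosep]
\item If $j^*\in[s_\tau,e_\tau)$ and $x_{j^*}=y_t$, then $\Phi^+_t\approx\kappa\|\mathbf{u}\|^2\alpha_{j^*}>0$. Meanwhile $\Phi^-_t$ is both lower order and further shrunk by the factor $(e_\tau-s_\tau)/(N_t-(e_\tau-s_\tau))<1$, so the contrast is positive.
\item If $x_{j^*}\neq y_t$, both terms are lower order. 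To leading order the contrast is zero, hence not $>0$.
\item If $j^*$ is off-needle with $x_{j^*}=y_t$, the contrast is $\approx -\tfrac{e_\tau-s_\tau}{N_t-(e_\tau-s_\tau)}\kappa\|\mathbf{u}\|^2\alpha_{j^*}<0$.
\end{itemize}
Together these give $\mathrm{Wu}_t=1\iff \Phi^+_t-\Phi^-_t>0$.

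For the ranking claim, both scores are monotone in the same event at leading order. Wu's score aggregates the indicator, and \methodnameabbrev aggregates a positive multiple of it plus non-positive or negligible terms. So on a single trial, heads with $\mathrm{Wu}=1$ are ranked above those with $\mathrm{Wu}=0$ by both methods.

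The main obstacle is making ``to leading order'' honest in the second case, and in the tie structure of the ranking claim. The $N_t-1$ small $\alpha_j$ can sum to non-negligible mass, and under (W2) other needle copies of $y_t$ also contribute. My first attempt is to read (W3) quantitatively, as requiring $\sum_{j\ne j^*}\alpha_j\mathbf{1}[x_j=y_t]=o(\alpha_{j^*})$. That is exactly what is needed, and it holds for a single literal needle occurrence. I would also state the ranking coincidence as agreement on the induced partition into $\{\mathrm{Wu}=1\}$ versus $\{\mathrm{Wu}=0\}$, rather than a full order. Within-class order depends on $\alpha_{j^*}$ and $\kappa$, which Wu's binary score ignores.
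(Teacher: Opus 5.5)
Your proposal is correct and follows the paper's proof: (W1)--(W2) collapse $c_j$ to an indicator of $x_j = y_t$, substituting into the spatial-contrast definition gives needle and off-needle attention sums masked to answer-token positions, and (W3) isolates the argmax so that $\Phi^{+}_t - \Phi^{-}_t \approx \|\mathbf{u}\|^2 \alpha_{j^{*}}\,\mathrm{Wu}^{(l,h)}_t$ to leading order. The paper makes $\Phi^{-}_t$ negligible by appealing to the literal-NIAH structure (the answer token occurs essentially once, inside the needle). Your three-case split and quantitative reading of (W3) instead handle off-needle copies of the answer token directly, and restricting the ranking claim to the $\{\mathrm{Wu}=1\}$ versus $\{\mathrm{Wu}=0\}$ partition is more careful than the paper, which argues only the sign equivalence.
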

\begin{proof}
Under (W1) and (W2), $c_j \approx \mathbf{1}[x_j = y_t]$. The needle contribution becomes $\Phi^{+}_t \approx \|\mathbf{u}\|^2 \sum_{j \in [s_\tau, e_\tau),\, x_j = y_t} \alpha_j$ and the off-needle contribution $\Phi^{-}_t \approx \tfrac{e_\tau - s_\tau}{N_t - (e_\tau - s_\tau)} \|\mathbf{u}\|^2 \sum_{j \notin [s_\tau, e_\tau),\, x_j = y_t} \alpha_j$. On a literal-NIAH trial the answer token $y_t$ appears predominantly at one position inside the needle, so $\Phi^{-}_t$ is negligible. Under (W3), $\Phi^{+}_t \approx \|\mathbf{u}\|^2 \alpha_{j^{*}} \cdot \mathbf{1}[j^{*} \in [s_\tau, e_\tau)] \cdot \mathbf{1}[x_{j^{*}} = y_t] = \|\mathbf{u}\|^2 \alpha_{j^{*}} \cdot \mathrm{Wu}^{(l,h)}_t$, which is positive iff $\mathrm{Wu}^{(l,h)}_t = 1$.
\end{proof}

\Cref{prop:wu} clarifies why Wu's score is a strong baseline on literal NIAH: under (W1)--(W3), it is a hard-thresholded version of \methodnameabbrev. It also clarifies the failure mode on NoLiMa: assumption (W1) collapses, since a non-literal retrieval head transforms the value vector through a learned $W_O \mathbf{v}_j$ that does not factor through $\mathbf{u}_{x_j}$. The Wu indicator drops to near zero (the attended token is not the answer token) while $c_j$ remains large, producing the $0.97 \to 0.03$ gap reported in \cref{sec:additional-details}.

\subsection{HeadKV/CompressKV as weighted attention accumulation}
\label[appendix]{app:rel:headkv}

\citet{fu2025headkv} (HeadKV) and \citet{lin2025compresskv} (CompressKV) score heads by accumulating attention mass on retrieval-relevant positions, optionally weighted. A schematic form, abstracting over implementation details, is
\begin{equation}\label{eq:headkv-score}
\mathrm{HeadKV}^{(l,h)} \;\propto\; \sum_{t \in \mathcal{A}^\tau} \sum_{j \in \mathcal{R}_t} w_j \cdot \alpha^{(l,h)}_{t,j},
\end{equation}
where $\mathcal{R}_t$ is the set of retrieval-relevant source positions at step $t$ and $w_j \geq 0$ are non-negative weights.

\begin{corollary}[HeadKV/CompressKV are non-negative reweightings of attention mass]
\label{cor:headkv}
\Cref{eq:headkv-score} is non-decreasing in each $\alpha^{(l,h)}_{t,j}$ for $j \in \mathcal{R}_t$ and depends on the OV circuit only through the choice of $\mathcal{R}_t$ and $w_j$. In particular, two heads with identical attention pattern $\alpha^{(l,h)}_{t,\cdot}$ but different output projections $W_O^{(l,h)}$ receive the same HeadKV/CompressKV score; \methodnameabbrev distinguishes them via $c_j$.
\end{corollary}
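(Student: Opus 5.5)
The plan is to read both claims of \cref{cor:headkv} off the schematic form \cref{eq:headkv-score}, treating the index set $\mathcal{R}_t$ and the weights $w_j \geq 0$ as fixed data, and then to contrast this with the factorisation \cref{eq:phi-cj} of the \methodnameabbrev score. No new machinery is required beyond \cref{eq:headkv-score}, \cref{eq:phi-cj}, and \cref{lemma:sufficient}.

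\emph{Monotonicity.} Fix $\mathcal{R}_t$ and $w_j$. The score is a finite sum of terms $w_j \alpha^{(l,h)}_{t,j}$, so it is linear in the attention vector. Its partial derivative with respect to $\alpha^{(l,h)}_{t,j}$ is $w_j \geq 0$ when $j \in \mathcal{R}_t$, and $0$ otherwise. Hence the score is non-decreasing in each such coordinate. I will state this coordinatewise, holding the other $\alpha$'s fixed, and note that it does not assert monotonicity along the simplex constraint of \cref{eq:qk-softmax}.

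\emph{OV-invariance.} Inspecting \cref{eq:headkv-score}, the only head-dependent inputs are the attention weights, $\mathcal{R}_t$, and $w_j$. Neither $W_O^{(l,h)}$ nor $\mathbf{v}^{(l,h)}_{t,j}$ appears except possibly through how the designer chose $\mathcal{R}_t$ and $w_j$. So for two heads with the same pattern $\alpha_{t,\cdot}$, and with the same $\mathcal{R}_t$ and $w_j$ (these are task-level, e.g.\ needle positions, not head-level), the scores are equal term by term.

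\emph{Separation by \methodnameabbrev.} Take the same $\alpha_{t,\cdot}$ but two output projections whose answer-aligned magnitudes $c_j = \mathbf{u}^\top W_O \mathbf{v}_j/\|\mathbf{u}\|^2$ differ on needle versus off-needle positions. A concrete witness: let $W_O' = -W_O$, or choose $W_O'$ so that $\mathbf{u}^\top W_O' \mathbf{v}_j = 0$ on the needle. By \cref{eq:phi-cj}, $\Phi^{+}_t - \Phi^{-}_t$ is a $c_j$-weighted contrast of attention. The two heads therefore receive different scores whenever the needle mass is nonzero, while by \cref{lemma:sufficient} only $c_j$ matters.

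The main obstacle is interpretive rather than technical. The phrase ``depends on the OV circuit only through $\mathcal{R}_t$ and $w_j$'' must be pinned down. I would explicitly assume that $\mathcal{R}_t$ and $w_j$ are fixed independently of the head's $W_O$; otherwise the invariance claim is vacuous.
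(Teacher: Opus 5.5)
Your argument matches the paper's: it reads monotonicity directly off the non-negative linear form of \cref{eq:headkv-score} and gets OV-invariance because the score depends only on $\alpha$ once $\mathcal{R}_t, w_j$ are tied to the attention pattern. Your assumption that they are fixed independently of $W_O$ plays the same role, and like the paper you contrast this with \cref{eq:phi-cj}. One small correction to your optional witness: with $W_O' = -W_O$, the two \methodnameabbrev scores differ exactly when the original contrast $\Phi^{+}_t - \Phi^{-}_t$ is nonzero, not merely whenever the needle mass is nonzero.
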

\begin{proof}
The first claim is immediate. The second follows because \cref{eq:headkv-score} is a function of $\alpha$ alone and the $\mathcal{R}_t, w_j$ choices are tied to the attention pattern, while \cref{eq:phi-cj} is a function of both $\alpha$ and $c_j$.
\end{proof}

\Cref{cor:headkv} formalises the central conceptual claim of the paper: any score that observes only the attention pattern, however weighted, is blind to differences in $W_O \mathbf{v}_j$ and therefore cannot distinguish the two heads sketched in \cref{fig:overview}(b).

\subsection{Spatial vs.\ temporal contrast}
\label[appendix]{app:rel:spatial-vs-temporal}

Attention-based methods commonly compute a \emph{temporal} contrast: attention mass accumulated at answer-token decode steps, optionally minus mass at non-answer steps~\citep{wu2025retrieval}. \methodnameabbrev uses a \emph{spatial} contrast (\cref{eq:needle-logit}). The two are not equivalent.

\begin{proposition}[Spatial contrast dominates under decode-step-stationary attention]
\label{prop:spatial-temporal}
Suppose head $(l, h)$ exhibits a constant attention pattern across decode steps, $\alpha^{(l,h)}_{t,j} = \alpha^{(l,h)}_{j}$ for all $t \in \mathcal{A}^\tau \cup \mathcal{A}^{\tau,\mathrm{neg}}$, where $\mathcal{A}^{\tau,\mathrm{neg}}$ is a set of non-answer steps and $|\mathcal{A}^\tau| = |\mathcal{A}^{\tau,\mathrm{neg}}|$. Then the temporal contrast at any single source position is zero, while the spatial contrast is generally non-zero whenever the head allocates more mass to the needle than the length-normalized off-needle expectation.
\end{proposition}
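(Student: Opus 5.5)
The plan is to first pin down what ``temporal contrast at a single source position'' means, and then compute both contrasts directly under the stationarity hypothesis. Following the description of \citet{wu2025retrieval} in \cref{app:rel:spatial-vs-temporal}, I define the per-position temporal contrast as
\[
\Delta^{\mathrm{temp}}_j \;=\; \frac{1}{|\mathcal{A}^\tau|}\sum_{t \in \mathcal{A}^\tau} \alpha^{(l,h)}_{t,j} \;-\; \frac{1}{|\mathcal{A}^{\tau,\mathrm{neg}}|}\sum_{t \in \mathcal{A}^{\tau,\mathrm{neg}}} \alpha^{(l,h)}_{t,j}.
\]
The sum form works equally well, because $|\mathcal{A}^\tau| = |\mathcal{A}^{\tau,\mathrm{neg}}|$.

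\textbf{Temporal contrast vanishes.} I substitute $\alpha^{(l,h)}_{t,j} = \alpha^{(l,h)}_j$ into both sums. Each average then equals $\alpha^{(l,h)}_j$, so $\Delta^{\mathrm{temp}}_j = 0$ for every $j$. Any aggregate of these contrasts, such as a sum over needle positions or a clamp at zero, is therefore also zero. This step is purely mechanical.

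\textbf{Spatial contrast is generally non-zero.} I would use the attention-only spatial contrast $S^{\text{att}}$ of \cref{sec:ov-isolation}, which replaces $\phi$ with $\alpha$ in \cref{eq:needle-logit}. For a single step this gives
\[
M^{+} - \tfrac{e_\tau - s_\tau}{N_t-(e_\tau - s_\tau)}\,M^{-}.
\]
This quantity is also step-independent, so averaging over $\mathcal{A}^\tau$ leaves it unchanged. It is strictly positive exactly when the needle mass exceeds the length-normalized off-needle mass, which is the hypothesis of the statement.

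For the full \methodnameabbrev score I would invoke \cref{prop:reduction}. Under a position-independent write $c_t$, the contrast equals $\|\mathbf{u}\|^2 c_t$ times the quantity above, and so it is non-zero whenever $c_t \neq 0$. \Cref{cor:sign-reduction} gives its sign. Without position independence, \cref{eq:phi-cj} still expresses the contrast as a weighted difference of the $\alpha_j c_j$ terms, and this difference is generically non-zero.

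\textbf{Main obstacle.} The delicate step is making ``generally non-zero'' precise. I would state it as follows: the set of stationary patterns $\alpha$ (and writes $c$) on which the spatial contrast vanishes is the zero set of a non-trivial linear functional. That set is a proper hyperplane in the simplex, so it has measure zero. The temporal contrast, by contrast, vanishes identically.

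There is a second subtlety. Stationarity of $\alpha$ does not imply stationarity of $c_j$, because $\mathbf{u}_{y_t}$ changes with $t$. This is the reason to state the positive result first for the attention-only control and only then lift it to $\phi$ through \cref{prop:reduction}.
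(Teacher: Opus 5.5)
Your proposal follows the paper's own proof. The paper substitutes $\alpha^{(l,h)}_{t,j}=\alpha^{(l,h)}_j$ into the answer-minus-non-answer sum, which collapses to $(|\mathcal{A}^\tau|-|\mathcal{A}^{\tau,\mathrm{neg}}|)\,\alpha^{(l,h)}_j=0$. It then notes that the per-step spatial contrast is the length-normalized attention-mass difference $M^{+}_t - \tfrac{e_\tau-s_\tau}{N_t-(e_\tau-s_\tau)}M^{-}_t$, which is positive under the stated hypothesis.

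Your lift to the full $\phi$-based score via \cref{prop:reduction} and your measure-zero reading of ``generally non-zero'' go beyond what the paper proves, but they are consistent with it. One small slip: $N_t$ grows by one per decode step, so the spatial contrast is not literally step-independent. This does not matter, because positivity at each step is all you need.
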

\begin{proof}
The temporal contrast at position $j$ is $\sum_{t \in \mathcal{A}^\tau} \alpha^{(l,h)}_{t,j} - \sum_{t \in \mathcal{A}^{\tau,\mathrm{neg}}} \alpha^{(l,h)}_{t,j} = (|\mathcal{A}^\tau| - |\mathcal{A}^{\tau,\mathrm{neg}}|) \cdot \alpha^{(l,h)}_j = 0$ by the cardinality assumption. The spatial contrast at any single step $t$ is $M^{+}_t - \tfrac{e_\tau - s_\tau}{N_t - (e_\tau - s_\tau)} M^{-}_t$, which is non-zero iff the per-position attention mass exceeds the off-needle average.
\end{proof}

\Cref{prop:spatial-temporal} captures the second motivation for the spatial contrast in \Cref{sec:method}: a head that persistently reads the needle---e.g., as ``context'' rather than ``answer''---receives no credit under temporal contrast but is correctly identified by spatial contrast when its OV write is answer-aligned at needle positions.

\subsection{Gradient interpretation}
\label[appendix]{app:rel:gradient}

We give the standard derivation of $\phi$ as a leading-order gradient. Let $\mathcal{L}_t = -\log p(y_t \mid \mathrm{context})$ denote the cross-entropy at step $t$, with $p(\cdot \mid \mathrm{context}) = \mathrm{softmax}(\boldsymbol{\ell}_t)$. Under the direct-path approximation $\boldsymbol{\ell}_t \approx W_U \mathbf{x}^{(L)}_t$ (\cref{app:direct-path:identity}), the chain rule gives
\begin{equation}\label{eq:gradient-phi}
\nabla_{\mathbf{o}^{(l,h)}_{t,j}} \mathcal{L}_t \;=\; W_U^\top \nabla_{\boldsymbol{\ell}_t} \mathcal{L}_t \;=\; W_U^\top \bigl( \mathbf{p}_t - \mathbf{e}_{y_t} \bigr) \;=\; -(1 - p(y_t)) \, \mathbf{u}_{y_t} \;+\; \!\sum_{v \neq y_t} \!p(v) \, \mathbf{u}_v,
\end{equation}
where $\mathbf{p}_t$ is the predicted distribution and $\mathbf{e}_{y_t}$ is the one-hot target. The dominant term is $-(1 - p(y_t)) \mathbf{u}_{y_t}$, a positive scalar multiple of $\mathbf{u}_{y_t}$ whenever $p(y_t) < 1$. Therefore the projection $\mathbf{u}_{y_t}^\top \mathbf{o}^{(l,h)}_{t,j} = \phi^{(l,h)}_{t,j}$ captures the leading-order direction of steepest descent of $\mathcal{L}_t$ in the $\mathbf{o}^{(l,h)}_{t,j}$ subspace.

\begin{corollary}[Gradient-faithful sign]
\label{cor:gradient-sign}
Under the direct-path approximation and $p(y_t) < 1$, increasing $\phi^{(l,h)}_{t,j}$ decreases $\mathcal{L}_t$ to first order; ablating a head with large $\phi^{(l,h)}_{t,j}$ should therefore raise the cross-entropy at $y_t$ by an amount proportional to $\phi^{(l,h)}_{t,j}$.
\end{corollary}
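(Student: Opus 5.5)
The plan is a first-order Taylor expansion of $\mathcal{L}_t$ in the single per-position write $\mathbf{o}^{(l,h)}_{t,j}$, with \cref{eq:gradient-phi} supplying the gradient. Under the direct-path approximation, write $\mathbf{o}^{(l,h)}_{t,j} \mapsto \mathbf{o}^{(l,h)}_{t,j} + \boldsymbol{\delta}$. Then $\boldsymbol{\ell}_t$ shifts by $W_U\boldsymbol{\delta}$, and
\[
\Delta\mathcal{L}_t = \nabla_{\mathbf{o}^{(l,h)}_{t,j}} \mathcal{L}_t^{\top}\boldsymbol{\delta} + O(\|\boldsymbol{\delta}\|^2).
\]
Substituting \cref{eq:gradient-phi} gives
\[
\Delta\mathcal{L}_t = -(1-p(y_t))\,\mathbf{u}_{y_t}^\top\boldsymbol{\delta} + \sum_{v\neq y_t} p(v)\,\mathbf{u}_v^\top\boldsymbol{\delta} + O(\|\boldsymbol{\delta}\|^2).
\]

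To make ``increasing $\phi$'' well defined, I will take the perturbation along the answer direction, $\boldsymbol{\delta} = \epsilon\,\mathbf{u}_{y_t}/\|\mathbf{u}_{y_t}\|^2$, so that $\phi^{(l,h)}_{t,j}$ increases by exactly $\epsilon$. The first term is then $-(1-p(y_t))\epsilon$, which is strictly negative when $p(y_t)<1$. The second term is $\epsilon\sum_{v\neq y_t}p(v)\,\mathbf{u}_v^\top\mathbf{u}_{y_t}/\|\mathbf{u}_{y_t}\|^2$.

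The main obstacle is the cross term. Its sign is not controlled in general: near-parallel unembedding rows could flip the conclusion. I would handle it the way \cref{prop:wu} handles the analogous issue, by invoking the approximate orthogonality assumption (W2), $\mathbf{u}_v^\top\mathbf{u}_{y_t}\approx 0$ for $v\neq y_t$. This makes the cross term negligible, so $\Delta\mathcal{L}_t\approx-(1-p(y_t))\epsilon<0$. Equivalently, I can state the corollary up to this term, consistent with \cref{eq:gradient-phi} calling $-(1-p(y_t))\mathbf{u}_{y_t}$ the ``dominant term''. A softer sufficient condition also works: $\sum_{v\neq y_t}p(v)\,\mathbf{u}_v^\top\mathbf{u}_{y_t} < (1-p(y_t))\|\mathbf{u}_{y_t}\|^2$.

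For the ablation claim, I model ablation as removing the write, $\boldsymbol{\delta}=-\mathbf{o}^{(l,h)}_{t,j}$. The same expansion, with (W2) applied through the parallel--orthogonal decomposition \cref{eq:parallel-perp}, gives
\[
\Delta\mathcal{L}_t \approx (1-p(y_t))\,\mathbf{u}_{y_t}^\top\mathbf{o}^{(l,h)}_{t,j} = (1-p(y_t))\,\phi^{(l,h)}_{t,j}.
\]
This is proportional to $\phi^{(l,h)}_{t,j}$, with the same proportionality constant for every head and position at step $t$. By \cref{lemma:sufficient}, only the $c_j$ component of the write matters to leading order.

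The result is genuinely first-order. Large $\phi$ makes the $O(\|\boldsymbol{\delta}\|^2)$ remainder non-negligible. Mean ablation is also not exactly zero ablation, since it replaces the write rather than removing it. Both caveats are inherited from the direct-path approximation discussed in \cref{app:direct-path:misses}.
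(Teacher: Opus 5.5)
Your handling of the first clause is the paper's argument made explicit. The paper derives \cref{eq:gradient-phi}, calls $-(1-p(y_t))\mathbf{u}_{y_t}$ the dominant term, and reads the corollary off. You instead fix the minimal-norm perturbation $\boldsymbol{\delta}=\epsilon\,\mathbf{u}_{y_t}/\|\mathbf{u}_{y_t}\|^2$ and control the cross term by (W2) or by your softer inequality. That part is sound, and more careful than the paper, which needs such a condition but never states it.

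The ablation clause, however, does not follow from what you invoke. Write $\mathbf{o}^{(l,h)}_{t,j}=\alpha_j\bigl(c_j\mathbf{u}_{y_t}+\mathbf{w}^\perp_j\bigr)$ and take $\boldsymbol{\delta}=-\mathbf{o}^{(l,h)}_{t,j}$. The first-order change is then
\[
\Delta\mathcal{L}_t \approx (1-p(y_t))\,\phi^{(l,h)}_{t,j} \;-\; \alpha_j c_j\sum_{v\neq y_t}p(v)\,\mathbf{u}_v^\top\mathbf{u}_{y_t} \;-\; \alpha_j\sum_{v\neq y_t}p(v)\,\mathbf{u}_v^\top\mathbf{w}^\perp_j .
\]
(W2) disposes of the middle term only. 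The last term is untouched, because $\mathbf{w}^\perp_j$ is orthogonal to $\mathbf{u}_{y_t}$, not to the competitor rows $\mathbf{u}_v$. \cref{lemma:sufficient} does not rescue this step. It concerns scores that project onto $\mathbf{u}_{y_t}$, whereas $\nabla\mathcal{L}_t$ has a component along every $\mathbf{u}_v$ with $p(v)>0$. Since $\sum_{v\neq y_t}p(v)=1-p(y_t)$, this term is of the same order as the one you keep, so it is not automatically subleading. A head that writes toward both the answer and a probable distractor can have large $\phi^{(l,h)}_{t,j}$ yet a first-order ablation effect that is small or of the wrong sign.

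To close the gap you need an explicit extra hypothesis. One option is $\bigl|\sum_{v\neq y_t}p(v)\,\mathbf{u}_v^\top\mathbf{o}^{(l,h)}_{t,j}\bigr|\ll(1-p(y_t))\,\bigl|\phi^{(l,h)}_{t,j}\bigr|$, i.e.\ the head does not also write toward likely competitors. Without it, you can only state the ablation clause modulo this competitor term, which is what the paper implicitly does by calling the $\mathbf{u}_{y_t}$ term dominant.

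Two smaller points. First, ablating the head removes $\sum_j\mathbf{o}^{(l,h)}_{t,j}$, so by linearity the first-order effect is proportional to $\sum_j\phi^{(l,h)}_{t,j}$, not to a single $\phi^{(l,h)}_{t,j}$; the extra hypothesis must then hold for the summed write. Second, your concern about the $O(\|\boldsymbol{\delta}\|^2)$ remainder is partly settled by convexity. Under the direct path, $\mathcal{L}_t$ is a log-sum-exp of logits affine in $\mathbf{o}^{(l,h)}_{t,j}$ minus a linear term, hence convex in it. The full first-order expression above is therefore a lower bound on the true loss increase when that write is zero-ablated.
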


The corollary motivates the use of $\phi$ for head selection: heads with large positive $\phi$ are heads whose ablation should hurt the answer logit most, which is exactly the head set we want \methodnameabbrev to identify. Attention-based scoring, by contrast, projects the per-position output onto a uniform direction in $j$ (the all-ones combiner) and is gradient-free with respect to the answer token---which is why it generalizes poorly when the answer-aligned direction varies across attended positions.

\subsection{Summary}
\label[appendix]{app:rel:summary}

\Cref{tab:logit-vs-attention} consolidates the structural differences derived above.

\begin{table}[h]
\centering
\small
\caption{\textbf{Attention-based scoring is a special case of \methodnameabbrev}, recovered when the OV circuit contributes no position-dependent answer-aligned signal (\cref{prop:reduction}).}
\label{tab:logit-vs-attention}
\resizebox{\linewidth}{!}{%
\begin{tabular}{@{}lll@{}}
\toprule
\textbf{Property} & \textbf{Attention-based} & \textbf{Logit contribution (\methodnameabbrev)} \\
\midrule
Per-position observable        & $\alpha^{(l,h)}_{t,j}$ & $\alpha^{(l,h)}_{t,j} \cdot \mathbf{u}_{y_t}^\top W_O^{(l,h)} \mathbf{v}^{(l,h)}_{t,j}$ \\
Includes OV circuit            & no & yes \\
Distinguishes heads with equal $\alpha$ & no (Cor.~\ref{cor:headkv}) & yes \\
Contrast axis                  & temporal (answer vs.\ non-answer steps) & spatial (needle vs.\ off-needle positions) \\
Requires non-answer steps      & yes & no (Prop.~\ref{prop:spatial-temporal}) \\
Score sign                     & non-negative (clamped) & unclamped (negative $\Rightarrow$ off-needle-dominant; Cor.~\ref{cor:sign-reduction}) \\
Reduction                      & --- & equals attention mass when $W_O \mathbf{v}_j$ is position-independent \\
Wu/NIAH score                  & native & special case (Prop.~\ref{prop:wu}, conditions W1--W3) \\
HeadKV/CompressKV              & native & subsumed (Cor.~\ref{cor:headkv}) \\
Gradient interpretation        & projects onto $\mathbf{1}$ in $j$ & projects onto $\mathbf{u}_{y_t}$ in $\mathbb{R}^d$ (Eq.~\ref{eq:gradient-phi}) \\
\bottomrule
\end{tabular}
}
\end{table}

The reductions above predict three empirical patterns:
\begin{itemize}[nosep,leftmargin=*]
    \item On literal NIAH (assumptions W1--W3 hold), Wu/NIAH and \methodnameabbrev should rank heads similarly.
    \item On NoLiMa (assumption W1 fails), the rankings should diverge, with \methodnameabbrev assigning credit to heads whose $W_O \mathbf{v}_j$ is answer-aligned even when the attended token is not the answer token.
    \item Under causal validation, the divergent rankings should manifest as different ablation curves: heads identified only by \methodnameabbrev should be more causally critical on NoLiMa than heads identified only by Wu.
\end{itemize}
The Wu/NIAH-vs-NoLiMa gap reported in \cref{sec:additional-details} (top head: $0.97$ on NIAH, $0.03$ on NoLiMa, no causal effect under ablation) and the ablation-curve separation in \cref{fig:ablation-comparison} are consistent with all three predictions. The bottom-$k$ control in \cref{sec:bottom-k} additionally rules out the trivial alternative that any answer-aligned signal would suffice for causal effect: heads with large $|c_j|$ but off-needle-dominant attention mass leave NoLiMa ROUGE-L near baseline under ablation, exactly as \cref{cor:sign-reduction} predicts.

\section{Future Work}
\label{app:future-work}

Head-aware KV cache compression methods~\citep{zhang2023h2o,li2024snapkv,xiao2024streamingllm,fu2025headkv,lin2025compresskv} allocate per-head budgets from attention-based retrieval scores.
If the heads that matter for non-literal retrieval write rather than attend, then attention-based budgets will under-allocate cache to the heads doing the work and over-allocate it to heads whose attention is bookkeeping.
Substituting \methodnameabbrev as the scoring function is a direct test of this prediction at fixed cache budget, and \cref{app:kv-group-layer} sketches a per-(layer, KV-group) variant for grouped-query attention.
Context-faithful decoding methods~\citep{gema2025decore, ma2026interpretability} contrast a base model with one whose attention-identified retrieval heads are masked; masking \methodnameabbrev heads instead changes which circuit is suppressed and is a candidate for a stronger contrastive signal.


\newpage
\section*{NeurIPS Paper Checklist}

The checklist is designed to encourage best practices for responsible machine learning research, addressing issues of reproducibility, transparency, research ethics, and societal impact. Do not remove the checklist: {\bf The papers not including the checklist will be desk rejected.} The checklist should follow the references and follow the (optional) supplemental material.  The checklist does NOT count towards the page
limit. 

Please read the checklist guidelines carefully for information on how to answer these questions. For each question in the checklist:
\begin{itemize}
    \item You should answer \answerYes{}, \answerNo{}, or \answerNA{}.
    \item \answerNA{} means either that the question is Not Applicable for that particular paper or the relevant information is Not Available.
    \item Please provide a short (1--2 sentence) justification right after your answer (even for \answerNA). 
\end{itemize}

{\bf The checklist answers are an integral part of your paper submission.} They are visible to the reviewers, area chairs, senior area chairs, and ethics reviewers. You will also be asked to include it (after eventual revisions) with the final version of your paper, and its final version will be published with the paper.

The reviewers of your paper will be asked to use the checklist as one of the factors in their evaluation. While \answerYes{} is generally preferable to \answerNo{}, it is perfectly acceptable to answer \answerNo{} provided a proper justification is given (e.g., error bars are not reported because it would be too computationally expensive'' or ``we were unable to find the license for the dataset we used''). In general, answering \answerNo{} or \answerNA{} is not grounds for rejection. While the questions are phrased in a binary way, we acknowledge that the true answer is often more nuanced, so please just use your best judgment and write a justification to elaborate. All supporting evidence can appear either in the main paper or the supplemental material, provided in appendix. If you answer \answerYes{} to a question, in the justification please point to the section(s) where related material for the question can be found.

IMPORTANT, please:
\begin{itemize}
    \item {\bf Delete this instruction block, but keep the section heading ``NeurIPS Paper Checklist"},
    \item  {\bf Keep the checklist subsection headings, questions/answers and guidelines below.}
    \item {\bf Do not modify the questions and only use the provided macros for your answers}.
\end{itemize}


\begin{enumerate}

\item {\bf Claims}
    \item[] Question: Do the main claims made in the abstract and introduction accurately reflect the paper's contributions and scope?
    \item[] Answer: \answerYes{}
    \item[] Justification: The abstract and \cref{sec:introduction} state three scoped claims --- non-literal retrieval head detection via OV-circuit projection, causal validation across six model configurations, and retrieval specificity --- each backed by experiments in \cref{sec:ablation-comparison,sec:bottom-k,sec:specificity}.
    \item[] Guidelines:
    \begin{itemize}
        \item The answer \answerNA{} means that the abstract and introduction do not include the claims made in the paper.
        \item The abstract and/or introduction should clearly state the claims made, including the contributions made in the paper and important assumptions and limitations. A \answerNo{} or \answerNA{} answer to this question will not be perceived well by the reviewers. 
        \item The claims made should match theoretical and experimental results, and reflect how much the results can be expected to generalize to other settings. 
        \item It is fine to include aspirational goals as motivation as long as it is clear that these goals are not attained by the paper. 
    \end{itemize}

\item {\bf Limitations}
    \item[] Question: Does the paper discuss the limitations of the work performed by the authors?
    \item[] Answer: \answerYes{}
    \item[] Justification: \cref{sec:limitations} discusses the off-needle-baseline sensitivity to distractor content, the restriction to attention heads (FFN sublayers not scored), and the coverage of architecture variations.
    \item[] Guidelines:
    \begin{itemize}
        \item The answer \answerNA{} means that the paper has no limitation while the answer \answerNo{} means that the paper has limitations, but those are not discussed in the paper. 
        \item The authors are encouraged to create a separate ``Limitations'' section in their paper.
        \item The paper should point out any strong assumptions and how robust the results are to violations of these assumptions (e.g., independence assumptions, noiseless settings, model well-specification, asymptotic approximations only holding locally). The authors should reflect on how these assumptions might be violated in practice and what the implications would be.
        \item The authors should reflect on the scope of the claims made, e.g., if the approach was only tested on a few datasets or with a few runs. In general, empirical results often depend on implicit assumptions, which should be articulated.
        \item The authors should reflect on the factors that influence the performance of the approach. For example, a facial recognition algorithm may perform poorly when image resolution is low or images are taken in low lighting. Or a speech-to-text system might not be used reliably to provide closed captions for online lectures because it fails to handle technical jargon.
        \item The authors should discuss the computational efficiency of the proposed algorithms and how they scale with dataset size.
        \item If applicable, the authors should discuss possible limitations of their approach to address problems of privacy and fairness.
        \item While the authors might fear that complete honesty about limitations might be used by reviewers as grounds for rejection, a worse outcome might be that reviewers discover limitations that aren't acknowledged in the paper. The authors should use their best judgment and recognize that individual actions in favor of transparency play an important role in developing norms that preserve the integrity of the community. Reviewers will be specifically instructed to not penalize honesty concerning limitations.
    \end{itemize}

\item {\bf Theory assumptions and proofs}
    \item[] Question: For each theoretical result, does the paper provide the full set of assumptions and a complete (and correct) proof?
    \item[] Answer: \answerYes{}
    \item[] Justification: All propositions, lemmas, and corollaries are stated and proved in \cref{app:relationship-attention}, with assumptions made explicit in each statement (e.g., conditions W1--W3 in \cref{prop:wu}); supporting derivations for the direct-path approximation appear in \cref{app:direct-path}.
    \item[] Guidelines:
    \begin{itemize}
        \item The answer \answerNA{} means that the paper does not include theoretical results. 
        \item All the theorems, formulas, and proofs in the paper should be numbered and cross-referenced.
        \item All assumptions should be clearly stated or referenced in the statement of any theorems.
        \item The proofs can either appear in the main paper or the supplemental material, but if they appear in the supplemental material, the authors are encouraged to provide a short proof sketch to provide intuition. 
        \item Inversely, any informal proof provided in the core of the paper should be complemented by formal proofs provided in appendix or supplemental material.
        \item Theorems and Lemmas that the proof relies upon should be properly referenced. 
    \end{itemize}

    \item {\bf Experimental result reproducibility}
    \item[] Question: Does the paper fully disclose all the information needed to reproduce the main experimental results of the paper to the extent that it affects the main claims and/or conclusions of the paper (regardless of whether the code and data are provided or not)?
    \item[] Answer: \answerYes{}
    \item[] Justification: \cref{sec:setup,sec:additional-details} specify the probing protocol, model checkpoints, decoding configuration, ablation procedure, and aggregation; \cref{sec:architecture-adaptations} details the architecture-specific implementation; \cref{app:assets} lists exact model versions and datasets.
    \item[] Guidelines:
    \begin{itemize}
        \item The answer \answerNA{} means that the paper does not include experiments.
        \item If the paper includes experiments, a \answerNo{} answer to this question will not be perceived well by the reviewers: Making the paper reproducible is important, regardless of whether the code and data are provided or not.
        \item If the contribution is a dataset and\slash or model, the authors should describe the steps taken to make their results reproducible or verifiable. 
        \item Depending on the contribution, reproducibility can be accomplished in various ways. For example, if the contribution is a novel architecture, describing the architecture fully might suffice, or if the contribution is a specific model and empirical evaluation, it may be necessary to either make it possible for others to replicate the model with the same dataset, or provide access to the model. In general. releasing code and data is often one good way to accomplish this, but reproducibility can also be provided via detailed instructions for how to replicate the results, access to a hosted model (e.g., in the case of a large language model), releasing of a model checkpoint, or other means that are appropriate to the research performed.
        \item While NeurIPS does not require releasing code, the conference does require all submissions to provide some reasonable avenue for reproducibility, which may depend on the nature of the contribution. For example
        \begin{enumerate}
            \item If the contribution is primarily a new algorithm, the paper should make it clear how to reproduce that algorithm.
            \item If the contribution is primarily a new model architecture, the paper should describe the architecture clearly and fully.
            \item If the contribution is a new model (e.g., a large language model), then there should either be a way to access this model for reproducing the results or a way to reproduce the model (e.g., with an open-source dataset or instructions for how to construct the dataset).
            \item We recognize that reproducibility may be tricky in some cases, in which case authors are welcome to describe the particular way they provide for reproducibility. In the case of closed-source models, it may be that access to the model is limited in some way (e.g., to registered users), but it should be possible for other researchers to have some path to reproducing or verifying the results.
        \end{enumerate}
    \end{itemize}

\item {\bf Open access to data and code}
    \item[] Question: Does the paper provide open access to the data and code, with sufficient instructions to faithfully reproduce the main experimental results, as described in supplemental material?
    \item[] Answer: \answerYes{}
    \item[] Justification: \cref{app:code-data} describes the released repository, including detection scripts, the vLLM-based ablation driver, evaluation scripts, pre-computed head-score files, and a pinned environment specification.
    \item[] Guidelines:
    \begin{itemize}
        \item The answer \answerNA{} means that paper does not include experiments requiring code.
        \item Please see the NeurIPS code and data submission guidelines (\url{https://neurips.cc/public/guides/CodeSubmissionPolicy}) for more details.
        \item While we encourage the release of code and data, we understand that this might not be possible, so \answerNo{} is an acceptable answer. Papers cannot be rejected simply for not including code, unless this is central to the contribution (e.g., for a new open-source benchmark).
        \item The instructions should contain the exact command and environment needed to run to reproduce the results. See the NeurIPS code and data submission guidelines (\url{https://neurips.cc/public/guides/CodeSubmissionPolicy}) for more details.
        \item The authors should provide instructions on data access and preparation, including how to access the raw data, preprocessed data, intermediate data, and generated data, etc.
        \item The authors should provide scripts to reproduce all experimental results for the new proposed method and baselines. If only a subset of experiments are reproducible, they should state which ones are omitted from the script and why.
        \item At submission time, to preserve anonymity, the authors should release anonymized versions (if applicable).
        \item Providing as much information as possible in supplemental material (appended to the paper) is recommended, but including URLs to data and code is permitted.
    \end{itemize}

\item {\bf Experimental setting/details}
    \item[] Question: Does the paper specify all the training and test details (e.g., data splits, hyperparameters, how they were chosen, type of optimizer) necessary to understand the results?
    \item[] Answer: \answerYes{}
    \item[] Justification: All hyperparameters (probing protocol, decoding, aggregation, ablation depths) are consolidated in \cref{tab:hyperparameters} of \cref{sec:additional-details}; trial filtering and held-out splits are specified in \cref{sec:setup}.
    \item[] Guidelines:
    \begin{itemize}
        \item The answer \answerNA{} means that the paper does not include experiments.
        \item The experimental setting should be presented in the core of the paper to a level of detail that is necessary to appreciate the results and make sense of them.
        \item The full details can be provided either with the code, in appendix, or as supplemental material.
    \end{itemize}

\item {\bf Experiment statistical significance}
    \item[] Question: Does the paper report error bars suitably and correctly defined or other appropriate information about the statistical significance of the experiments?
    \item[] Answer: \answerYes{}
    \item[] Justification: 95\% non-parametric bootstrap confidence intervals ($B{=}1{,}000$ resamples over passing trials) are reported for per-head scores $S_{l,h}$, as specified in \cref{sec:additional-details}; ablation curves are evaluated on a held-out set of 800 NoLiMa trials per condition.
    \item[] Guidelines:
    \begin{itemize}
        \item The answer \answerNA{} means that the paper does not include experiments.
        \item The authors should answer \answerYes{} if the results are accompanied by error bars, confidence intervals, or statistical significance tests, at least for the experiments that support the main claims of the paper.
        \item The factors of variability that the error bars are capturing should be clearly stated (for example, train/test split, initialization, random drawing of some parameter, or overall run with given experimental conditions).
        \item The method for calculating the error bars should be explained (closed form formula, call to a library function, bootstrap, etc.)
        \item The assumptions made should be given (e.g., Normally distributed errors).
        \item It should be clear whether the error bar is the standard deviation or the standard error of the mean.
        \item It is OK to report 1-sigma error bars, but one should state it. The authors should preferably report a 2-sigma error bar than state that they have a 96\% CI, if the hypothesis of Normality of errors is not verified.
        \item For asymmetric distributions, the authors should be careful not to show in tables or figures symmetric error bars that would yield results that are out of range (e.g., negative error rates).
        \item If error bars are reported in tables or plots, the authors should explain in the text how they were calculated and reference the corresponding figures or tables in the text.
    \end{itemize}

\item {\bf Experiments compute resources}
    \item[] Question: For each experiment, does the paper provide sufficient information on the computer resources (type of compute workers, memory, time of execution) needed to reproduce the experiments?
    \item[] Answer: \answerYes{}
    \item[] Justification: \cref{app:compute} reports GPU type, tensor-parallel configuration, and approximate wall-clock for detection, calibration, and ablation, plus a total-project estimate that includes preliminary experiments not in the paper.
    \item[] Guidelines:
    \begin{itemize}
        \item The answer \answerNA{} means that the paper does not include experiments.
        \item The paper should indicate the type of compute workers CPU or GPU, internal cluster, or cloud provider, including relevant memory and storage.
        \item The paper should provide the amount of compute required for each of the individual experimental runs as well as estimate the total compute. 
        \item The paper should disclose whether the full research project required more compute than the experiments reported in the paper (e.g., preliminary or failed experiments that didn't make it into the paper). 
    \end{itemize}
    
\item {\bf Code of ethics}
    \item[] Question: Does the research conducted in the paper conform, in every respect, with the NeurIPS Code of Ethics \url{https://neurips.cc/public/EthicsGuidelines}?
    \item[] Answer: \answerYes{}
    \item[] Justification: The work uses publicly available models and datasets within their licenses (\cref{app:assets}), involves no human subjects, and the broader-impact considerations are discussed in \cref{app:broader-impacts}.
    \item[] Guidelines:
    \begin{itemize}
        \item The answer \answerNA{} means that the authors have not reviewed the NeurIPS Code of Ethics.
        \item If the authors answer \answerNo, they should explain the special circumstances that require a deviation from the Code of Ethics.
        \item The authors should make sure to preserve anonymity (e.g., if there is a special consideration due to laws or regulations in their jurisdiction).
    \end{itemize}

\item {\bf Broader impacts}
    \item[] Question: Does the paper discuss both potential positive societal impacts and negative societal impacts of the work performed?
    \item[] Answer: \answerYes{}
    \item[] Justification: \cref{app:broader-impacts} discusses positive impacts (KV cache compression, hallucination mitigation, mechanistic interpretability), potential misuse (denial-of-capability attacks via head suppression) with mitigations, and fairness considerations regarding monolingual evaluation.
    \item[] Guidelines:
    \begin{itemize}
        \item The answer \answerNA{} means that there is no societal impact of the work performed.
        \item If the authors answer \answerNA{} or \answerNo, they should explain why their work has no societal impact or why the paper does not address societal impact.
        \item Examples of negative societal impacts include potential malicious or unintended uses (e.g., disinformation, generating fake profiles, surveillance), fairness considerations (e.g., deployment of technologies that could make decisions that unfairly impact specific groups), privacy considerations, and security considerations.
        \item The conference expects that many papers will be foundational research and not tied to particular applications, let alone deployments. However, if there is a direct path to any negative applications, the authors should point it out. For example, it is legitimate to point out that an improvement in the quality of generative models could be used to generate Deepfakes for disinformation. On the other hand, it is not needed to point out that a generic algorithm for optimizing neural networks could enable people to train models that generate Deepfakes faster.
        \item The authors should consider possible harms that could arise when the technology is being used as intended and functioning correctly, harms that could arise when the technology is being used as intended but gives incorrect results, and harms following from (intentional or unintentional) misuse of the technology.
        \item If there are negative societal impacts, the authors could also discuss possible mitigation strategies (e.g., gated release of models, providing defenses in addition to attacks, mechanisms for monitoring misuse, mechanisms to monitor how a system learns from feedback over time, improving the efficiency and accessibility of ML).
    \end{itemize}
    
\item {\bf Safeguards}
    \item[] Question: Does the paper describe safeguards that have been put in place for responsible release of data or models that have a high risk for misuse (e.g., pre-trained language models, image generators, or scraped datasets)?
    \item[] Answer: \answerNA{}
    \item[] Justification: We release no new pre-trained models or scraped datasets; only diagnostic code and small author-constructed control sets (city--country, arithmetic) are released, none of which carry high misuse risk (\cref{app:broader-impacts,app:code-data}).
    \item[] Guidelines:
    \begin{itemize}
        \item The answer \answerNA{} means that the paper poses no such risks.
        \item Released models that have a high risk for misuse or dual-use should be released with necessary safeguards to allow for controlled use of the model, for example by requiring that users adhere to usage guidelines or restrictions to access the model or implementing safety filters. 
        \item Datasets that have been scraped from the Internet could pose safety risks. The authors should describe how they avoided releasing unsafe images.
        \item We recognize that providing effective safeguards is challenging, and many papers do not require this, but we encourage authors to take this into account and make a best faith effort.
    \end{itemize}

\item {\bf Licenses for existing assets}
    \item[] Question: Are the creators or original owners of assets (e.g., code, data, models), used in the paper, properly credited and are the license and terms of use explicitly mentioned and properly respected?
    \item[] Answer: \answerYes{}
    \item[] Justification: \cref{tab:assets} in \cref{app:assets} lists every model, dataset, and software library used, with version, identifier, and license; original papers are cited and terms of use are respected.
    \item[] Guidelines:
    \begin{itemize}
        \item The answer \answerNA{} means that the paper does not use existing assets.
        \item The authors should cite the original paper that produced the code package or dataset.
        \item The authors should state which version of the asset is used and, if possible, include a URL.
        \item The name of the license (e.g., CC-BY 4.0) should be included for each asset.
        \item For scraped data from a particular source (e.g., website), the copyright and terms of service of that source should be provided.
        \item If assets are released, the license, copyright information, and terms of use in the package should be provided. For popular datasets, \url{paperswithcode.com/datasets} has curated licenses for some datasets. Their licensing guide can help determine the license of a dataset.
        \item For existing datasets that are re-packaged, both the original license and the license of the derived asset (if it has changed) should be provided.
        \item If this information is not available online, the authors are encouraged to reach out to the asset's creators.
    \end{itemize}

\item {\bf New assets}
    \item[] Question: Are new assets introduced in the paper well documented and is the documentation provided alongside the assets?
    \item[] Answer: \answerYes{}
    \item[] Justification: The released code, pre-computed head-score files, and author-constructed control sets are documented in \cref{app:code-data}, with a README and pinned environment included in the repository.
    \item[] Guidelines:
    \begin{itemize}
        \item The answer \answerNA{} means that the paper does not release new assets.
        \item Researchers should communicate the details of the dataset\slash code\slash model as part of their submissions via structured templates. This includes details about training, license, limitations, etc. 
        \item The paper should discuss whether and how consent was obtained from people whose asset is used.
        \item At submission time, remember to anonymize your assets (if applicable). You can either create an anonymized URL or include an anonymized zip file.
    \end{itemize}

\item {\bf Crowdsourcing and research with human subjects}
    \item[] Question: For crowdsourcing experiments and research with human subjects, does the paper include the full text of instructions given to participants and screenshots, if applicable, as well as details about compensation (if any)?
    \item[] Answer: \answerNA{}
    \item[] Justification: The work involves no crowdsourcing or human subjects.
    \item[] Guidelines:
    \begin{itemize}
        \item The answer \answerNA{} means that the paper does not involve crowdsourcing nor research with human subjects.
        \item Including this information in the supplemental material is fine, but if the main contribution of the paper involves human subjects, then as much detail as possible should be included in the main paper. 
        \item According to the NeurIPS Code of Ethics, workers involved in data collection, curation, or other labor should be paid at least the minimum wage in the country of the data collector. 
    \end{itemize}

\item {\bf Institutional review board (IRB) approvals or equivalent for research with human subjects}
    \item[] Question: Does the paper describe potential risks incurred by study participants, whether such risks were disclosed to the subjects, and whether Institutional Review Board (IRB) approvals (or an equivalent approval/review based on the requirements of your country or institution) were obtained?
    \item[] Answer: \answerNA{}
    \item[] Justification: The work involves no human subjects, so no IRB review was required.
    \item[] Guidelines:
    \begin{itemize}
        \item The answer \answerNA{} means that the paper does not involve crowdsourcing nor research with human subjects.
        \item Depending on the country in which research is conducted, IRB approval (or equivalent) may be required for any human subjects research. If you obtained IRB approval, you should clearly state this in the paper. 
        \item We recognize that the procedures for this may vary significantly between institutions and locations, and we expect authors to adhere to the NeurIPS Code of Ethics and the guidelines for their institution. 
        \item For initial submissions, do not include any information that would break anonymity (if applicable), such as the institution conducting the review.
    \end{itemize}

\item {\bf Declaration of LLM usage}
    \item[] Question: Does the paper describe the usage of LLMs if it is an important, original, or non-standard component of the core methods in this research? Note that if the LLM is used only for writing, editing, or formatting purposes and does \emph{not} impact the core methodology, scientific rigor, or originality of the research, declaration is not required.
    \item[] Answer: \answerNA{}
    \item[] Justification: LLMs are the object of study but are not part of the core methodology in any non-standard way; LLM use for writing assistance is declared in \cref{app:llm-usage}.
    \item[] Guidelines:
    \begin{itemize}
        \item The answer \answerNA{} means that the core method development in this research does not involve LLMs as any important, original, or non-standard components.
        \item Please refer to our LLM policy in the NeurIPS handbook for what should or should not be described.
    \end{itemize}

\end{enumerate}

\end{document}